\crefname{hypothesis}{Hypothesis}{Hypotheses}
\title{Neural Network Approximation: A View from Polytope Decomposition\thanks{Submitted to the editors DATE.
\funding{This work is supported by the Direct Grant for Research from the Chinese University of Hong Kong and ITS/173/22FP from the Innovation and Technology Fund of Hong Kong}}}
\author{
Ze-Yu Li\thanks{Department of Mathematics, The Chinese University of Hong Kong, Hong Kong 
  (\email{zyli@math.cuhk.edu.hk}).} 
  \and Shijun Zhang\thanks{Department of Applied Mathematics, The Hong Kong Polytechnic University, Hong Kong (\email{shijun.zhang@polyu.edu.hk}).}
\and Tieyong Zeng\textsuperscript{\Letter}  \thanks{Department of Mathematics, The Chinese University of Hong Kong, Hong Kong 
  (\email{zeng@math.cuhk.edu.hk}).}
\and
  Feng-Lei Fan\thanks{Department of Mathematics, The Chinese University of Hong Kong, Hong Kong 
  (\email{hitfanfenglei@gmail.com}).}  
  }
\DeclareFontFamily{U}{mathx}{}
\DeclareFontShape{U}{mathx}{m}{n}{<-> mathx10}{}
\DeclareSymbolFont{mathx}{U}{mathx}{m}{n}
\DeclareMathAccent{\widehat}{0}{mathx}{"70}
\DeclareMathAccent{\widecheck}{0}{mathx}{"71}
\let\tilde\widetilde
\let\hat\widehat
\let\epsilon\varepsilon
\let\subset\subseteq
\let\cdots\customcdots
\let\myforall\forall
\def\forall{{\myforall\, }}
\let\myexists\exists
\def\exists{{\myexists\, }}
\begin{document}
\nolinenumbers
\makeatletter
\renewcommand{\footercopyright}{}
\makeatother

\maketitle

\begin{abstract}
 {
Universal approximation theory offers a foundational framework to verify neural network expressiveness, enabling principled utilization in real-world applications.
However, most existing theoretical constructions are established by uniformly dividing the input space into tiny hypercubes without considering the local regularity of the target function. In this work, we investigate the universal approximation capabilities of ReLU networks from a view of polytope decomposition, which offers a more realistic and task-oriented approach compared to current methods. To achieve this, we develop an explicit kernel polynomial method to derive an universal approximation of continuous functions, which is characterized not only by the refined Totik-Ditzian-type modulus of continuity, but also by polytopical domain decomposition. Then, a ReLU network is constructed to approximate the kernel polynomial in each subdomain separately. Furthermore, we find that polytope decomposition makes our approximation more efficient and flexible than existing methods in many cases, especially near singular points of the objective function. Lastly, we extend our approach to analytic functions to reach a higher approximation rate.}

\end{abstract}





\begin{keywords}
Polytope Decomposition, 
Deep Neural Network,  
Approximation Theory,
Kernel Polynomial
\end{keywords}


\begin{MSCcodes}
68Q25, 68Q32, 68T99
\end{MSCcodes}

\section{Introduction}

\begin{figure}[htb]
    \centering    \includegraphics[width=0.6\linewidth]{ 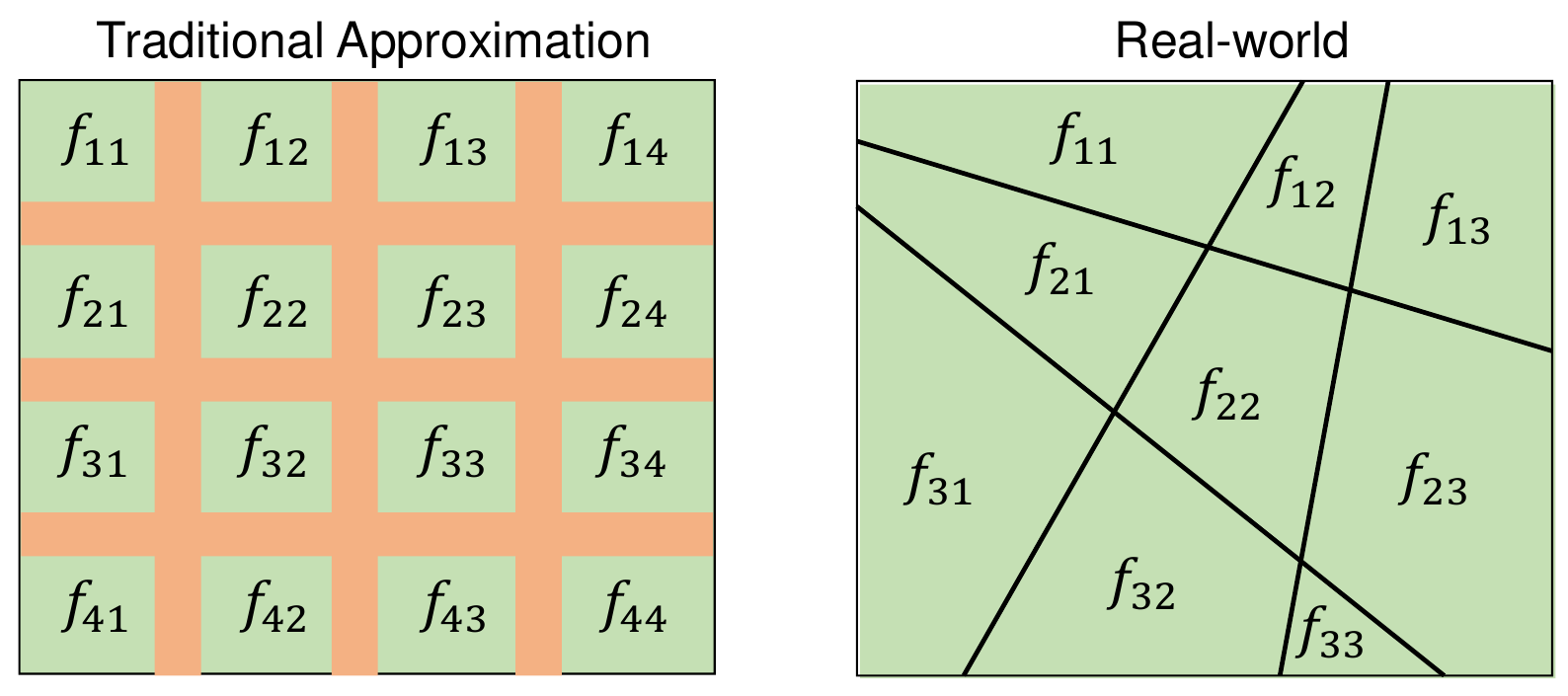}
    \caption{While universal approximation theories partition the space into hypercubes, a ReLU network actually divides the space into polytopes.}
    \label{fig:contrast}
    \vspace{-0.3cm}
\end{figure}

Over the past few years, deep networks have driven remarkable progress in a wide array of fields such as computer vision, natural language processing, scientific computing, and robotics. The fundamental power of deep networks stems from their ability to represent highly complex information by composing multiple layers of affine transformations interleaved with non-linear activation functions. This layered structure allows them to capture intricate patterns and dependencies, making them incredibly versatile for handling diverse, high-dimensional data.
A substantial body of research has explored the expressive capacity of deep neural networks, aiming to understand their potential to approximate complex functions. However, the theoretical frameworks underpinning this expressive power are often regarded as idealistic; the constructions they propose, while mathematically sound, are usually not practically learnable by neural networks. Two notable issues leading to this gap are the way networks partition the input space and the task-agnostic nature of these theoretical constructions, which lack specificity to particular learning contexts. 
\begin{itemize}
    \item 
 As Figure~\ref{fig:contrast} shows, the existing approximation theories of ReLU networks usually decompose the input space into hypercubes, and the approximation error is minimized by gradually adopting hypercubes of finer resolutions \cite{yarotsky2017error,shen2019deep,shen2022optimal}. Although some literature  extended the irregular domain to a larger hypercube \cite{shen2019deep} or mapped a low-dimensional manifold to a hypercube \cite{Chen2019NonparametricRO}, they were still essentially based on hypercube decomposition.  However, a ReLU network in practice partitions the space into polytopes, and the approximation error is also reduced by adjusting the shapes of polytopes. 
 \item In the majority of the existing approximation theories, the whole domain is uniformly divided, without considering the local variations in continuity and smoothness properties \cite{Herrmann2022ConstructiveDR,lu2021deep,liang2016deep}. In reality, this is not what a network does for functions lacking continuity or smoothness on some part of the domain. If the regularity condition (smoothness, continuity, and analytical properties) varies on each part of the domain, a network usually exhibits adaptivity on different parts.
\end{itemize}

These limitations highlight the need for more practical and nuanced theories that can fill the gap between theoretical expressive power and real-world learnability in deep networks. Here we develop a   {novel} approximation theory based on polytope decomposition. Specifically, our theoretical construction divides the input domain into a union of convex polytopes, and can adapt to the local regularity of the target function because polytope decomposition is more flexible. The key difficulty our construction addresses is that the shape of a polytope is fixed once the partition is given. Thus, we need to use a small number of subsets (parallelepipeds or simplexes) to cover the polytope, which shall fulfill two properties: i) no region outside the polytope is covered to avoid interference between neighboring polytopes; ii) almost all region (except for a   trifling region arbitrarily small) inside the polytope is covered to ensure a complete approximation to the target function. Then, we need to approximate the target function on such subset coverings. Although polynomial approximation over convex polytopes has been investigated \cite{Totik2020,Totik2014PolynomialAO,ditzian2012moduli} in polynomial approximation theory, they are not suitable for a direct use in network approximation. On the one hand, the constructions are usually not completely explicit, thus causing difficulty in bound estimation. On the other hand, some operations are easy to perform by neural networks but hard by polynomials. Thus, the efficiency will go down if we use such results directly. Our innovation is to use the kernel polynomial method to disentangle the coefficients and the degree of a polynomial, thereby controlling the magnitude of coefficients and leading to an explicit error estimation. Next, when one can approximate a function over a polytope, it facilitates a much more flexible tool such that one can partition the space more precisely according to the regularity of the targeted function, which means the task adaptivity is automatically gained in our construction.

\subsection{Main results}

 {
\begin{theorem}\label{thm:MainThmContinuous}
	Let $K\subseteq\mathbb{R}^d$ be a polytope, and $f: K\to\mathbb{R}$ be continuous. Then for any $N\in \mathbb{}^{+},\alpha>0$, there exists a  function $\tilde{f}:\mathbb{R}^d\to \mathbb{R}$ vanishing outside $K$ that can be implemented by a ReLU network of width $\mathcal{O}\left(\max\left
    \{N^3,N^{d-1}\log{N}\right\}\right)$ and depth $\mathcal{O}(\log N)$ such that
	\begin{equation}
		\left\|\tilde{f}-f\right\|_{L^{\infty}(K^{\prime})}\leq \mathcal{O}\left(\tilde{\omega}_K\left(f,N^{-1}\right)+N^{-\alpha}\right),
	\end{equation}
where $K^{\prime}\subseteq K$ with $\mathfrak{m}(K\setminus K^{\prime})$ is arbitrary small, and $\mathfrak{m}$ is the Lebesgue measure.
\end{theorem}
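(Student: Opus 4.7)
The plan is to prove the theorem in four coordinated stages: (i) decompose the polytope $K$ into affine copies of a reference simplex or parallelepiped that cover all but a thin boundary strip; (ii) construct an explicit kernel-polynomial approximation of $f$ on each piece; (iii) realize each polynomial by a ReLU subnetwork within the claimed width and depth budget; and (iv) glue the pieces using ReLU ``tent'' gates so that the combined network vanishes outside $K$.

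First I would triangulate $K$ into finitely many $d$-simplexes (or subdivide into parallelepipeds) and, for each piece $T_j$, introduce a slightly shrunken inner copy $T_j^\circ\subset T_j$ together with an affine map $\Phi_j$ from the reference simplex/cube onto $T_j$. Taking $K':=\bigcup_j T_j^\circ$, the Lebesgue measure $\mathfrak{m}(K\setminus K')$ can be made smaller than any prescribed tolerance by shrinking each $T_j^\circ$ sufficiently. Pulling $f$ back to the reference piece, I would then invoke the explicit kernel-polynomial construction sketched in the introduction to obtain a polynomial $P_j$ of total degree $N$ with explicitly bounded coefficients satisfying $\|P_j-f\circ\Phi_j\|_\infty\le C\,\tilde\omega_K(f,N^{-1})$. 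Explicit control on the coefficient magnitudes is the reason kernel polynomials are used here in place of the classical Totik--Ditzian polynomials: it is what lets the subsequent ReLU bounds go through without losing the modulus-of-continuity rate.

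Next, each monomial appearing in $P_j$ is realized by the standard Yarotsky squaring network, which computes products to accuracy $N^{-\alpha}$ with depth $\mathcal{O}(\log N)$ and width $\mathcal{O}(N)$. Arranging the monomials of a degree-$N$ polynomial in $d$ variables in a tensor-product or Horner-type cascade, multiplexed across the pieces, yields the overall width $\mathcal{O}(\max\{N^3,N^{d-1}\log N\})$ and depth $\mathcal{O}(\log N)$. To localize the output, for each $T_j$ I would construct a continuous piecewise-linear ReLU ``tent'' $\chi_j$ equal to $1$ on $T_j^\circ$, vanishing outside $T_j$, with its transition band absorbed into the thin strip $K\setminus K'$. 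The final network is $\tilde f:=\sum_j \chi_j\cdot\tilde P_j$, with each product again implemented by the squaring gadget, and a triangle-inequality argument over the pieces then yields the claimed uniform error on $K'$.

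The main obstacle is coupling the two distinct error sources under a single architectural budget. The kernel-polynomial layer contributes the $\tilde\omega_K(f,N^{-1})$ term, while the ReLU implementations of squaring and of the tent--polynomial product contribute the $N^{-\alpha}$ term, and both must be absorbed without inflating the width or depth beyond the asserted orders. A particularly delicate point is that the approximate product $\chi_j\cdot\tilde P_j$ need not vanish exactly outside $T_j$ even when $\chi_j$ does; this potential leakage has to be damped by hard ReLU thresholding and by choosing the tent supports so that neighboring leakages either cancel or fall inside $K\setminus K'$, all while preserving depth $\mathcal{O}(\log N)$ and the stated width bound.
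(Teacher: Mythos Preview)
Your high-level strategy matches the paper's---decompose, approximate by kernel polynomials on each piece, realize those by ReLU subnetworks, then glue with ReLU gates---but there is one genuine gap and one structural difference worth flagging.

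\textbf{The gap: edge directions and the Ditzian--Totik modulus.} You assert that the kernel-polynomial approximation on each piece $T_j$ satisfies $\|P_j - f\circ\Phi_j\|_\infty \le C\,\tilde\omega_K(f,N^{-1})$. But the kernel construction on the reference cube (or simplex) only gives you an error controlled by $\tilde\omega_{T_j}(f,N^{-1})$, the modulus taken along the edge directions of $T_j$ with the $T_j$-weighted step. This is \emph{not} bounded by $\tilde\omega_K(f,N^{-1})$ in general: the Ditzian--Totik modulus depends on the set $\mathcal{E}^K$ of edge directions, and as the paper's Example~1.4 shows, moduli along different directions can differ by a logarithmic factor or more. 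An arbitrary triangulation of $K$ will introduce simplex edges in directions not present in $\mathcal{E}^K$, so the per-piece error cannot be wrapped back into $\tilde\omega_K$. The paper's Lemma~2.4 is designed precisely to avoid this: it covers $K$ by parallelepipeds whose edges are parallel to edges of $K$, so that $\mathcal{E}^{K_j}\subseteq\mathcal{E}^K$ and $\tilde\omega_{K_j}(f,\cdot)\le\tilde\omega_K(f,\cdot)$ follows.

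\textbf{The structural consequence: overlapping cover forces a different gluing.} Because aligned parallelepipeds form an overlapping \emph{cover} rather than a disjoint triangulation, your tent sum $\sum_j\chi_j\tilde P_j$ would over-count on the overlaps. The paper therefore replaces the partition-of-unity idea with Totik's recursive blend $\bar f_j=\phi_j\hat f_j+(1-\phi_j)\bar f_{j-1}$ (Lemma~5.1, Theorem~2.3), where $\phi_j$ is a fast-decreasing ReLU function equal to $1$ on a shrunken $K_j^\lambda$ and $0$ outside $K_j$. This sequential scheme handles arbitrary overlaps and, combined with the fact that each $\hat f_j$ is already built (via Theorem~2.2) to vanish outside $K_j$, automatically forces $\operatorname{supp}(\tilde f)\subseteq K$ without the leakage-damping step you anticipated. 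Your tent construction would work if you had a genuine triangulation, but then you lose the edge alignment needed for the $\tilde\omega_K$ bound; the paper's approach resolves both issues simultaneously.
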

}
 {
Note that the main part of the approximation error is given by the Totik-Ditzian modulus of continuity $\tilde{\omega}(f,\cdot)$ given by
\begin{equation*}
\label{eq_TDmodulus}
    \tilde{\omega}_{K}(f, t)=\sup _{\substack{\boldsymbol{x} \in K, \boldsymbol{e} \in \mathcal{E}^{K} \\ 0<h \leq t}}\left| f\left(\boldsymbol{x}+\frac{h \tilde{d}_{K}(\boldsymbol{e}, \boldsymbol{x})}{2}\boldsymbol{e}\right)-f\left(\boldsymbol{x}-\frac{h \tilde{d}_{K}(\boldsymbol{e}, \boldsymbol{x})}{2}\boldsymbol{e}\right)\right|,
\end{equation*}
where the normalized distance $\tilde{d}_K(\boldsymbol{e},\boldsymbol{x})$ is defined as the geometric mean of the distances from the two endpoints of $K$ to $\boldsymbol{x}$, intersected by a line passing through $\boldsymbol{x}$ and parallel to $e$, \textit{i.e.}, 
\begin{equation*}
    \tilde{d}_K(\boldsymbol{e},\boldsymbol{x})=\sqrt{d(\boldsymbol{x},\boldsymbol{a}_{\boldsymbol{e},\boldsymbol{x}}) \cdot d(\boldsymbol{x},\boldsymbol{b}_{\boldsymbol{e},\boldsymbol{x}})},
\end{equation*}
 with $\boldsymbol{a}_{\boldsymbol{e},\boldsymbol{x}}$ and $\boldsymbol{b}_{\boldsymbol{e},\boldsymbol{x}}$ being the endpoints of the intersection segment between $K$ and the line through $\boldsymbol{x}$ parallel to $\boldsymbol{e}$.
}

 {
Totik-Ditzian modulus of continuity $\tilde{\omega}_K(f,\cdot)$ is essentially different from the ordinary modulus of continuity (see \cref{Figure_TotikModulus}), which is given by
\begin{equation*}
    \omega_K(f,  t)=\sup
  \Big\{ | f(\boldsymbol{x})-f(\boldsymbol{y})|:  \boldsymbol{x},\boldsymbol{y}\in K, \; \left\|\boldsymbol{x}-\boldsymbol{y}\right\|_2\leq t\Big\}.
\end{equation*}
}
 {
The former weighs the difference step according to the distance from the boundary when calculating the modulus of continuity, which is widely used in modern approximation theory. In comparison, the latter fails to characterize the approximation behavior of continuous functions near the boundary (see \cite{nikolskii1946,ditzian2007polynomial}). 
}
\begin{figure}[htb!]
\center{\includegraphics[width=\linewidth] {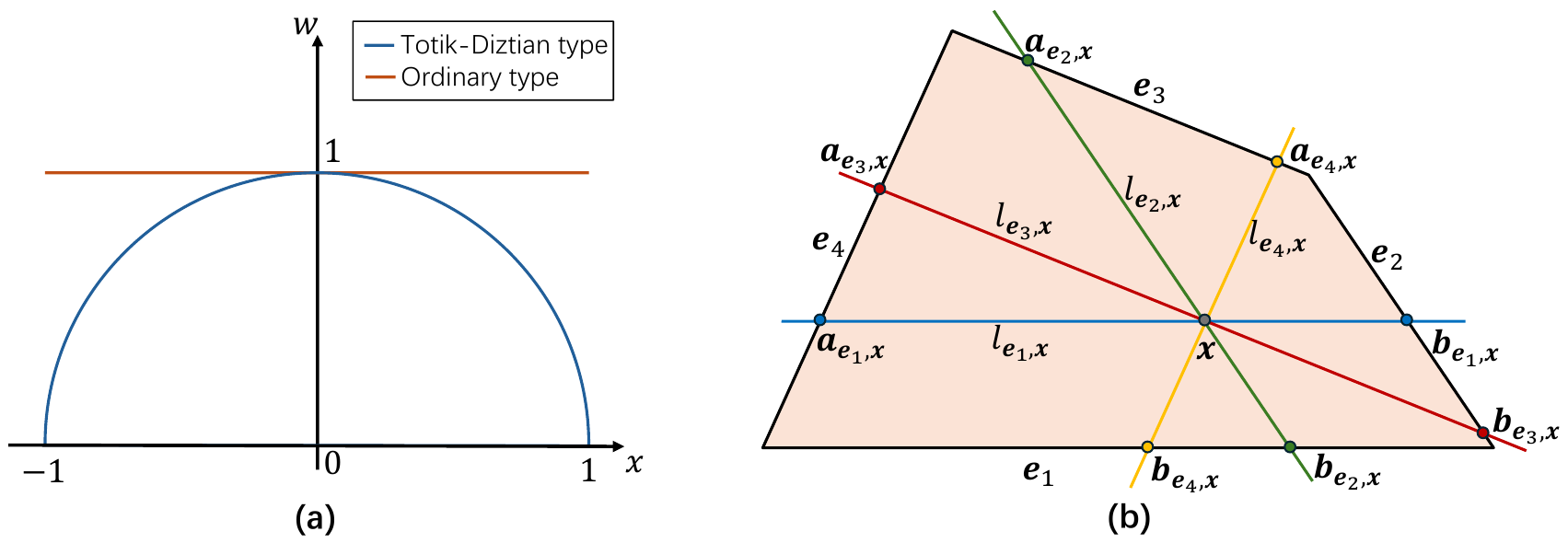}}
\caption{Illustration of the Ditzian-Totik modulus of continuity. (a) The modulus of continuity of $f:[-1,1]\to \mathbb{R}$ with step $t$ is computed by taking supremum of $f(x+\frac{hw(x)}{2})-f(x-\frac{hw(x)}{2})$ over $0<h<t$ and $-1\leq x\leq 1$, where $w(x)$ is a weight function measuring the distance from $x$ to the boundary. When $w(x)\equiv1$ and $w(x)=\phi(x)=\sqrt{1-x^2}$, we obtain the ordinary and Ditzian-Totik modulus of continuity respectively. (b) $\boldsymbol{e}_{i},i=1,2,3,4$ are the direction of edges of polytope $K$. For $\boldsymbol{x}\in K$, $l_{\boldsymbol{e_i},\boldsymbol{x}}$ is a straight line through $\boldsymbol{x}$ and parallel to $\boldsymbol{e}_i$ intersecting $K$ at $\boldsymbol{a}_{\boldsymbol{e}_i,\boldsymbol{x}}$ and $\boldsymbol{b}_{\boldsymbol{e}_i,\boldsymbol{x}}$.}
\label{Figure_TotikModulus}
\end{figure}

 {
Our approach can also be extended to approximating analytic functions, where the approximation rate is exponential in width and depth.
\begin{theorem}\label{thm:MainThmAnalytic}
	Let $K\subseteq [-1,1]^d$ be a polytope. Suppose that $f: K\to\mathbb{R}$ can be analytically continued to the region
\begin{equation*}
    \left\{x\in\mathbb{C}^d:\left\|x\right\|+\left|\left\|x\right\|-d\right|\leq d+2h^2\right\}.
\end{equation*}
    Then for any $N\in \mathbb{Z}^{+}$, there exists a  function $\tilde{f}:\mathbb{R}^d\to \mathbb{R}$ vanishing outside $K$ that can be implemented by a ReLU network with of $\mathcal{O}\left(\max\left
    \{N^3,N^d\right\}\right)$ and depth $\mathcal{O}(N)$ such that
	\begin{equation}
		\left\|\tilde{f}-f\right\|_{L^{\infty}(K^{\prime})}\leq \mathcal{O}_{\epsilon}\left((h+\sqrt{1+h^2})^{-N/\sqrt{d}}\right),
	\end{equation}
where $K^{\prime}\subseteq K$ with $\mathfrak{m}(K\setminus K^{\prime})$ is arbitrary small, and $\rho=h+\sqrt{1+h^2}$. Here $g(n)=\mathcal{O}_{\epsilon}\left(a^{-n}\right)$ if for all $\epsilon>0$, $g(n)=\mathcal{O}\left((a-\epsilon)^{-n}\right)$.
\end{theorem}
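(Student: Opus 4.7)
The plan is to follow the same outline as the proof of \cref{thm:MainThmContinuous}, but replace its Totik-Ditzian kernel polynomial step with a Chebyshev-type polynomial whose approximation error decays geometrically in $N$ rather than polynomially. I would first reuse verbatim the polytope decomposition and the ReLU cutoff network from the continuous case, so the problem is reduced to the following sub-problem: on each affinely rescaled cell contained in $[-1,1]^d$, build a polynomial $P_N$ of effective total degree $N$ that approximates $f$ with error $\mathcal{O}_\epsilon(\rho^{-N/\sqrt d})$, and then realize $P_N$ by a ReLU network up to exponentially small error.

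The polynomial construction would use a multivariate kernel polynomial tailored to the geometry of the given complex domain. The condition $\|x\|+|\|x\|-d|\le d+2h^2$ describes exactly the closed Euclidean ball in $\mathbb{C}^d$ of radius $d+h^2$. The core analytic estimate is to show that this ball contains a polyellipse $E_{\rho_*}^d$ whose per-coordinate Bernstein parameter $\rho_*$ is at least $\rho^{1/\sqrt d}$, where $\rho=h+\sqrt{1+h^2}$; a multivariate Bernstein-type bound on tensor-product Chebyshev truncation then yields $\|f-P_N\|_{L^\infty([-1,1]^d)}\le C_\epsilon\rho^{-N/\sqrt d}$. The $\sqrt d$ factor arises because the polyellipse's worst-case point (along the all-ones diagonal) has Euclidean norm equal to $\sqrt d$ times the semi-major axis, so fitting the polyellipse into a ball whose radius grows like $d$ only allows the per-coordinate Bernstein radius to grow like a fractional root of the full parameter.

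To realize $P_N$ as a ReLU network, I would enumerate the $\mathcal{O}(N^d)$ multivariate Chebyshev basis monomials of total degree at most $N$, and use the standard Yarotsky squaring-net construction together with the identity $xy=\tfrac14[(x+y)^2-(x-y)^2]$ to compute each monomial to absolute accuracy $\rho^{-N}$. This costs depth $\mathcal{O}(\log \rho^N)=\mathcal{O}(N)$ per monomial at width $\mathcal{O}(1)$, so the full collection sits in width $\mathcal{O}(N^d)$; combined with the polytope indicator/cutoff subnetwork from \cref{thm:MainThmContinuous}, which contributes width $\mathcal{O}(N^3)$ and depth $\mathcal{O}(\log N)\subseteq \mathcal{O}(N)$, one obtains overall width $\mathcal{O}(\max\{N^3,N^d\})$ and depth $\mathcal{O}(N)$, matching the statement. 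The vanishing-outside-$K$ property is preserved by multiplying with the same indicator used in \cref{thm:MainThmContinuous}.

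The main obstacle is the multivariate analytic estimate: inscribing a polyellipse with per-coordinate parameter $\rho^{1/\sqrt d}$ into the Euclidean ball of radius $d+h^2$ with constants uniform in $d$, and bounding the tail of the total-degree-$N$ tensor-product Chebyshev series by $\rho^{-N/\sqrt d}$ rather than by a weaker univariate-corner bound. A secondary technical point is that the cutoff-network multiplication must not inflate the exponentially small error; this forces an a priori uniform bound on $\|P_N\|_{L^\infty}$ (at worst growing mildly in $N$) so that the per-operation precision can be chosen as $\rho^{-\Theta(N)}$, which is precisely what dictates the linear depth $\mathcal{O}(N)$ and underlies the $\mathcal{O}_\epsilon$ relaxation in the final rate.
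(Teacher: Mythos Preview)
Your outline matches the paper's proof in structure: reduce to the cube via the parallelepiped covering and \cref{Thm_Global}, obtain a Chebyshev-series approximant with geometrically decaying error, then feed that polynomial through the same ReLU machinery built for \cref{MainThmCube}. The paper, however, does not attempt the ``main obstacle'' you isolate: it simply invokes Theorem~4.2 of Trefethen's \emph{Cubature, approximation, and isotropy in the hypercube}, which directly supplies a truncated Chebyshev series $P_n$ with $\|f-P_n\|_{L^\infty(Q)}=\mathcal{O}_\epsilon(\rho^{-n/\sqrt d})$ and coefficients $a_{j_1,\ldots,j_d}=\mathcal{O}(\rho^{-\sqrt{j_1^2+\cdots+j_d^2}})$. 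After that citation the proof is literally a rerun of \eqref{eq_cube_err_1}--\eqref{eq_cube_err_2} with $N_1,N_2$ rescaled linearly in $n$ so that the $n^{d+2}3^n2^{-2N_1}$ and $n^d2^{-2N_2}$ terms are dominated by $\rho^{-n/\sqrt d}$.

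One correction to your reading of the hypothesis: the analyticity region is \emph{not} a Euclidean ball. In Trefethen's setting the quantity playing the role of ``$\|x\|$'' is the complex sum $s=\sum_j z_j^2$ (no absolute values), and the condition $|s|+|s-d|\le d+2h^2$ describes an ellipse in the $s$-plane with foci $0$ and $d$; its preimage in $\mathbb{C}^d$ is a genuinely anisotropic set, not $\{\|z\|_2\le d+h^2\}$. Your polyellipse-inscribing heuristic gets the right exponent $\rho^{1/\sqrt d}$, but the geometry you would have to work with is this Newton-ellipse region, which is exactly why the paper outsources the step to Trefethen rather than carrying it out by hand. If you are willing to cite that result, the remainder of your plan (including the width/depth accounting) goes through essentially as written.
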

}
 {
 Please note that our approximator $\bar{f}:\mathbb{R}^d\to\mathbb{R}$ given by the ReLU network vanishes outside $K$, \textit{i.e.},
\begin{equation*}
    \text{supp}\left(\bar{f}\right)=\left\{\boldsymbol{x}\in\mathbb{R}^d:\bar{f}(\boldsymbol{x})\neq0\right\}\subseteq K.
\end{equation*}
Unlike existing approximation results (\cite{shen2019deep,lu2021deep,yarotsky2017error}), this allows us to approximate the target function on each polytope subdomain separately, based on the local regularity of the target function. For example, a function may satisfy only basic continuity conditions globally, but is highly smooth or analytic locally. This situation often occurs when the data distribution is different, and the representation of the neural network is not smooth near the decision boundary.
}

 {
In summary, our contributions are threefold:
}
\begin{enumerate}[i)]
    \item \textbf{  {More} Realistic universal approximation}: Most neural network approximation theories are based on cube decomposition. In line with the real-world ReLU networks' learning fashion, we give a more realistic universal approximation theorem over an irregular domain based on polytope decomposition. Accordingly, the approximation is characterized by the Totik-Ditzian-type modulus of continuity, which depends not only on the property of the target function, but also on the polytope decomposition. To the best of our knowledge, this is the first result that considers polytope decomposition in establishing network approximation theory.

    \item \textbf{Task-aware network architecture}: Network approximation for various function classes, \textit{e.g.}, continuous functions \cite{lu2021deep,shen2022optimal}, smooth functions \cite{shen2019deep,yarotsky2017error}, and analytic functions \cite{Herrmann2022ConstructiveDR}, has been well studied. However, Our construction can be highly adaptive to the target function, \textit{i.e.}, the network topology varies depending on the local regularity, which can improve approximation efficiency and optimize existing approximation method.

    \item \textbf{Introducing KPM into network approximation:} To the best of our knowledge, our work is the first to introduce the kernel polynomial method (KPM, \cite{Weisse2005TheKP,Bjeli2022ChebyshevKP}) into network approximation theory, which is beyond the paradigm of Taylor expansion existing in most existing works. Due to the importance of the KPM method in approximation theory, we extend our main result to analytic functions and higher order modulus of smoothness, which further opens the door to a wide variety of new research opportunities.
    
\end{enumerate}

\subsection{Superiority of polytope decomposition}

 {
On the one hand, the Totik-Ditzian modulus of continuity better characterizes the variation of the target function and aligns with the polytope decomposition. The comparison in the following manifests the essential difference.
}

 {
In \cite{shen2019deep}, the authors showed that a ReLU network of width $\mathcal{O}(N)$ and depth $\mathcal{O}(L)$ can reach an approximation rate $\mathcal{O}\left(\omega\left(f,N^{-2/d}L^{-2/d}\right)\right)$ for continuous functions on $[0,1]^d$, \textit{i.e.}, an error of $\mathcal{O}\left(\omega\left(f,N^{-2+2/d}(\log{N})^{-4/d}\right)\right)$ for the network of the size in \cref{thm:MainThmContinuous} when $d\geq 4$. Although this result is nearly optimal for Holder continuous functions on $[0,1]^d$, it generally inaccurate when the function exhibits a singularity near the boundary.
}

 {
\begin{example}
\label{example}
    Consider on $[0,1]$
\begin{equation*}
    f(x)=\begin{cases}
        -x^{1/3}\ln{x},&x\in (0,1],\\
        0,&x=0.
    \end{cases}
\end{equation*}
    Then we have $\omega_{[0,1]}(f,t)\geq \mathcal{O}(t^{1/3}|\ln{t}|)$ but $\tilde{\omega}_{[0,1]}(f,t)=\mathcal{O}(t^{2/3}|\ln{t}|)$.
\end{example}
}

\begin{proof}
 {
\begin{equation*}
    f^{\prime}(x)=-x^{-2/3}-\frac{1}{3}x^{-2/3}\ln{x}\sim x^{-2/3}\left| \ln{x} \right|,\quad \text{as }x\to 0,
\end{equation*}
which is decreasing on $(0,1)$.
}

 {
For the ordinary modulus of continuity, we have
\begin{equation*}
\begin{split} \omega_{[0,1]}\left(f,t\right)
&=\sup_{0<h\leq t,x\in [0,1-h]} \left| f(x+h)-f(x) \right|\\
&=\sup_{0<h\leq t,x\in [0,1-h]} \left| f^{\prime}(\xi_{x})h \right|,\quad x\leq\xi_{x}\leq x+h,\\
&\geq \sup_{0<h\leq t,x\in [0,1-h]} \left| f^{\prime}(x+h)h \right|\\
&= \sup_{0<h\leq t} \left| f^{\prime}(h)h \right|\sim t^{1/3}\left| \ln{t} \right|,\quad \text{as }t\to 0.
\end{split}   
\end{equation*}
}

 {
To compute the Ditzian-Totik modulus of continuity, we first use a result given by Theorem 3.3.2 and Remark 3.3.3 in \cite{ditzian2012moduli}, which gives for all small $t>0$
\begin{equation*}
    \tilde{\omega}_{[0,1]}(f,t)=\sup_{0<h\leq t,x\in \left[\frac{h^2}{5},1-\frac{h^2}{5}\right]}\left| \Delta_{h\sqrt{x(1-x)}} f(x)\right|.
\end{equation*}
Following a similar procedure to the ordinary case, there is some $\eta_{x},x-\frac{h}{2}\sqrt{x(1-x)}\leq \eta_{x}\leq x+\frac{h}{2}\sqrt{x(1-x)}$ such that
\begin{equation*}
    \left|\Delta_{h\sqrt{x(1-x)}} f(x)\right|=\left|f^{\prime}\left(\eta_{x}\right)\right|h\sqrt{x(1-x)}
\end{equation*}
which further gives
\begin{equation*}
\begin{split}
    \tilde{\omega}_{[0,1]}(f,t)&\sim \sup_{0<h\leq t,x\in \left[\frac{h^2}{5},1-\frac{h^2}{5}\right]} \eta_x^{-2/3}|\ln{\eta_x}|h\sqrt{x(1-x)}.\\
    & \sim \sup_{0<h\leq t,x\in \left[\frac{h^2}{5},1-\frac{h^2}{5}\right]} x^{-2/3}|\ln{x}|h\sqrt{x(1-x)}\\
    &\sim t^{-4/3}|\ln{t}|t\sqrt{t^2}=t^{2/3}|\ln{t}|.
\end{split}
\end{equation*}
A more detailed calculus and a more general case can be found in Chapter 3.4 of \cite{ditzian2012moduli}.
}
\end{proof}

 {
Furthermore, let $F(x)=\sum_{j=1}^d f(x_j),d\geq 4$, then a network of depth $\mathcal{O}(\log N)$ and width $\mathcal{O}(N^{d-1}\log{N})$ can approximate $F(x)$ with the error of $\mathcal{O}(N^{-2/3}|\ln N|)$, which is significantly smaller than $\mathcal{O}\left(N^{-1/2}|\log N|^{-1/3}|\ln{\ln{N}}|\right)$ given by \cite{shen2019deep}.
}

 {
On the other hand, polytope decomposition can better utilize the local regularity. For example, considering the continuous but nonanalytic target function $\max\left\{e^{x-y},e^{y-x}\right\}$ on $[0,1]^2$. Doing approximation on two simplices $\left\{(x,y)\in[0,1]^2:x\geq y\right\}$ and $\left\{(x,y)\in[0,1]^2:x\leq y\right\}$ separately can lead to a much faster approximate rate. What is worse is that when the objective function is discontinuous over the entire domain, traditional approximation schemes are incompetent.
}

 {
Lastly, we emphasize that in our construction, the choice of polytope decomposition also affects the approximation efficiency, which aligns with the real situation.
\begin{example}
\label{example_2}
    Consider on $[-1,1]^2$,
\begin{equation*}
    f(x,y)=\begin{cases}
        \psi(x,y)\sqrt{x^2-y^2}\log\frac{1}{x^2+y^2},&x,y\in [-1,0)\cup(0,1],\\
        0,&x=0\text{ or }y=0.
    \end{cases}
\end{equation*}
We partition the square $[-1,1]^2$ into four closed triangular regions $S_i,i=1,2,3,4,$ by its diagonals, \textit{i.e.},
\begin{equation*}
    S_i=\left\{ (x,y)=(r\cos\theta,r\sin\theta): r\in [0,\sec\left(\theta-(i-1)\pi/2\right)],\theta\in [(i-1)\pi/2,i\pi/2] \right\}.
\end{equation*}
Then $\tilde{\omega}_{S_i}(f,t)=\mathcal{O}(t),i=1,2,3,4$ while $\tilde{\omega}_{[-1,1]^2}(f,t)=\mathcal{O}(t|\log{t}|)$.
\end{example}
}

 {
\begin{remark}
    The above example comes from Example 6.4 in \cite{Ditzian1984ModuliOC} and Remarks 12.2.1 (3) in \cite{ditzian2012moduli}, where they construct $f(x,y)=\psi(x,y)\sqrt{xy}\log\frac{1}{x^2+y^2}$ such that the modulus of continuity along $(0,1)$ and $(1,0)$ is $\mathcal{O}(t)$ while the modulus of continuity along $(1,1)/\sqrt{2}$ is $\mathcal{O}(t|\ln{t}|)$. We rotate this function by $\pi/4$ to obtain \cref{example_2}. \textbf{In this example, if we decompose the domain $[-1,1]^2$ into $S_i,i=1,\ldots,4$ and make an approximation on each subdomain separately, we can improve the approximate rate from $\mathcal{O}(N^{-1}\ln{N})$ to $\mathcal{O}(N^{-1})$.}
\end{remark}
}

\subsection{Related Work}

We review work from two directions: approximation theory over polytopes and theory on approximation capabilities of neural networks, which are most closely related to ours.

\textbf{Approximation Theory over Polytopes.} The polynomial approximation over polytopes is a heating topic in approximation theory, which aims to find polynomials whose approximation error is bounded by the modulus of smoothness. \cite{DITZIAN1989Best,Berens1991KmoduliMO} studies the best polynomial approximation and K-functional over a simplex. \cite{Ditzian1996Polynomial,ditzian2012moduli,CHEN1991Best} studies polynomial approximation to $L^p$ functions over simple polytopes. Due to the absence of K-functional on general polytopes, the approximation over general polytopes remained an unsolved open problem for decades before Totik's work \cite{Totik2014PolynomialAO} for continuous and $L^p$ functions in $\mathbb{R}^3$ and \cite{Totik2020} for continuous functions in $\mathbb{R}^d$. 

\textbf{Approximation of Neural Networks.} 
In the early stage of this field, attention was focused on universal approximation theory. For example, \cite{lu2017expressive,hanin2019universal} demonstrated that ReLU networks, with bounded width and unbounded depth, can serve as universal approximators. \cite{fan2021sparse} showed that a ReLU network with a single neuron width, coupled with shortcuts connecting all hidden neurons to the output neuron, can approximate any univariate function. \cite{lin2018resnet} proved that a ResNet with one-neuron-wide hidden layers is a universal approximator.
Given the widespread use of ReLU, recent studies 
have focused on characterizing the approximation capacity of ReLU networks. As shown in Table~\ref{tab:universal_approximation}, these studies illustrate that ReLU networks can effectively approximate various types of functions, including continuous functions \cite{lu2021deep,shen2022optimal}, smooth functions \cite{shen2019deep,yarotsky2017error}, and analytic functions \cite{Herrmann2022ConstructiveDR}. The approximation rates vary according to network properties such as width, depth, and parameter count. For instance, \cite{yarotsky2017error} proved that deep ReLU networks can uniformly approximate functions in Sobolev space. \cite{chen2019efficient} demonstrated that ReLU networks can efficiently approximate functions over low-dimensional manifolds. \cite{guhring2020error} examined the efficiency of deep ReLU networks in approximating Sobolev regular functions using weaker Sobolev norms. \cite{lu2021deep} established the optimal approximation error for deep ReLU networks when approximating smooth functions, where width and depth are log-linearly scaled. \cite{shen2022deep} showed that a ReLU network can approximate any Lipschitz continuous function with a small number of learnable parameters. \cite{daubechies2022neural} proved that rough but refinable functions can be approximated by deep ReLU networks with a fixed width. Additionally, \cite{shen2021neural} employed special activation functions to construct three-layer networks capable of approximating any continuous function with an error exponentially decaying with width.

\begin{table}[htbp!]
\def\arraystretch{1.15}
 \centering
\caption{
Comparisons between our work and the existing neural network approximation results.
Here, $\omega(f,\cdot)$ is the modulus of continuity of the function $f$, $\tilde{\omega}(f,\cdot)$ is the Ditzian-Totik modulus of smoothness defined in \eqref{eq_TDmodulus}, $K$ is a polytope, and $X$ is a compact set. Our \cref{thm:MainThmAnalytic} achieves the superior approximation rate, where both width and depth are in the exponent.
}	
	\resizebox{0.98\linewidth}{!}{ 
		\begin{tabular}{ ccccccccccc  }
      \toprule
      Reference & Partition & Target Function & Width & Depth & Error \\
      \midrule
    \cite{cybenko1989approximation} & cube & $f \in C([0,1]^d)$ & - & - \\
      \midrule
       \cite{shen2021neural} & cube &  $f \in C([0,1]^d)$& $\max\{d,N\}$ & $3$ &$ \mathcal{O}\left(2 w_f(2\sqrt{d})2^{-N}+w_f(2\sqrt{d}2^{-N})\right)$  \\
   \midrule
   \cite{lu2021deep}& cube & $f \in C^s([0,1]^d)$&$\mathcal{O}(N\ln{N})$&$\mathcal{O}(L\ln{L})$&$ \mathcal{O}\left(\Vert f \Vert_{C^s([0,1]^d)} N^{-2s/d}L^{-2s/d}\right)$ \\
     \midrule
   \cite{shen2020deep}& cube & $f \in C([0,1]^d)$ &$\mathcal{O}(N)$&$\mathcal{O}(L)$& $ \mathcal{O}\left(19\sqrt{d}w_f(N^{-2/d}L^{-2/d})\right)$ \\
   \midrule
\cite{yarotsky2017error} & cube & $f\in W^{n,\infty}([0,1]^D)$ &$\mathcal{O}\left(\ln{(1/\varepsilon)}\right)$&$\mathcal{O}\left(\varepsilon^{-d/n}\ln{(1/\varepsilon)}\right)$& $\varepsilon$ \\
   \midrule
   \cite{kidger2020universal}& - & $f\in C(X) \text{ or } L^{p}(X)$ &-&-& -  \\
   \midrule  \cite{hanin2017approximating}& cube & $f\in C([0,1]^d)$ &$d+3$&$\frac{2\cdot d!}{\omega_f(\epsilon)^d}$& $\varepsilon$  \\
   \midrule   \cite{hanin2019universal} & cube & 
$f\in C^{\infty}([0,1]^d)$ &$d+2$&$L$& $\mathcal{O}\left(L^{-1/d}\right)$  \\
   \midrule
   \cite{park2020minimum} & cube & $f\in L^p(\mathbb{R}^d)$ &$N\geq d+1$ &-&- \\
   \midrule
   \textbf{\cref{thm:MainThmContinuous}}
   & \textbf{polytope} & $\boldsymbol{f \in C(K)}$ & $\mathcal{O}\left(\max\left
    \{N^3,N^{d-1}\log{N}\right\}\right)$&$\mathcal{O}(\log N)$& $\mathcal{O}\left(\tilde{\omega}_K\left(f,n^{-1}\right)+n^{-\alpha}\right)$  \\
   \midrule
   \textbf{\cref{thm:MainThmAnalytic}} 
   & \textbf{polytope} & $\boldsymbol{f}$ \textbf{analytic} & $\mathcal{O}\left(\max\left
    \{N^3,N^d\right\}\right)$&$\mathcal{O}(N)$&$\mathcal{O}_{\epsilon}\left((h+\sqrt{1+h^2})^{-N/\sqrt{d}}\right)$  \\
   \bottomrule
   
 \end{tabular}
 }
\label{tab:universal_approximation}
\vspace{-0.3cm}
\end{table}

\section{Approximation on Polytope}
In this section, we will prove \cref{thm:MainThmContinuous}.

\subsection{Notations and definitions}
\label{prem}
First, we summarize the main notation in this paper as follows:
\begin{itemize}
    \item Vectors and matrices are bold-faced. The function in a bold font denotes a vector. For example, $\boldsymbol{\cos}(\boldsymbol{x})=(\cos{x_1},\ldots,\cos{x_d})$ for any $\boldsymbol{x}\in\mathbb{R}^d$.





    \item In $\mathbb{R}^{d}$, we call a closed set  $K \subset \mathbb{R}^{d}$ a convex polytope, if it is the convex hull of finitely many points. $K$ is $d$-dimensional if it has an inner point, which we shall always assume. 


\item Parallelepiped is a 3D shape whose faces are all parallelograms. Hyper-parallelepiped describes the parallelepiped in high dimensions. For simplicity, we omit the prefix “hyper" when there is no ambiguity in the expression. Given a parallelepiped $E$ and a positive number $\lambda$, let $E^\lambda$ denote the dilation of $E$ by a factor $\lambda$ made its center.


\item In $\mathbb{R}^d$, we use $Q$ to denote the $d$-dimensional hypercube $[-1,1]^d$. For any $\delta\in(0,1)$ we use $Q_\delta$ to denote $[-1+\delta,1-\delta]^d$. Specially in one dimension, we use $I$ and $I_\delta$ to denote $[-1,1]$ and $[-1+\delta,1-\delta]$, respectively.

\item $\Pi_{n}^d$ is the set of all polynomials in $\mathbb{R}^d$ with a total degree no more than $n$:
 \begin{equation*}
     \Pi_{n}^d=\left\{\sum_{0\leq j_1+\cdots+j_d\leq n}a_{j_1,\ldots,j_d}x_1^{j_1}\cdots x_d^{j_d},a_{j_1,\ldots,j_d}\in\mathbb{R}\right\},
 \end{equation*}
 and $\pi_{n}^{T}$ is the set of all trigonometric polynomials in $\mathbb{R}$ of  degree no more than $n$:
 \begin{equation*}
     \pi_{n}^{T}=\left\{\sum_{j=0}^{n}a_j\cos(jx)+b_j\sin(jx),a_j,b_j\in\mathbb{R}\right\}.
 \end{equation*}



\end{itemize}

\begin{definition}[Classical ReLU network]
    Let $\sigma:\mathbb{R}\to\mathbb{R},x\mapsto \max\left\{0,x\right\}$ be the ReLU activation. A classical ReLU network $g(\boldsymbol{x}):\mathbb{R}^{d} \rightarrow \mathbb{R}$ with widths  $N_{1}, \ldots, N_{L}$ of $L$  hidden layers is given by
        \begin{equation*}
            g(\boldsymbol{x})=\mathcal{A}_L\circ \sigma \circ\mathcal{A}_{L-1}\circ\sigma\circ\cdots\circ\sigma\circ\mathcal{A}_1\circ\sigma\circ\mathcal{A}_0,
        \end{equation*}
        where $\mathcal{A}_{j}:\mathbb{R}^{N_j}\to \mathbb{R}^{N_{j+1}}$ is an affine transform. Without loss of generality, we let $N_0=d,N_{L+1}=1$.
\end{definition}

\begin{definition}[Intra-linked ReLU network]
 Let $\sigma:\mathbb{R}\to\mathbb{R},x\mapsto \max\left\{0,x\right\}$ be the ReLU activation function. An ReLU network with intra-layer linked shortcuts $h(\boldsymbol{x}):\mathbb{R}^{d} \rightarrow \mathbb{R}$ with widths $N_{1}, \ldots, N_{L}$ of $L$  hidden layers is given by
        \begin{equation*}
            h(\boldsymbol{x})=\mathcal{A}_L\circ \boldsymbol{\Sigma}_L \circ\mathcal{A}_{L-1}\circ\boldsymbol{\Sigma}_{L-1}\circ\cdots\circ\boldsymbol{\Sigma}_2\circ\mathcal{A}_1\circ\boldsymbol{\Sigma}_1\circ\mathcal{A}_0,
        \end{equation*}
        where $\boldsymbol{\Sigma}_j:\mathbb{R}^{N_j}\to \mathbb{R}^{N_j},\boldsymbol{x}=(x_1,\ldots,x_{N_j})^T\mapsto \boldsymbol{\Sigma}_j(\boldsymbol{x})=\left( 
y_1,\ldots,y_{N_j} \right)^T,j=1,\ldots ,L$ is defined by
        \begin{equation*}        y_1=\sigma(x_1),\quad 
        y_k=\sigma(x_k+\langle\boldsymbol{W}_{k}^{j},(y_1,\ldots,y_k)^T\rangle)                 
        \end{equation*}
        where $\boldsymbol{W}^j=\left[{\boldsymbol{W}_1^j}^T,\ldots,{\boldsymbol{W}_{N_j}^j}^T\right]\in\mathbb{R}^{N_j\times N_j}$ is the learnable weights of intra links of the $j$-th layer, and is strictly lower triangular. Especially, when $\boldsymbol{W}^j=\boldsymbol{0}$, the network degrades as a classical feedforward ReLU network. Figure~\ref{Comparison:feedforward_and_intra_linked} showcases the topological and computational differences between feedforward and intra-linked neural networks.
         
\end{definition}

Later, we will show that \textbf{an intra-linked network can achieve much higher approximation efficiency than a normal feedforward network}.

\begin{figure*}[ht!]
  \centering
    \begin{subfigure}{(a)}
      \centering   
      \includegraphics[width=1\linewidth]{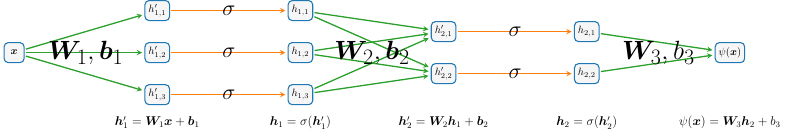}

    \end{subfigure}
    
    \begin{subfigure}{(b)}
      \centering   
      \includegraphics[width=\linewidth]{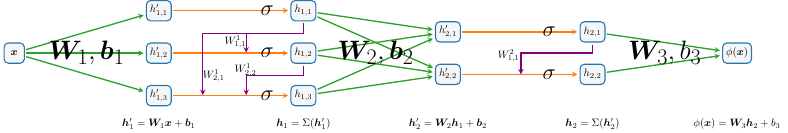}
    \end{subfigure}
\caption{(a) A classical ReLU network of widths $3$ and $2$ for two hidden layers. (b) Intra-linked ReLU network of width $3$ and $2$ for two hidden layers.}
\label{Comparison:feedforward_and_intra_linked}
\end{figure*}

\subsection{Ideas for proving
\cref{thm:MainThmContinuous}}

The key idea of extending the approximation over hypercubes to a general polytope is based on the fact that each simple polytope can be decomposed into a union of a group of parallelepipeds (\cref{ParallelepipedCovering}). The main difficulty is ensuring that the approximation over each parallelepiped does not interfere with each other and eliminating the value outside the domain. To this end, we adapt techniques in the theory of polynomial approximation \cite{Ditzian1996Polynomial,Totik2014PolynomialAO} into the construction of neural networks.

We mainly follow three steps in proving \cref{thm:MainThmContinuous}:
\begin{itemize}
    \item First, we construct a polynomial in $\mathbb{R}^d$ to approximate continuous functions on $Q$. Although such an approximation has been well studied in polynomial approximation theory, it is crucial to provide polynomials with explicit coefficients for further estimate.

    \item Second, we construct ReLU networks with intra-layer links to approximate polynomials efficiently, which, together with the first step, gives an approximation to $C(Q)$ functions by ReLU networks. Besides, we also demand that the approximation is on $Q$, which ensures the local approximations have no cross-talk.

    \item Third, because each simple polytope has a finite number of parallelepipeds covering it, we extend the approximation over a cube $Q$ to a parallelepiped $K$. Then, approximation over simple polytopes is accomplished.
\end{itemize}

We formulate the main results in the above steps as \cref{PolynomialConstruction,MainThmCube,Thm_Global}, whose proofs will be given in Sections \ref{Sec_KPM}, \ref{Sub_PolyApprox}, and \ref{Sec_proof}, respectively. Next, we prove \cref{thm:MainThmContinuous} based on \cref{PolynomialConstruction,MainThmCube,Thm_Global}.

 {
\begin{theorem}\label{PolynomialConstruction}
Let $f$ be continuous on the cube $Q=[-1,1]^d$. Then for any $N\in \mathbb{Z}^{+}$, there exists a polynomial $P_n \in \Pi_n^d$, which can be formulated as a Chebyshev series $P_n=\sum_{j_1,\ldots,j_d=1}^{n}a_{j_1,\ldots,j_d}T_{j_1}(x_1)\cdots T_{j_d}(x_d)$ with bounded coefficients $a_{j_1,\ldots,j_d}$, such that
\begin{equation*}
    \left\|f-P_n\right\|_{L^{\infty}(Q)}\leq C \tilde{\omega}_{Q}\left(f,\frac{1}{n}\right),
\end{equation*}
where $C$ is a constant independent of $f$ and $n$. 
\end{theorem}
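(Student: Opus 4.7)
The plan is to build $P_n$ via the Kernel Polynomial Method, with a tensor-product Jackson-type kernel written in the Chebyshev basis, exploiting the fact that the Chebyshev transform $x_i = \cos \theta_i$ converts the Ditzian-Totik modulus on $Q$ into the ordinary trigonometric modulus on the torus.

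First I would make the substitution $g(\boldsymbol\theta) := f(\boldsymbol{\cos}(\boldsymbol\theta))$, which is a continuous function on $\mathbb{T}^d$ that is even in each variable. A classical univariate computation shows that for the increment in $\theta_i$ by $h$, one has $\cos(\theta_i+h)-\cos\theta_i$ comparable to $h\sqrt{1-x_i^2} = h\,\tilde d_Q(\boldsymbol e_i,\boldsymbol x)$, so the ordinary modulus of continuity of $g$ in the $i$-th coordinate direction is controlled by $\tilde\omega_Q(f,\cdot)$ evaluated in the edge direction $\boldsymbol e_i$. Applying this in each of the $d$ coordinate directions and using that the coordinate axes are precisely the edge directions $\mathcal{E}^Q$ of the cube, one obtains the key comparison $\omega(g,t) \le C\,\tilde\omega_Q(f,t)$.

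Next I would construct the approximating polynomial. Let $J_n(\theta)$ denote the univariate Jackson kernel of degree $n$, a nonnegative trigonometric polynomial satisfying $\frac{1}{2\pi}\int J_n = 1$, with bounded cosine coefficients $\mu_k \in [0,1]$ and the Jackson bound $\|J_n * \varphi - \varphi\|_\infty \le C\,\omega(\varphi,1/n)$ for every continuous $2\pi$-periodic $\varphi$. Define the $d$-dimensional kernel $K_n(\boldsymbol\theta) := \prod_{i=1}^d J_n(\theta_i)$ and set $T_n := K_n * g$ on $\mathbb{T}^d$. A standard telescoping argument through the coordinate directions, combined with translation-invariance of the $L^\infty$ norm under convolution, yields
\begin{equation*}
\|T_n - g\|_{L^\infty(\mathbb T^d)} \le \sum_{i=1}^d \|J_n *_{\theta_i} g - g\|_{L^\infty(\mathbb T^d)} \le C\,\omega(g,1/n) \le C\,\tilde\omega_Q(f,1/n).
\end{equation*}
Because $g$ is even in each $\theta_i$, its Fourier series contains only cosine terms, and the identity $\cos(k\theta)=T_k(\cos\theta)$ lets me rewrite $T_n$ as a Chebyshev polynomial $P_n(\boldsymbol x) = \sum_{j_1,\dots,j_d=0}^n a_{j_1,\dots,j_d} T_{j_1}(x_1)\cdots T_{j_d}(x_d)$. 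Pulling back, $\|P_n - f\|_{L^\infty(Q)} = \|T_n - g\|_{L^\infty(\mathbb T^d)}$, which gives the required error bound, and the coefficients are explicitly $a_{j_1,\dots,j_d} = \Big(\prod_i \mu_{j_i}\Big)\hat g_{j_1,\dots,j_d}$, each bounded in absolute value by $\|f\|_{L^\infty(Q)}$ because $0 \le \mu_k \le 1$ and the Chebyshev coefficients of the continuous function $g$ are bounded by $\|g\|_\infty=\|f\|_\infty$.

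The main obstacle I anticipate is the direction-by-direction comparison $\omega(g,t) \le C\,\tilde\omega_Q(f,t)$: one must carefully handle increments near the boundary $|x_i|\to 1$, where the trigonometric parameterization stretches distances, and must verify that the DT modulus on $Q$ (supremum over all edge directions and all $\boldsymbol x \in Q$) dominates each of the $d$ coordinate-direction moduli with a dimension-independent comparison constant. Once this comparison is in hand, the tensor Jackson kernel does the rest, and the Chebyshev reformulation gives explicitly bounded coefficients as required for the subsequent neural-network construction in \cref{MainThmCube}.
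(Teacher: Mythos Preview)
Your proposal is correct and follows essentially the same route as the paper: substitute $x_i=\cos\theta_i$, convolve with a (tensor-product) Jackson-type kernel, telescope coordinate-by-coordinate using the contraction property of positive normalized kernels, and invoke the comparison $\omega(f\circ\cos,t)\le C\,\tilde\omega_I(f,t)$ (which the paper cites rather than proves) to pass to the Ditzian--Totik modulus. The only notable difference is that the paper builds an explicit Jackson-type kernel and computes its cosine coefficients directly to get the bound $|\hat a_k|\le 5\pi$, whereas your observation that any nonnegative normalized trigonometric kernel automatically has $|\mu_k|\le 1$ is a cleaner way to obtain bounded Chebyshev coefficients.
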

}

 {
\begin{theorem}\label{MainThmCube}
Let $f$ be continuous on the cube $Q=[-1,1]^d$. Then for any $N\in \mathbb{Z}^{+}$ and $\alpha>0$, there exists a  function $\tilde{f}:\mathbb{R}^d\to \mathbb{R}$ vanishing outside $Q$ that can be implemented by a ReLU network of width $\mathcal{O}\left(\max\left
    \{N^3,N^{d-1}\log{N}\right\}\right)$ and depth $\mathcal{O}(\log N)$ such that for any $0<\delta<1$
	\begin{equation}
		\left\|\tilde{f}-f\right\|_{L^{\infty}(Q_\delta)}\leq \mathcal{O}\left(\tilde{\omega}_Q\left(f,N^{-1}\right)+N^{-\alpha}\right)
	\end{equation}
and that $\text{supp}(\tilde{f})\subseteq Q$
\end{theorem}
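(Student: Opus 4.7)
The plan is to take the Chebyshev polynomial approximant $P_n$ from \cref{PolynomialConstruction}, simulate it by a ReLU network $\widehat{P}_n$ to any prescribed accuracy $N^{-\alpha}$, and then multiply by a compactly supported ReLU cutoff $\chi_\delta$ that equals $1$ on $Q_\delta$ and vanishes outside $Q$. With $n=\Theta(N)$, the three error contributions --- the polynomial error $\|f-P_n\|_{L^\infty(Q)}$, the network implementation error $\|P_n-\widehat{P}_n\|_{L^\infty(Q)}$, and the boundary effect from multiplication by $\chi_\delta$ --- combine to give the claimed bound on $Q_\delta$, while the cutoff automatically enforces $\mathrm{supp}(\tilde f)\subseteq Q$.

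To build $\widehat{P}_n$, I would first construct a univariate gadget that produces simultaneous approximations $\widetilde{T}_0(x),\ldots,\widetilde{T}_n(x)$ using the recursion $T_{j+1}(x)=2xT_j(x)-T_{j-1}(x)$ together with the standard ReLU implementation of $(x,y)\mapsto xy$ to precision $\varepsilon$ in depth $\mathcal{O}(\log(1/\varepsilon))$ and bounded width. Intra-layer links, as introduced above, are well suited here since each step of the recursion reuses the two preceding outputs within the same layer. Applying this coordinatewise yields $d$ parallel banks of univariate Chebyshev outputs, and the tensor products $\widetilde{T}_{j_1}(x_1)\cdots\widetilde{T}_{j_d}(x_d)$ are then assembled by a balanced product tree of depth $\mathcal{O}(\log d)$ and combined linearly with the bounded coefficients $a_{j_1,\ldots,j_d}$ from \cref{PolynomialConstruction}. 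Since those coefficients are uniformly bounded and there are $\mathcal{O}(n^d)$ terms, choosing the per-multiplication precision of order $N^{-(\alpha+d)}$ forces $\|P_n-\widehat{P}_n\|_{L^\infty(Q)}\le N^{-\alpha}$ while only inflating the depth by a constant in $\alpha$ and $d$.

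The cutoff $\chi_\delta$ is the tensor product of $d$ univariate trapezoids, each a ReLU combination of four hinges with breakpoints at $-1,-1+\delta/2,1-\delta/2,1$, so that $\chi_\delta\equiv 1$ on $Q_\delta$, $\chi_\delta\equiv 0$ off $Q$, and $0\le\chi_\delta\le 1$. Its product form is implemented via the same ReLU multiplication gadget. Setting $\tilde f=\chi_\delta\cdot\widehat{P}_n$ gives $\mathrm{supp}(\tilde f)\subseteq Q$, and since $\tilde f=\widehat{P}_n$ on $Q_\delta$ one obtains
\begin{equation*}
\|\tilde f-f\|_{L^\infty(Q_\delta)}\le \|f-P_n\|_{L^\infty(Q)}+\|P_n-\widehat{P}_n\|_{L^\infty(Q)}\le C\,\tilde\omega_Q(f,N^{-1})+N^{-\alpha}.
\end{equation*}

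The main obstacle is the architectural accounting. A naive layout of the $n^d$ tensor-product basis values would require width $n^d$, whereas the statement demands only $\mathcal{O}(\max\{N^3,N^{d-1}\log N\})$. The remedy is to process the last coordinate by a single parallel bank of $n$ univariate Chebyshev outputs that is then indexed by the $\mathcal{O}(n^{d-1})$ combinations coming from the remaining $d-1$ coordinates, routed through the intra-layer links; the $\log N$ factor absorbs the precision cost of the multiplication gadgets, while the $N^3$ term dominates when $d$ is small and the univariate Chebyshev recursion itself limits the width. Making the three error budgets line up exactly against this width/depth ledger, and ensuring that the supports of the cutoff and the intermediate product subnetworks do not interfere across the $d$ coordinate banks, is the most delicate piece of bookkeeping in the argument.
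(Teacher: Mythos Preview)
Your overall strategy --- polynomial approximation via \cref{PolynomialConstruction}, network realization of that polynomial, and a support restriction --- is exactly the paper's. The Horner-style reduction from $n^d$ to $n^{d-1}$ parallel products is also what the paper does (its nested partial sums $S^{(k)}_{j_1,\ldots,j_{k-1}}$ and $\bar S^{(k)}_{j_1,\ldots,j_{k-1}}$). Two pieces of your accounting, however, do not close.

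First, the depth budget for the Chebyshev gadget has a gap. The three-term recurrence $T_{j+1}=2xT_j-T_{j-1}$ is inherently sequential: producing $\widetilde T_0,\ldots,\widetilde T_n$ this way needs $n$ multiplications in series. If each multiplication costs depth $\mathcal{O}(\log(1/\varepsilon))$ as you write, the total depth is $\mathcal{O}(n\log(1/\varepsilon))$, not $\mathcal{O}(\log N)$. Your appeal to intra-layer links does not directly repair this: intra-links chain single affine--ReLU neurons, not entire multi-layer multiplication subnetworks, so you would have to argue separately that each $\widetilde\times$ can itself be unrolled into a sequence of affine--ReLU steps and then all $n$ of them concatenated inside one intra-linked layer. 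The paper avoids the issue by a different route: it expands each $T_k$ in the monomial basis and builds the monomials $x^j$ via a balanced product tree of depth $2\lceil\log_2 n\rceil$ (\cref{univariate}, \cref{Chebyshev}). The price is that the monomial coefficients of $T_k$ are $\mathcal{O}(3^n)$, which forces the per-multiplication precision to $N_1=n$; plugging this into the width $\tfrac{n}{2}+2n(N_1+1)$ and parallelizing over $\mathcal{O}(n)$ Chebyshev outputs is precisely where the $N^3$ term comes from. Your forward-stable recurrence never sees that $3^n$ factor, so your sentence ``the $N^3$ term dominates \ldots\ the univariate Chebyshev recursion itself limits the width'' is not explained by your own construction --- you would actually get a smaller width there, but at the cost of depth $\mathcal{O}(N)$ unless you either switch to a doubling/tree scheme for Chebyshev or make the intra-link collapse rigorous.

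Second, the support restriction is handled differently, and your version needs one more check. The paper does not multiply by an external cutoff $\chi_\delta$; instead it precomposes each univariate $\bar T_k$ with a piecewise-linear map that is the identity on $I_\delta$ and zero off $I$ (\cref{Chebyshev}), so every factor already has support in $[-1,1]$ and the Horner nesting inherits $\mathrm{supp}\subseteq Q$ automatically with no extra error on $Q_\delta$. Your $\tilde f=\chi_\delta\cdot\widehat P_n$ also works, but the final product must be realized by the approximate $\widetilde\times$, so you need to budget one more $\mathcal{O}(N^{-\alpha})$ error on $Q_\delta$ and, crucially, use the exact vanishing $\widetilde\times(0,\cdot)=0$ (which holds for the paper's gadget) so that $\mathrm{supp}(\tilde f)\subseteq Q$ is exact rather than only approximate.
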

}

\begin{theorem}\label{Thm_Global}
     Let $K\subseteq\mathbb{R}^d$ covered by $\left\{K_{j}\right\}_{j=1}^{k}$ and let $f:K\to\mathbb{R}$ be continuous. Assume that there exists a collection of $\hat{f}_{j}$ satisfying
\begin{equation}
    \left\| f-\hat{f}_{j} \right\|_{L^{\infty}\left(K_j\right)}\leq err \quad \text{and} \quad\left\|\hat{f}_j\right\|_{L^{\infty}(K_j)}\leq M,
\end{equation}
and that each $\hat{f}_{j}$ can be implemented by a ReLU network with width $W$ and depth $D$. Then for any $N\in\mathbb{Z}^+$, there exists a function $\widetilde{f}$ implemented by a ReLU network with width $\max\left\{8(N+1)+2(k-2),kW+4(k-1)d\right\}$ and depth $\max\{D,d\}+2(k-1)$ such that
\begin{equation*}
    \left\|f-\tilde{f}\right\|_{L^{\infty}\left(K\right)}\leq \frac{3(k+2)^2}{4}M\cdot 2^{-2N}+err
\end{equation*}
and accordingly
\begin{equation*}
    \text{supp}\left(\tilde{f}\right)\subseteq \bigcup_{j=1}^{k}\text{supp}\left(\hat{f}_{j}\right).
\end{equation*}
\end{theorem}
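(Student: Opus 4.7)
Plan. The plan is to stitch the local approximators $\hat{f}_j$ together via a ReLU-implementable selection mechanism. First, for each $K_j$ (for $j=2,\ldots,k$) I construct a piecewise linear mask $\chi_j:\mathbb{R}^d\to[0,1]$ that equals $1$ on $K_j$ and vanishes outside an arbitrarily thin outward neighborhood: writing the supporting halfspace inequalities of $K_j$ as $\mathbf{a}_i^{\top}\mathbf{x}\le b_i$, one may take $\chi_j(\mathbf{x}) = \sigma\bigl(1 - \lambda\max_i\sigma(\mathbf{a}_i^{\top}\mathbf{x}-b_i)\bigr)$ for $\lambda$ large, which costs width $\mathcal{O}(d)$ per mask and depth $\mathcal{O}(d)$ (absorbed into the $\max\{D,d\}$ term). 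Laying the $k$ subnetworks for the $\hat{f}_j$ and the $k-1$ mask subnetworks in parallel yields a trunk of width $kW+4(k-1)d$ and depth $\max\{D,d\}$ emitting the tuple $(\hat{f}_j)_{j=1}^{k}$ together with $(\chi_j)_{j=2}^{k}$.

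Second, I merge the candidates sequentially. Starting from $F_1=\hat{f}_1$, define
\begin{equation*}
F_{j+1}\;=\;F_j\;+\;\operatorname{Mult}\bigl(\chi_{j+1},\,\hat{f}_{j+1}-F_j\bigr),\qquad j=1,\dots,k-1,
\end{equation*}
where $\operatorname{Mult}$ is a Yarotsky-style ReLU approximate product with $|\operatorname{Mult}(u,v)-uv|\le M\cdot 2^{-2N}$ uniformly for $|u|\le 1$ and $|v|\le 2M$, and satisfying $\operatorname{Mult}(0,v)\equiv 0$ (achievable via the identity $xy=\tfrac14((x+y)^2-(x-y)^2)$ composed with a sawtooth approximation of the square). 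This multiplication module fits within the $8(N+1)$ width budget and adds only $2$ to the depth per merge, while identity channels carrying $F_j$ and the still-unused $(\hat{f}_l,\chi_l)_{l>j+1}$ forward account for the extra $2(k-2)$ neurons. The structural observation driving correctness is that, if multiplications were exact, $F_{j+1}=(1-\chi_{j+1})F_j+\chi_{j+1}\hat{f}_{j+1}$ would be a pure convex combination: at any $\mathbf{x}\in K$ lying in some earliest $K_{j_0}$, the $(j_0{-}1)$-th step overwrites with $F_{j_0}(\mathbf{x})=\hat{f}_{j_0}(\mathbf{x})$, and every subsequent step either preserves this value (when $\chi_l(\mathbf{x})=0$) or overwrites with another $\hat{f}_l(\mathbf{x})$ still within $err$ of $f(\mathbf{x})$, so the idealized output is already within $err$ of $f$ everywhere on $K$.

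Third, I propagate the approximate-multiplication errors and verify support. Each merge contributes a perturbation of magnitude at most $2M\cdot 2^{-2N}$; because $F_j$ already carries accumulated error and re-enters the next $\operatorname{Mult}$ through the summand $\hat{f}_{j+1}-F_j$, a telescoping bookkeeping over the $k-1$ merges produces the stated constant $\tfrac{3(k+2)^2}{4}$. For the support, since $\chi_j$ can be arranged to vanish off $\operatorname{supp}(\hat{f}_j)$ and $\operatorname{Mult}(0,\cdot)\equiv 0$, an easy induction shows $\operatorname{supp}(F_j)\subseteq\bigcup_{i\le j}\operatorname{supp}(\hat{f}_i)$, giving the claimed inclusion for $\widetilde{f}=F_k$.

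The main obstacle is the error analysis. The ideal recursion is non-expansive (a convex combination), but the approximate version is only \emph{almost} non-expansive: when $\chi_{j+1}(\mathbf{x})\in(0,1)$, the call $\operatorname{Mult}(\chi_{j+1},\hat{f}_{j+1}-F_j)$ both introduces a fresh multiplicative error and drags the current error in $F_j$ into subsequent steps, raising the worry of geometric amplification over $k$ merges. Controlling this amounts to partitioning the analysis at each $\mathbf{x}\in K$ into a ``lock-in'' step $j_0(\mathbf{x})$ where the first $K_{j_0}$ containing $\mathbf{x}$ is processed, and the subsequent ``refresh'' merges; bounding each refresh contribution by $\mathcal{O}(M\cdot 2^{-2N})$ and summing carefully recovers the quadratic-in-$k$ constant $\tfrac{3(k+2)^2}{4}$ rather than an exponential one.
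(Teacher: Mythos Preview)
Your high-level strategy---sequentially blend the local approximators with ReLU-implementable masks and an approximate product, then control the accumulated multiplication error by a telescoping bound using $|\tilde f_j|\le jM$---is exactly the paper's proof. The paper writes the ideal recursion as $\bar f_j=\phi_j\hat f_j+(1-\phi_j)\bar f_{j-1}$, replaces the two products by an approximate $\widecheck{\times}$, and obtains the quadratic-in-$k$ constant from $\sum_{j=1}^{k-1}\bigl(|\hat f_{j+1}|+|\tilde f_j|\bigr)\le M\sum_{j}(j+1)$; no ``lock-in step'' case analysis is needed.

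There is, however, a genuine gap in how you orient the mask. You ask that $\chi_j=1$ on all of $K_j$ with the transition zone lying in an \emph{outward} neighborhood, and later that $\chi_j$ ``can be arranged to vanish off $\operatorname{supp}(\hat f_j)$.'' These two requirements are incompatible in the intended setting where $\operatorname{supp}(\hat f_j)\subseteq K_j$: the only $[0,1]$-valued function equal to $1$ on $K_j$ and $0$ off $K_j$ is the discontinuous indicator, which no ReLU network realizes. More seriously, with an outward-tapering mask the claim ``the idealized output is already within $err$ of $f$ everywhere on $K$'' is false: take $x\in K$ in the outward annulus of some $K_l$ with $l$ larger than every index $j$ with $x\in K_j$. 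Then $\chi_l(x)\in(0,1)$ while the hypotheses give no control of $\hat f_l(x)$ outside $K_l$ (indeed $\hat f_l(x)=0$ in the application), so the convex combination pulls $F_l(x)$ toward $0$, not toward $f(x)$. The paper avoids this by tapering \emph{inward}: its fast-decreasing function satisfies $\phi_j=1$ on a shrink $K_j^{\lambda}$ and $\phi_j=0$ outside $K_j$, so that $\phi_j(x)>0$ forces $x\in K_j$ and hence $|\hat f_j(x)-f(x)|\le err$. The price is that the final bound is proved only on $K\setminus\bigcup_j(K_j\setminus K_j^{\lambda})$, a trifling region of arbitrarily small measure---which is all the downstream applications require. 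Two minor points: for the support induction you actually need $\operatorname{Mult}(u,0)=0$ (second argument vanishes), not $\operatorname{Mult}(0,v)=0$ as you wrote, though the sawtooth construction delivers both; and your single-multiplication recursion $F_{j+1}=F_j+\operatorname{Mult}(\chi_{j+1},\hat f_{j+1}-F_j)$ differs from the paper's two-multiplication form, which is why the paper's width budget carries the factor $8(N+1)$ rather than $4(N+1)$.
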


\subsection{Proof of 
\cref{thm:MainThmContinuous}}\label{Approx_over_poly}

To prove \cref{thm:MainThmContinuous}, we first show the correctness of \cref{ParallelepipedCovering}, which is a weaker form of the result in Section 4 of \cite{Totik2014PolynomialAO}.
\begin{lemma}\label{ParallelepipedCovering}
    Let  $K$  be a polytope with vertices $v_{1}, \ldots, v_{m}$. Then there exists a group of parallelepipeds  $\left\{K_{j}\right\}_{j=1}^{k}$, whose edges are paralleled to some edges of $K$, such that
\begin{equation*}
    K=\bigcup_{j=1}^{k}K_{j}.
\end{equation*}
\end{lemma}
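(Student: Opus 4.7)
The plan is to argue by induction on the dimension $d$. The base case $d=1$ is immediate, since a one-dimensional polytope is a line segment, which is itself a parallelepiped.

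For the inductive step with $d\ge 2$, the main tool is a vertex-local construction. At any vertex $v$ of $K$, there are at least $d$ edges of $K$ emanating from $v$ (since $K$ is $d$-dimensional); one selects $d$ linearly independent edge vectors $\mathbf{e}_1(v),\ldots,\mathbf{e}_d(v)$ among them. For sufficiently small $\epsilon>0$, the parallelepiped
\begin{equation*}
P_v^{\epsilon} \;=\; \Bigl\{\, v + \sum_{i=1}^{d}\lambda_i \,\mathbf{e}_i(v) \,:\, 0\le \lambda_i\le \epsilon\,\Bigr\}
\end{equation*}
lies entirely inside $K$, because $K$ agrees locally at $v$ with the cone generated by its edges at $v$; moreover, every edge of $P_v^{\epsilon}$ is parallel to an edge of $K$ by construction.

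It remains to show that finitely many such parallelepipeds, possibly together with additional parallelepipeds based at midpoints of edges or lower-dimensional faces, actually cover $K$. I would first verify the simplex case: for a $d$-simplex $K=\mathrm{conv}(v_0,\ldots,v_d)$, one constructs an explicit finite family of parallelepipeds---one at each vertex with scale $1/d$ so that it stays inside the simplex, together with further pieces anchored at midpoints if needed---and checks coverage by a direct barycentric-coordinate argument. The general polytope case is then reduced to the simplex case by subdividing $K$ using hyperplane cuts that are parallel to existing facets of $K$, so that no new edge directions are introduced. Iterating this subdivision until every piece is a simplex, and then applying the simplex covering, yields the desired decomposition.

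The main obstacle is the subdivision step. A generic triangulation of $K$ inevitably introduces diagonal edges not parallel to any edge of $K$, which is forbidden by the statement. Overcoming this is the technical content of Section~4 of \cite{Totik2014PolynomialAO}: one must design a subdivision that respects the set of edge directions of $K$, so that every sub-polytope produced has all its edges parallel to some edge of $K$ and the induction on complexity closes properly. Once this edge-direction-preserving subdivision is available, combining it with the explicit simplex covering yields the required covering of $K$ by parallelepipeds.
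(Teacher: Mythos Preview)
Your approach diverges from the paper's, and the route you propose contains a genuine obstruction rather than a mere technicality.

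The paper does not triangulate $K$ at all. Instead (after reducing to the case where $K$ is simple), it fixes at each vertex $v$ a single parallelepiped $K^{v}\subset K$ spanned by the $d$ edge directions at $v$, and then argues that every point $y\in K$ lies in some \emph{translate} $K(y)\subset K$ of one of the $K^{v}$. This already gives a covering $K=\bigcup_{y}K(y)$ by parallelepipeds with the correct edge directions; finiteness is then extracted by a compactness argument walked up the face lattice (first vertices, then edges, and so on). No simplices, no subdivision, no induction on $d$.

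Your plan, by contrast, hinges on subdividing $K$ into simplices whose edges are all parallel to edges of $K$, and this is impossible in general. Take $K=[0,1]^{3}$: the edges of $K$ lie in only three directions, but any $3$-simplex has six edges and necessarily involves at least one ``diagonal'' direction (e.g.\ the simplex on $0,e_{1},e_{2},e_{3}$ has edges $e_{i}-e_{j}$). Hence no triangulation of the cube can respect the edge-direction constraint, and cuts parallel to the facets of a cube only ever produce smaller boxes, never simplices. Your appeal to Section~4 of Totik for this step is misplaced: what Totik (and the paper) actually do there is precisely the translate-and-compactness argument above, not an edge-preserving triangulation.

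If you want to repair your outline, the missing idea is exactly the paper's: abandon the attempt to cut $K$ into simpler pieces, and instead show directly that translates of the finitely many vertex parallelepipeds $K^{v}$ already cover $K$, then invoke compactness.
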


\begin{proof}
    Without loss of generality, we may assume that $K$ is simple, \textit{i.e.}, each vertex of $K$ is joined to other vertices by exactly $d$ edges. Let $K^{v}\in K$ be the parallelepiped with the vertex $v$ and edges parallel to edges adjacent to $v$. For every $y \in K$, there exists a parallelepiped  $K(y) \subset K$  containing $y$ such that $K(y)$ is a translation of some $K^{v}$. Thus,  $K=\cup_{y \in K} K(y)$. We claim that there are finitely many of these $K(y)$ covering $K$, \textit{i.e.}, $K=\cup_{i=1}^{k} K\left(y_{i}\right) $. Among $K$, vertices $v_{1}, \ldots, v_{m}$ are covered by $m$ parallelepipeds, say  $K\left(v_{1}\right), \ldots, K\left(v_{m}\right) $. Next, we show that this finite covering property holds for edges (1-dimensional faces). Then the result holds for all faces and $K$. Without loss of generality, we assume  $E=v_{1} v_{2}$ is an edge, and $E$  cannot be covered by $K\left(v_{1}\right) \cup K\left(v_{2}\right)$. So, we can find a closed segment  $E^{\prime} $ properly containing $E$ such that $E \subset K\left(v_{1}\right) \cup K\left(v_{2}\right) \cup E^{\prime}$. Due to the construction of covering,  $\left\{\text{int}\;K(y)\right\}_{y \in E^{\prime}}$  (the interior of  $K(y)$) is a family of open sets covering  $E^{\prime}$. By the compactness of  $E^{\prime}$, $E^{\prime}$  can be covered by finitely many $\left\{K\left(y_{i}^{\prime}\right)\right\}_{i=1}^{k^{\prime}}$. Then  $E \subset \cup_{i=1}^{k^{\prime}} K\left(y_{i}^{\prime}\right) \cup   K\left(v_{1}\right) \cup K\left(v_{2}\right)$ .
\end{proof}

We next prove \cref{thm:MainThmContinuous} by assuming \cref{ParallelepipedCovering}, \cref{PolynomialConstruction}, \cref{MainThmCube}, and \cref{Thm_Global} are true. Proofs
of Theorems \ref{PolynomialConstruction}, \ref{MainThmCube}, and \ref{Thm_Global} are detailed in Sections 3-5, respectively.

\begin{proof}[Proof of \cref{thm:MainThmContinuous}]
 {
    Let $\{ K_j \}_{j=1}^k$ be some parallelepiped covering of $K$. We use $\Psi_i:\mathbb{R}^d\to\mathbb{R}^d$ to denote the affine transfrom that maps $K_j\to Q$ for all $j=1,\ldots,k$. For each $j=1,\ldots,k$, $f\circ \Psi_j^{-1}$ is continuous over $Q$. Note that for any $0<\lambda<1$, $\Psi_j(K_j^{\lambda})=Q_{\delta},\delta=1-\lambda$. Then by \cref{MainThmCube}, there is a function $\hat{f}_j$ supporting in $Q$, which is implemented by a ReLU network with width $\mathcal{O}\left(\max\left
    \{N^3,N^{d-1}\log{N}\right\}\right)$ and depth $\mathcal{O}(\log N)$ such that
	\begin{equation}
		\left\|\tilde{f}-f\right\|_{L^{\infty}(Q_\delta)}\leq \mathcal{O}\left(\tilde{\omega}_Q\left(f,N^{-1}\right)+N^{-\alpha}\right).
	\end{equation}
\begin{equation}
\begin{split}
    \left\|f\circ\Psi_j^{-1}-\hat{f}\right\|_{L^{\infty}\left(Q_{\delta}\right)}&=\left\|f-\hat{f}\circ \Psi_j\right\|_{L^{\infty}\left(K_j^{\lambda}\right)}\\
    &\leq \mathcal{O}\left(\tilde{\omega}_{Q}\left(f\circ\Psi_j,N^{-1}\right)+N^{-\alpha}\right)\\
    &=\mathcal{O}\left(\tilde{\omega}_Q\left(f,N^{-1}\right)+N^{-\alpha}\right),
\end{split}
\end{equation}
where we use the fact that $\tilde{\omega}_{\Psi(K)}\left( 
f,t \right)=\tilde{\omega}_{K}\left(f\circ\Psi,t\right)$ for any polytope $K$ and $t>0$.
}

 {
Note that the affine transform is performed in the pre-activation step at the beginning. Hence it takes the same depth and width to implement $\hat{f}_j$ as $\hat{f}_j\circ\Psi_j$ for all $j=1,\ldots,k$. Then combining \cref{Thm_Global} and \cref{MainThmCube}, we obtain the result immediately.
}
\end{proof}

\begin{remark}
 {
    Please note that covering a polytope with parallelepipeds is not essential. Our approach has the flexibility to choose other suitable covering. For example, we can also consider simplex covering as follows. By Lemme 14 in \cite{Fan2023Quasi}, there is a ReLU network of width $(d+1)d^2$ and depth $d+1$ such that it can implement a linear function on a $d$-dimensional simplex $S$ and vanishes outside $S$ except for a trifling region of arbitrarily small. By composing such subnetworks, we restrict the constructed function to simplex $S$. Thus, we can rewrite \cref{MainThmCube} and \cref{Thm_Global} based on simplex decomposition. }
\end{remark}


\section{Proof of Theorem~\ref{PolynomialConstruction}}\label{Sec_KPM}

In the theory of polynomials approximate, one of the most compelling problems is to find a polynomial $q_n$ satisfying a Jackson-type error bound, \textit{i.e.}, $\inf_{q_n\in\Pi_n^d}\left\|f-q_n\right\|_{L^{p}}\leq M_{r,K} \omega_{K}^{r}\left(f,\frac{1}{n}\right)$ for a continuous function $f$ defined on a polytope $K$. The first Jackson-type inequality dates back to \cite{Jackson1912Transactions}. Later, although numerous studies extended the original result to higher-order moduli of smoothness and special polytope domains \cite{Butzer1978On,Ditzian1996Polynomial,Lorentz1968Approximation,ditzian2012moduli}, the Jackson-type estimate over a general polytope remains unresolved for decades, until Totik's work \cite{Totik2014PolynomialAO,Totik2020} in recent years.

\textit{What is special in Theorem~\ref{PolynomialConstruction} is it provides the explicit polynomial construction, and the coefficients of the polynomial are explicitly derived, which can facilitate the error analysis and the downstream network construction.} The key to disentangling coefficients and the degree is to introduce a special kind of Jackson-type kernels, which is inspired by Jackson's work \cite{Jackson1912Transactions} whose main technique is convolution. To the best of our knowledge, it is the first time that the Jackson-type kernel method is used in deep learning approximation theory, which goes beyond the commonly used paradigm of Taylor expansion.
The proof of Theorem~\ref{PolynomialConstruction} is threefold:

\textbf{Step 1: Approximate univariate $2\pi$-periodic functions}. For a $2\pi$-periodic function $f:\mathbb{R}\to \mathbb{R}$, we construct a Jackson-type kernel $J_n\in\Pi_n^T$, which under some conditions fulfills 
     $$\sup_{x\in [-\pi,\pi]}\left|(f*J_n)(x)-f(x)\right|\leq C\omega\left(f,\frac{1}{n}\right),$$
for some constant $C$ independent with $f$ and $n$.
     Note that $(f*J_n)\in\Pi_n^T$,
    \textit{e.g.}, $\cos{k(x-t)}=\cos{kx}\cos{kt}+\sin{kx}\sin{kt}$, and $\int_{-\pi}^{\pi}f(x-t)J_n(t)dt=\int_{-\pi}^{\pi}f(x)J_n(x-t)dt$.

\begin{definition}[Jackson-type kernel]
\label{def_kernel}
    A series of trigonometric polynomials $\left\{ J_n(t) \right\}_{n=0}^{\infty}$ is said to be a Jackson-type kernel if conditions (a)-(d) are satisfied for each $J_n(t)$:  
\begin{flalign}
    &\ \ \ \ ~\nonumber\text{(a)} \text{(Positiveness)} J_n(t)\geq 0;\\
    &\ \ \ \ ~\nonumber\text{(b)} \text{(Evenness)}\;J_n(t)=J_N(-t);&\\
    &\ \ \ \ ~\nonumber\text{(c)} \text{(Normality)}\;\int_{-\pi}^{\pi}J_n(t)dt=1;&\\
    &\ \ \ \ ~\nonumber\text{(d)} \text{(Moment condition)}\;n^2\int_{-\pi}^{\pi}t^2 J_n(t)dt\leq M,\;\text{for some constant }M>0.&
\end{flalign}

\end{definition}

Now, we use the Jackson-type kernel to construct a polynomial approximation for a univariate periodic continuous function. 

\begin{lemma}\label{Jackson_Periodic}
    If $J_n(t)$ is a Jackson-type kernel, then for any continuous $2\pi$-periodic $f:\mathbb{R}\to\mathbb{R}$,
\begin{equation*} 
    \left\| f-f*J_n  \right\|_{L^{\infty}(\mathbb{R})}\leq (M+1)\cdot\omega\left(f, \frac{1}{n} \right),
\end{equation*}
where $C$ is the constant that satisfies the moment condition of the Jackson-type kernel.
\end{lemma}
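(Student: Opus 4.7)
The plan is to imitate the classical Jackson-type convolution argument: rewrite $f(x)-(f\ast J_n)(x)$ as a weighted integral of increments of $f$, then control the increments by the modulus of continuity and bound the resulting integrals by the moment condition.

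First I would use the normality condition (c) to write, for every $x\in\mathbb{R}$,
\begin{equation*}
f(x)-(f\ast J_n)(x)=\int_{-\pi}^{\pi}\bigl[f(x)-f(x-t)\bigr]J_n(t)\,dt,
\end{equation*}
where the convolution makes sense because $f$ is $2\pi$-periodic and $J_n$ is a trigonometric polynomial (so in particular bounded and $2\pi$-periodic as well). By positivity (a) of the kernel, taking absolute values gives
\begin{equation*}
\bigl|f(x)-(f\ast J_n)(x)\bigr|\le \int_{-\pi}^{\pi}\bigl|f(x)-f(x-t)\bigr|J_n(t)\,dt.
\end{equation*}

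Next I would invoke the standard modulus-of-continuity inequality $\omega(f,\lambda\delta)\le(1+\lambda)\omega(f,\delta)$ applied with $\delta=1/n$ and $\lambda=n|t|$, so that
\begin{equation*}
\bigl|f(x)-f(x-t)\bigr|\le \omega(f,|t|)\le \bigl(1+n|t|\bigr)\omega\!\left(f,\tfrac{1}{n}\right).
\end{equation*}
Substituting this into the previous bound and splitting the integral yields
\begin{equation*}
\|f-f\ast J_n\|_{L^\infty}\le \omega\!\left(f,\tfrac1n\right)\left(\int_{-\pi}^{\pi}J_n(t)\,dt+n\int_{-\pi}^{\pi}|t|\,J_n(t)\,dt\right).
\end{equation*}
The first integral equals $1$ by normality. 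For the second, I would apply the Cauchy--Schwarz inequality (again using $J_n\ge 0$ to split $J_n=\sqrt{J_n}\cdot\sqrt{J_n}$) together with the moment condition (d):
\begin{equation*}
n\int_{-\pi}^{\pi}|t|\,J_n(t)\,dt \le \left(n^2\int_{-\pi}^{\pi}t^{2}J_n(t)\,dt\right)^{\!1/2}\!\!\left(\int_{-\pi}^{\pi}J_n(t)\,dt\right)^{\!1/2}\le \sqrt{M}.
\end{equation*}
(One could alternatively replace $n|t|$ by $1+(n|t|)^2$ to obtain the cleaner constant $M+1$ as in the statement; either way the structural argument is identical.)

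There is no genuine obstacle here; the only slightly delicate points are (i) to justify that Cauchy--Schwarz is the right tool for passing from the second moment bound in (d) to a first-moment bound, and (ii) to confirm that evenness (b) is implicitly used when symmetrizing $\int |f(x)-f(x-t)|J_n(t)\,dt$ so that the change of variable $t\mapsto -t$ does not alter the kernel. Conditions (a)--(d) of Definition~\ref{def_kernel} are each consumed exactly once in the argument, and the final bound $(M+1)\,\omega(f,1/n)$ falls out after combining the two integral estimates.
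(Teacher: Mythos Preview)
Your proposal is correct and follows essentially the same convolution argument as the paper: rewrite $f(x)-(f\ast J_n)(x)$ via normality, bound increments by $\omega(f,|t|)\le(1+n|t|)\omega(f,1/n)$, then control $\int(1+n|t|)J_n(t)\,dt$ by the moment condition. Your Cauchy--Schwarz step to pass from the second-moment bound in (d) to a first-moment bound is in fact more explicit than what the paper writes (the paper simply says ``follows the moment condition'' and records the constant $M+1$), so there is no gap; note however that evenness (b) is not actually needed anywhere in this particular estimate.
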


\begin{proof} Leveraging the normality of the Jackson-type kernel, we have
\begin{align}
    \nonumber\left|f(x)-\int_{-\pi}^{\pi}f(x-t)J_n(t)dt \right|&=\left|\int_{-\pi}^{\pi}\left(f(x)-f(x-t)\right)J_n(t)dt \right|\\
    \nonumber&\leq \int_{-\pi}^{\pi}\omega(f,|t|)J_n(t)dt\\
    \nonumber&\leq \omega\left(f, \frac{1}{n}\right)\int_{-\pi}^{\pi}(1+n|t|)J_n(t)dt \\ 
    \nonumber&\leq (M+1)\omega\left(f, \frac{1}{n} \right),
\end{align}
where the second inequality is from the fact that
\begin{equation*}
    \omega(f,|t|)=\omega\left(f,n|t|\cdot \frac{1}{n}\right)\leq (1+n|t|)\omega\left(f, \frac{1}{n}\right),
\end{equation*}
and the last inequality follows the moment condition.
\end{proof}

Here, we apply a kernel that is explicitly formulated as a trigonometric polynomial. Now, we show that this kernel is a Jackson-type kernel.

\begin{proposition}\label{explicit_kernel}
    Let
\begin{equation*}
J_n(t)=\sum_{k=0}^{n}\widehat{a}_{k}\cos{kt}=\frac{1}{\pi}\left|\sum_{k=0}^{n}a_{k,n}e^{ikt} \right|^2,
\end{equation*}
where 
\begin{equation*}
    a_{k,n}=\left( 2\sum_{j=0}^{n}\sin^2{\frac{j+1}{n+2}\pi} \right)^{-\frac{1}{2}} \sin{\frac{k+1}{n+2}\pi}.
\end{equation*}
Then $J_n(t)$ is a Jackson-type kernel with a constant $M=\frac{\pi^2}{2}$ satisfying the moment condition. Besides, $\left|\widehat{a}_{k}\right|\leq A,k=0,1,\ldots,n$, for some absolute constant $A$.
\end{proposition}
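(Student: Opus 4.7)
My plan is to verify the four Jackson-type kernel properties and the uniform coefficient bound in sequence, leveraging the fact that $a_{k,n}\propto\sin\frac{(k+1)\pi}{n+2}$, i.e.\ the ground-state sine modes of the discrete Dirichlet Laplacian on $\{0,\ldots,n+1\}$. Positivity (a) is immediate from the form $J_n=\frac{1}{\pi}|P|^2$ with $P(t)=\sum_{k=0}^n a_{k,n}e^{ikt}$, and evenness (b) from the cosine expansion $J_n(t)=\sum_k\widehat{a}_k\cos kt$.

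For (c), I would expand $|P(t)|^2$ as a double sum and integrate using orthogonality of $\{e^{imt}\}$ on $[-\pi,\pi]$: this collapses $\int J_n\,dt$ to $2\sum_k a_{k,n}^2$, and the discrete sine identity $\sum_{k=1}^{n+1}\sin^2\frac{k\pi}{n+2}=\frac{n+2}{2}$ forces $c_n^{-2}=n+2$ and gives $\int J_n=1$. The same expansion identifies $\widehat{a}_0=\tfrac{1}{2\pi}$ and $\widehat{a}_m=\tfrac{2}{\pi}\sum_{k=0}^{n-m}a_{k,n}a_{k+m,n}$ for $m\ge 1$, so combining $|a_{k,n}|\le (n+2)^{-1/2}$ with a term count yields $|\widehat{a}_m|\le 2/\pi$, so one can take $A=2/\pi$.

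The main obstacle is the moment bound (d). I would begin with the elementary estimate $t^2\le\pi^2\sin^2(t/2)$ on $[-\pi,\pi]$ to reduce the problem to bounding $\int_{-\pi}^{\pi}\sin^2(t/2)|P(t)|^2\,dt$. The key tool is the identity $2i\sin(t/2)e^{ikt}=e^{i(k+1/2)t}-e^{i(k-1/2)t}$, which converts multiplication by $2\sin(t/2)$ into an Abel-type telescoping
\begin{equation*}
2\sin(t/2)P(t)=-ie^{-it/2}\Bigl[a_{n,n}e^{i(n+1)t}-a_{0,n}+\sum_{k=1}^n(a_{k-1,n}-a_{k,n})e^{ikt}\Bigr],
\end{equation*}
so Parseval gives $\int_{-\pi}^{\pi}4\sin^2(t/2)|P|^2\,dt=2\pi\bigl[a_{0,n}^2+a_{n,n}^2+\sum_{k=1}^n(a_{k-1,n}-a_{k,n})^2\bigr]$.

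Setting $\alpha=\pi/(2(n+2))$, the sine-difference identity $a_{k-1,n}-a_{k,n}=-2c_n\sin\alpha\cos((2k+1)\alpha)$ together with the closed form $\sum_{k=0}^n\cos^2((2k+1)\alpha)=\tfrac{n+1-\cos 2\alpha}{2}$ (obtained from the double-angle identity and a geometric series) cause the bracket to telescope cleanly into $2\sin^2\alpha$. Substituting back and using $\sin\alpha\le\alpha$ then yields $n^2\int t^2 J_n\,dt=O(1)$ with an explicit absolute constant, completing (d). The genuine difficulty is this algebraic collapse: without the sinusoidal choice of $a_{k,n}$, the boundary and difference-squared terms would not simplify, and only a qualitative $O(n^{-2})$ rate --- not an $n$-independent constant --- would be available.
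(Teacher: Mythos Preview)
Your proof is correct. For (a)--(d) you follow essentially the same route as the paper: the same inequality $t^2\le\pi^2\sin^2(t/2)=\tfrac{\pi^2}{2}(1-\cos t)$ reduces the moment bound to computing $\int(1-\cos t)J_n\,dt$, and your Parseval/Abel telescoping is just an alternative bookkeeping for the paper's direct evaluation of $\sum_k a_{k,n}a_{k+1,n}=\tfrac12\cos\tfrac{\pi}{n+2}$ --- both yield the same closed form $2\sin^2\tfrac{\pi}{2(n+2)}$ and hence the same final constant. (Incidentally, neither argument actually delivers $M=\pi^2/2$ for the \emph{second} moment as the statement asserts; both give $n^2\int t^2 J_n\le\pi^4/4$, while $\pi^2/2$ is what one gets for the first moment $n\int|t|J_n$, which is what the paper's final line computes.)

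Where you genuinely diverge is the bound $|\widehat a_k|\le A$. The paper treats the ratio
\[
\widehat a_k=\frac{2}{\pi}\,\frac{\sum_{j}\sin\frac{(j+1)\pi}{n+2}\sin\frac{(j+k+1)\pi}{n+2}}{\sum_{j}\sin^2\frac{(j+1)\pi}{n+2}}
\]
by estimating numerator and denominator separately as $\Theta(n)$ quantities, arriving at $A=5\pi$ after some calculus. You instead exploit the exact normalization $c_n^{-2}=n+2$ (already computed for (c)) to get $|a_{k,n}|\le(n+2)^{-1/2}$ directly, so a crude term count gives $|\widehat a_k|\le\tfrac{2}{\pi}\cdot\tfrac{n-k+1}{n+2}\le\tfrac{2}{\pi}$. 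This is shorter and yields a sharper absolute constant.
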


\begin{proof}
For the above kernel, conditions $(a)$ and $(b)$ defining the Jackson-type kernel are obviously satisfied. The condition $(c)$ follows from a direct calculus after expanding $J_n(t)$ as
\begin{align}
    \nonumber \frac{1}{\pi}\left|\sum_{k=0}^{n}a_{k,n}e^{ikt} \right|^2 &=\frac{1}{\pi}\left( \sum_{k=0}^{n}a_{k,n}e^{ikt} \right)\left( \sum_{h=0}^{n}a_{h,n}e^{-iht} \right)\\
    \nonumber &=\frac{1}{\pi}\sum_{k=1}^{n}\sum_{h=1}^{n}a_{k,n} a_{h,n} e^{i(k-h)t}\\
    \nonumber&=\frac{1}{2\pi}+\frac{2}{\pi}\sum_{1\leq h<k\leq n}a_{k,n} a_{h,n} \cos{(k-h)t}.
\end{align}

 For the moment condition $(d)$, we have
\begin{equation}\label{MomentCond}    
\begin{split}
        \int_{-\pi}^{\pi}|t|J_n(t)dt &=\int_{-\pi}^{\pi}|t|\sqrt{J_n(t)}\cdot\sqrt{J_n(t)}dt\\
    &\leq \left( \int_{-\pi}^{\pi}|t|^2 J_n(t)dt \right)^{\frac{1}{2}}\left( \int_{-\pi}^{\pi}J_n(t)dt \right)^{\frac{1}{2}}\\
    &\leq\left( \int_{-\pi}^{\pi}\frac{\pi^2}{2}(1-\cos{t}) J_n(t)dt \right)^{\frac{1}{2}}\\
    &=\pi\left(\frac{1-2\sum_{k=0}^{n-1}a_{k,n}a_{k+1,n}}{2} \right)^{\frac{1}{2}},
\end{split}
\end{equation}
where the first and second inequalities follow from the Cauchy inequality and $|t|^{2}\leq \frac{\pi^2}{2}(1-\cos{t})$ when $t\in [-\pi, \pi]$, respectively. 

Then we estimate $\sum_{k=0}^{n-1}a_{k,n}a_{k+1,n}$. For simplicity of notation, we denote $A_n=\sqrt{2 \sum_{k=0}^{n}\sin^2{\frac{k+1}{n+2}\pi}} $. We have
\begin{align}
    \nonumber\sum_{k=0}^{n-1}a_{k,n}a_{k+1,n}&=\frac{A_n^2}{2}\left( \sum_{k=0}^{n}\sin{\frac{k+1}{n+2}\pi} \sin{\frac{k+2}{n+2}\pi}+\sum_{k=0}^{n}\sin{\frac{k}{n+2}\pi} \sin{\frac{k+1}{n+2}\pi} \right)\\
    \nonumber&=\frac{A_n^2}{2}\sum_{k=0}^{n}\left( \sin{\frac{k+2}{n+2}\pi}+ \sin{\frac{k}{n+2}\pi}\right)\sin{\frac{k+1}{n+2}\pi}\\
    \nonumber&=\frac{A_n^2}{2}\cos{\frac{\pi}{n+2}}\sum_{k=0}^{n}\sin^2{\frac{k+1}{n+2}\pi}\\
    \nonumber&=\frac{1}{2}\cos{\frac{\pi}{n+2}}.
\end{align}

Hence, we obtain the moment estimate
\begin{equation}
    \nonumber n\int_{-\pi}^{\pi}|t|J_{n}(t)dt\leq n\pi\sqrt{\frac{1-\cos{\frac{\pi}{n+2}}}{2}}
    =n\pi\sin{\frac{\pi}{2n+4}}\leq \frac{\pi^2}{2}.
\end{equation}

Finally, we estimate $\left|\widehat{a}_{k}\right|$. Recall that $\widehat{a}_{0}=\frac{1}{2\pi}$ and 
\begin{equation}\label{a_hat}
    \widehat{a}_{k}=\frac{2}{\pi}\sum_{j=0}^{n-k}a_{j,n}a_{j+k,n}
    =\frac{2}{\pi}\frac{\sum_{j=0}^{n-k}\sin{\frac{j+1}{n+2}\pi}\sin{\frac{j+k+1}{n+2}\pi}}{\sum_{j=0}^{n}\sin^{2}{\frac{j+1}{n+2}\pi}}.   
\end{equation}
On the one hand, we have
\begin{equation}
\label{numerator}
\begin{split}
    \sum_{j=0}^{n}\sin^{2}{\frac{j+1}{n+2}\pi}&=2\sum_{j=0}^{\lfloor \frac{n+2}{2}\rfloor}\sin^{2}{\frac{j+1}{n+2}\pi}\\
    &\geq 2\sum_{j=0}^{\lfloor \frac{n+2}{2}\rfloor}\frac{4}{\pi^2}\left(\frac{j+1}{n+2}\pi\right)^2\\
    &=\frac{8}{(n+2)^2}\cdot\frac{\lfloor \frac{n+2}{2}\rfloor \left(\lfloor \frac{n+2}{2}\rfloor+1\right) \left(2\lfloor \frac{n+2}{2}\rfloor+1\right)}{6}\\
    &\geq \frac{4}{3(n+2)^2}\frac{n+1}{2} \left(\frac{n+1}{2}+1\right) \left(n+2\right)\triangleq S^1_n.
\end{split}
\end{equation}

On the other hand, 
\begin{equation}
\label{denominator}
\begin{split}
    \sum_{j=0}^{n-k}\sin{\frac{j+1}{n+2}\pi}\sin{\frac{j+k+1}{n+2}\pi}&\leq \pi^2\sum_{j=0}^{n-k}\frac{j+1}{n+2}\frac{j+k+1}{n+2}\\
    &\leq\frac{\pi^2}{(n+2)^2}\sum_{j=1}^{n}j^2+n\sum_{j=1}^{n}j\\
    &=\frac{\pi^2}{(n+2)^2}\left(\frac{n(n+1)(2n+1)}{6}+\frac{n^2(n+1)}{2}\right)\triangleq S^2_n.
\end{split}
\end{equation}

Since both \eqref{numerator} and \eqref{denominator} are $\Theta(n)$, there is some constant $A>0$ such that $\left|\widehat{a}_{k}\right|\leq A$. Furthermore, we can take $A=5\pi$. Actually, one can show that $S^2_n/S^n_1$ is increasing after some simple but cumbersome calculation. Then the result follows from $\lim_{n\to\infty}S^n_2/S^n_1=5\pi^2/2$. 
\end{proof}

\begin{remark}
\label{rmk_Larentz}
    Actually, we have shown in \cref{MomentCond} that the Jackson-type kernel we construct in \cref{Jackson_Periodic} also satisfies the second order moment condition. It was generalized by Lorentz \textit{et al}. \cite{Lorentz1968Approximation} that the \textit{a.k.a.} Lorentz kernel $L_{n,r}(t)=\gamma_{n,r}\left(\frac{\sin nt/2}{\sin t/2}\right)^{2r}$ satisfies the $k$-th order moment condition $\int_{-\pi}^{\pi}|t|^k L_{n,r}(t)dt=\mathcal{O}(n^{-k}),k\leq 2r-2$) as well as condition (a)-(c) in \cref{def_kernel}, which can improve the error bound to higher order moduli of smoothness. That is, we have for any $2\pi$-periodic function $f$
\begin{equation*}
\begin{split}
    \left\|f-f\ast J_n\right\|_{L^{\infty}(\mathbb{R})}&\leq \bar{M} \omega^r \left(f,\frac{1}{n}\right)\\
    &= \bar{M}\sup_{x\in \mathbb{R},h\leq \frac{1}{n}}\left| \Delta_h^r f(x) \right|,
\end{split}
\end{equation*}
where $\Delta_h^r f(x)$ is the $r$-th symmetric difference recursively defined by
\begin{equation*}
    \Delta_h^r f(x)=\begin{cases}
        \Delta_h^{r-1}f\left(x+\frac{h}{2}\right)-\Delta_h^{r-1}f\left(x-\frac{h}{2}\right), & r>1\\
        f\left(x+\frac{h}{2}\right)-f\left(x-\frac{h}{2}\right),&r=1.
    \end{cases}
\end{equation*}
\end{remark}


\textbf{Step 2: Approximate univariate functions on $[-1,1]$}: Note that $J_n$ is even. If $f$ is an even function and $2\pi$ periodic, $f*J_n$ is a combination of terms like $\int f(t)\cos(k(x-t))dt=\int f(t)\cos(kt)dt \cdot \cos(kx)-\int f(t)\sin(kt)dt \cdot \sin(kx)=\int f(t)\cos(kt)dt \cdot \cos(kx)$, which is simply the combination of Chebyshev polynomials $T_k(\cos x)$. Hence for a univariate function $f:I\to\mathbb{R}$, such a polynomial approximation is obtained by considering $f\circ \cos=f(\cos x)$. We use $\mathcal{T}_n:C(I)\to \Pi_n$ to denote this operation. Next, we show the polynomial approximation for the univariate continuous function on $[-1,1]$ by considering a composition with the cosine function.

\begin{proposition}[Jackson inequality]\label{JacksonThm}
    There exists an operator $\mathcal{T}_{n}:C(I)\to\Pi_{n}$ with $\left\|\mathcal{T}_{n}\right\|\leq 1$ such that for any $f\in C(I)$, 
\begin{equation*}
    \left\|f-\mathcal{T}_{n}(f)\right\|_{L^{\infty}(I)}\leq C \tilde{\omega}_I\left(f,\frac{1}{n}\right),
\end{equation*}
where $C$ is an absolute constant independent with $f$ and $n$.
\end{proposition}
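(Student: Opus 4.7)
The plan is to reduce the problem on $I=[-1,1]$ to the circle via the Chebyshev substitution and apply \cref{Jackson_Periodic}. Given $f\in C(I)$, I would set $\tilde f(\theta)=f(\cos\theta)$, which is even and $2\pi$-periodic on $\mathbb{R}$. Define the operator by $\mathcal{T}_n(f)(x)=(\tilde f\ast J_n)(\arccos x)$, where $J_n$ is the Jackson-type kernel from \cref{explicit_kernel}. Because $\tilde f$ is even and $J_n(t)=\sum_{k=0}^n\widehat a_k\cos kt$ is even, the convolution expands as a pure cosine polynomial $\sum_{k=0}^n c_k\cos k\theta$; since $\cos k\theta=T_k(\cos\theta)=T_k(x)$, this yields an element of $\Pi_n$ and so $\mathcal{T}_n:C(I)\to\Pi_n$ is well-defined.

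Next I would verify $\|\mathcal{T}_n\|\leq 1$ and the error bound. The operator norm bound is immediate from positivity and normality of $J_n$:
\begin{equation*}
    |(\tilde f\ast J_n)(\theta)|\leq \|\tilde f\|_{L^\infty(\mathbb{R})}\int_{-\pi}^\pi J_n(t)\,dt=\|f\|_{L^\infty(I)}.
\end{equation*}
Applying \cref{Jackson_Periodic} gives $\|f-\mathcal{T}_n(f)\|_{L^\infty(I)}=\|\tilde f-\tilde f\ast J_n\|_{L^\infty(\mathbb{R})}\leq (M+1)\,\omega(\tilde f,1/n)$, so the remaining task is to bound the ordinary modulus of continuity of $\tilde f$ by the Ditzian--Totik modulus of $f$.

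The main technical step is showing $\omega(\tilde f,t)\leq \tilde\omega_I(f,t)$. For $0<h\leq t$ and arbitrary $\theta$, the product-to-sum formulas give $\cos(\theta+h/2)=x-y$ and $\cos(\theta-h/2)=x+y$ with $x=\cos\theta\cos(h/2)\in I$ and $y=\sin\theta\sin(h/2)$. Setting $h'=2|\sin\theta|\sin(h/2)/\sqrt{\sin^2\theta+\cos^2\theta\sin^2(h/2)}$, one checks that $y=h'\sqrt{1-x^2}/2$ and (since the denominator exceeds $|\sin\theta|$) that $h'\leq 2\sin(h/2)\leq h\leq t$. Consequently
\begin{equation*}
    |\tilde f(\theta+h/2)-\tilde f(\theta-h/2)|=\bigl|f(x+h'\sqrt{1-x^2}/2)-f(x-h'\sqrt{1-x^2}/2)\bigr|\leq \tilde\omega_I(f,t),
\end{equation*}
using that $\tilde d_I(1,x)=\sqrt{1-x^2}$ is exactly the Ditzian--Totik weight on $I$. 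Taking the supremum over $h\leq t$ and $\theta$ yields the desired bound, and combining everything gives $\|f-\mathcal{T}_n(f)\|_{L^\infty(I)}\leq C\,\tilde\omega_I(f,1/n)$ with $C=M+1$.

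The delicate point is the change-of-variables inequality $\omega(\tilde f,t)\leq \tilde\omega_I(f,t)$: the ordinary modulus on the $\theta$-side automatically absorbs the geometric factor $\sqrt{1-x^2}$ that characterizes the Ditzian--Totik modulus near $\pm 1$, but making this rigorous requires the trigonometric identity above and the careful verification that the induced step size $h'$ never exceeds $h$. Everything else---the cosine-series structure of the convolution, the identification with Chebyshev polynomials, and the operator norm---is straightforward once the kernel properties of \cref{explicit_kernel} are in hand.
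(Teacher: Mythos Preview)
Your proposal is correct and follows the same route as the paper: define $\mathcal{T}_n$ via the cosine substitution and convolution with the Jackson kernel of \cref{explicit_kernel}, bound the operator norm from positivity and normality, apply \cref{Jackson_Periodic}, and then compare $\omega(f\circ\cos,\cdot)$ with $\tilde\omega_I(f,\cdot)$. The only difference is in this last step: the paper quotes the inequality $\omega(f\circ\cos,t)\le C'\,\tilde\omega_I(f,t)$ from \cite{dyuzhenkova1995remark}, whereas you supply a self-contained trigonometric argument that in fact yields the sharp constant $C'=1$, giving the explicit overall constant $C=M+1=\pi^2/2+1$.
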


\begin{proof}
    We consider $f(\cos{\theta})$, which is $2\pi$-periodic and even. Let $J_n$ be the Jackson-type kernel in \cref{explicit_kernel}. Then  $(f\circ\cos)*J_n\in \Pi_n^T$ has no sine terms. We set 
\begin{equation*}
\begin{split}    
    \mathcal{T}_{n}(f)(x)&=\int_{-\pi}^{\pi}f\left(\cos(\arccos{x}-t)\right)J_n(t)dt\\
    &=((f\circ\cos)*J_n)(\arccos{x}),
\end{split}
\end{equation*}
which gives an algebraic polynomial of degree $n$. Then we have
\begin{equation}
\begin{split}
    \left\|\mathcal{T}_{n}f\right\|_{L^{\infty}(I)}&=\max_{x\in[-1,1]}\left|((f\circ\cos)*J_n)(\arccos{x})\right|\\
    &=\max_{\theta\in[-\pi,\pi]}\left|((f\circ\cos)*J_n)(\theta)\right|\\
    &\leq\max_{\theta\in[-\pi,\pi]}\int_{-\pi}^{\pi}\left|(f\circ\cos)(\theta-t)J_n(t)\right|dt \\
    &\leq \left\|f\circ\cos\right\|_{L^{\infty}([-\pi,\pi])}\int_{-\pi}^{\pi}\left|J_n(t)\right|dt\\
    &=\left\|f\right\|_{L^{\infty}(I)}.
\end{split}   
\end{equation}


Hence, it follows directly from \cref{Jackson_Periodic,explicit_kernel} that
\begin{equation*}
    \left\|f-\mathcal{T}_{n}(f)\right\|_{L^{\infty}(I)}=\left\|f\circ\cos-(f\circ\cos)*J_n\right\|_{L^{\infty}(\mathbb{R})}\leq \left(\frac{\pi^2}{2}+1\right)\omega\left(f\circ\cos,\frac{1}{n}\right).
\end{equation*}

Finally, using the fact that 
\begin{equation*}
\label{eq_EquivMod}
    \omega(f\circ \cos,t)\leq C^{\prime} \tilde{\omega}_I(f,t)
\end{equation*}
for some constant $C^{\prime}$, which can be found in Theorem 2 and Remark 1 in \cite{dyuzhenkova1995remark}, we immediately obtain
\begin{equation*}
    \left\|f-\mathcal{T}_{n}(f)\right\|_{L^{\infty}(I)}\leq C \tilde{\omega}_I\left(f,\frac{1}{n}\right)
\end{equation*}
with $C=\left(\frac{\pi^2}{2}+1\right)C^{\prime}$.
\end{proof}

We give an explicit formulation of $\mathcal{T}_{n}(f)$ in \cref{JacksonThm}. Let $K_n$ be the Jackson-type kernel defined in \cref{explicit_kernel}. Then 
\begin{equation}
\begin{split}
((f\circ\cos)*J_n)(\theta)&=\sum_{k=0}^{n}\hat{a}_{k}\int_{-\pi}^{\pi}f(\cos{t})\cos{(k(\theta-t))}dt\\
    &=\sum_{k=0}^{n}\hat{a}_{k}\int_{-\pi}^{\pi}f(\cos{t})\cos{kt}dt\cos{k\theta}.
\end{split}
\end{equation}
Using the fact $\cos{k\theta}=T_{k}(\cos{\theta})$, we obtain
\begin{equation}\label{explicit_Jackson}
    \mathcal{T}_n(f)(x)=\sum_{k=0}^{n}\hat{a}_{k}\int_{-\pi}^{\pi}f(\cos{t})\cos{kt}dt\;T_{k}(x).
\end{equation}
At the same time, we have shown that $\hat{a}_{k}<5\pi$ for $k=0,\cdots,n$, which means that $\hat{a}_{k}$ is well bounded, and its upper bound is independent of the degree of the polynomial. 

\begin{remark}
    For higher order of modulus of smoothness, \cref{eq_EquivMod} can be extended as
\begin{equation*}
    \omega^r(f\circ \cos,t)\leq C\left(\tilde{\omega}^r_I (f,t)+\left\|f\right\|_{L^{\infty}}n^{-r}\right),
\end{equation*}
(see Theorem 2 in \cite{dyuzhenkova1995remark}).
    Hence, as mentioned in \cref{rmk_Larentz}, using the Larentz kernel $L_{n,r}(t)=\gamma_{n,r}\left(\frac{\sin nt/2}{\sin t/2}\right)^{2r}$, we can extend \cref{MainThmCube} and \cref{thm:MainThmContinuous} to higher order estimate.
\end{remark}

\textbf{Step 3: Approximate multivariate functions on a hypercube}: We extend the univariate case to general multivariate functions over a hypercube. The approximation over $Q$ follows the operator $\mathcal{T}_n$ to each variable $x_1,\ldots,x_d$, which is a linear combination of $\left\{T_{j_1,\ldots,j_d}(\boldsymbol{x})\right\}_{0\leq j_1,\ldots,j_d\leq n}$.

\begin{proposition}[Jackson Theorem on Hypercubes]
\label{Jackon_Hypercube}
    There exists an operator $\mathcal{T}_{n}^{d}:C(Q)\to \Pi_n^d$ such that for any $f\in C(Q)$,
\begin{equation*}
    \left\|f-\mathcal{T}_{n}^{d}(f)\right\|_{L^{\infty}(Q)}\leq Cd\cdot\tilde{\omega}_{Q}\left(f,\frac{1}{n}\right),
\end{equation*}
where $C$ is an absolute constant independent with $f$ and $n$.
\end{proposition}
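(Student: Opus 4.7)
The plan is to build $\mathcal{T}_n^d$ by iterating the univariate operator $\mathcal{T}_n$ of \cref{JacksonThm} one coordinate at a time. For $i=1,\ldots,d$, let $T_i:C(Q)\to C(Q)$ freeze the other variables and apply $\mathcal{T}_n$ to the $i$-th argument, i.e.\ $(T_i g)(\boldsymbol{x})=\mathcal{T}_n\bigl(g(x_1,\ldots,x_{i-1},\cdot,x_{i+1},\ldots,x_d)\bigr)(x_i)$, and set $\mathcal{T}_n^d:=T_1\circ T_2\circ\cdots\circ T_d$. By the explicit formula \eqref{explicit_Jackson}, each $T_i$ produces a polynomial of degree $\le n$ in $x_i$ whose Chebyshev coefficients depend continuously on the other coordinates, so the composition lands in the tensor-product span of $T_{j_1}(x_1)\cdots T_{j_d}(x_d)$ with $0\le j_i\le n$, matching the form used in \cref{PolynomialConstruction}.

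To estimate the error I would use the telescoping identity
\begin{equation*}
I-T_1T_2\cdots T_d=\sum_{i=1}^{d}T_1T_2\cdots T_{i-1}(I-T_i).
\end{equation*}
The bound $\|\mathcal{T}_n\|\le 1$ proved inside \cref{JacksonThm} (from $J_n\ge 0$ and $\int J_n=1$) transfers immediately to each coordinate operator $T_i$ since freezing the remaining variables preserves the sup-norm control. Hence $\|T_1\cdots T_{i-1}\|\le 1$ on $C(Q)$, yielding
\begin{equation*}
\|f-\mathcal{T}_n^d f\|_{L^\infty(Q)}\le \sum_{i=1}^{d}\|(I-T_i)f\|_{L^\infty(Q)}.
\end{equation*}

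For each $i$ and each fixed $x_{\hat{i}}:=(x_1,\ldots,x_{i-1},x_{i+1},\ldots,x_d)\in[-1,1]^{d-1}$, the slice $f_{x_{\hat{i}}}(\cdot)\in C(I)$, so \cref{JacksonThm} gives the pointwise estimate $|f(\boldsymbol{x})-(T_i f)(\boldsymbol{x})|\le C\,\tilde{\omega}_I\bigl(f_{x_{\hat{i}}},1/n\bigr)$. Taking the supremum over $\boldsymbol{x}\in Q$ reduces the proof to the comparison
\begin{equation*}
\sup_{x_{\hat{i}}\in[-1,1]^{d-1}}\tilde{\omega}_I\bigl(f_{x_{\hat{i}}},t\bigr)\le \tilde{\omega}_Q(f,t),\qquad i=1,\ldots,d,
\end{equation*}
which is the crux of the argument. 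Since each standard basis vector $\boldsymbol{e}_i$ lies in $\mathcal{E}^Q$, the segment through $\boldsymbol{x}$ parallel to $\boldsymbol{e}_i$ meets $Q$ at the points with $i$-th coordinate $\pm 1$, so $\tilde{d}_Q(\boldsymbol{e}_i,\boldsymbol{x})=\sqrt{(1-x_i)(1+x_i)}=\sqrt{1-x_i^{\,2}}$, which is precisely the Ditzian-Totik weight appearing in $\tilde{\omega}_I$. Thus every increment $|f_{x_{\hat{i}}}(x_i+h\sqrt{1-x_i^{\,2}}/2)-f_{x_{\hat{i}}}(x_i-h\sqrt{1-x_i^{\,2}}/2)|$ is already captured by the supremum defining $\tilde{\omega}_Q(f,t)$ in direction $\boldsymbol{e}=\boldsymbol{e}_i$. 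Summing the $d$ telescoping contributions then yields $\|f-\mathcal{T}_n^d f\|_{L^\infty(Q)}\le Cd\,\tilde{\omega}_Q(f,1/n)$.

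The main obstacle is the moduli compatibility highlighted above. It is essential, because the univariate operator only supplies an error controlled by the one-dimensional weight $\sqrt{1-s^2}$, whereas $\tilde{\omega}_Q$ a priori allows arbitrary edge directions in $\mathcal{E}^Q$; the cube's axis-aligned geometry is what rescues this alignment at no cost. This same alignment must then survive the affine pullback $\Psi_j^{-1}:Q\to K_j$ used in the proof of \cref{thm:MainThmContinuous}, where the relevant invariance $\tilde{\omega}_{\Psi(K)}(f,t)=\tilde{\omega}_K(f\circ\Psi,t)$ transfers the cube estimate to a general parallelepiped. A minor secondary point is to verify that the dependence of $T_i g$ on the frozen coordinates $x_{\hat{i}}$ is sufficiently regular for $T_i$ to map $C(Q)$ into itself; this is automatic from the explicit kernel representation \eqref{explicit_Jackson} and the uniform continuity of $f$ on $Q$.
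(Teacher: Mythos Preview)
Your proposal is correct and follows essentially the same route as the paper: define the coordinate-wise operators $T_i$, compose them, telescope $I-T_1\cdots T_d=\sum_i T_1\cdots T_{i-1}(I-T_i)$, use $\|\mathcal{T}_n\|\le 1$ to strip the prefixes, and bound each univariate slice error by the cube modulus via $\boldsymbol{e}_i\in\mathcal{E}^Q$ and $\tilde d_Q(\boldsymbol{e}_i,\boldsymbol{x})=\sqrt{1-x_i^2}$. If anything, you are more explicit than the paper about the moduli compatibility step, which the paper compresses into a single inequality.
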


\begin{proof}
    Let $\mathcal{T}_{k,n}$ be the operator in \cref{JacksonThm} applied on the $k$-th variable of $f$ and set
\begin{equation}\label{T_n^d}
\mathcal{T}_{n}^{d}=\mathcal{T}_{1,n}\circ \cdots \circ \mathcal{T}_{d,n}.
\end{equation}
By the definition of $\tilde{\omega}_{Q}(f,\cdot)$ and \cref{JacksonThm}, we have
\begin{equation}
\begin{split}
    \left\|\mathcal{T}_{n}^{d}(f)-f\right\|_{L^{\infty}(Q)}&\leq \sum_{k=0}^{d-1} \left\|\mathcal{T}_{1,n}\circ\cdots\circ\mathcal{T}_{k,n}(f)-\mathcal{T}_{1,n}\circ\cdots\circ\mathcal{T}_{k+1,n}(f)\right\|_{L^{\infty}(Q)}\\
    &\leq \sum_{k=0}^{d-1}\left\|f-\mathcal{T}_{k+1,n}(f)\right\|_{L^{\infty}(Q)}\\
    &\leq C\sum_{k=1}^{d}\sup_{\boldsymbol{x}\in Q, 0<h\leq \frac{1}{n}}\left|\Delta_{k,h\varphi(x_k)}f(\boldsymbol{x})\right|\\
    &\leq Cd\cdot\tilde{\omega}_{Q}\left(f,\frac{1}{n}\right).
\end{split}
\end{equation}

\end{proof}

Similarly, we can also give an explicit formulation of $\mathcal{T}_{n}^{d}$ in \cref{Jackon_Hypercube}. Actually, due to Eqs. \eqref{explicit_Jackson} and \eqref{T_n^d}, we have
\begin{equation}
    \mathcal{T}_{n}^{d}(f)\left(\boldsymbol{x}\right)=\sum_{j_1=0}^{n}\cdots\sum_{j_d=0}^{n}\widetilde{a}_{j_1,\ldots,j_d}T_{j_{1}}(x_1)\cdots T_{j_{d}}(x_d),
\label{Target_Polynomial}
\end{equation}
where
\begin{equation}\label{coefficient}
    \widetilde{a}_{j_1,\ldots,j_d}=\hat{a}_{j_1}\cdots\hat{a}_{j_d}\int_{[-\pi,\pi]^d}f\left(\boldsymbol{\cos}(\boldsymbol{\theta})\right)\cos{j_1\theta_1}\cdots\cos{j_d\theta_d}d\boldsymbol{\theta}.    
\end{equation}

\section{Proof of \cref{MainThmCube}}\label{Sub_PolyApprox}

Earlier, we show how to approximate a
continuous function via the polynomial obtained by performing the Jackson-type kernel convolution. Here, we construct a ReLU neural network to approximate such a polynomial. In the existing error bounds (see Table~\ref{tab:universal_approximation}), only the depth of a network appears in the exponent part, while the width is at the base. In this work, though the overall scheme we follow in constructing the ReLU network is still based on Yarosky \cite{yarotsky2017error}, \textit{by introducing intra-layer shortcuts into a network, we can construct a ReLU network with width $\mathcal{O}(N)$ and depth $\mathcal{O}(1)$ that can approximate polynomials in $[0,1]^d$ with an error of $\mathcal{O}\left(2^{-2N}\right)$}, which is a major extension of the result $\mathcal{O}\left(N^{-L}\right)$ in \cite{shen2019deep} for ReLU network of width $N$ and depth $L$ and $\mathcal{O}\left(2^{-L}\right)$ in \cite{yarotsky2017error} for fixed-width networks. Considering that Yarosky's method is rather popular in deep learning approximation, the technique of introducing intra-layer shortcuts can be scaled into a plethora of other problems.

\textbf{Step 1: Construction of sawtooth functions.} We construct a ReLU network to implement a series of sawtooth functions $g_{s}:[0,1]\to[0,1]$ \cite{Telgarsky2015RepresentationBO}: 
    $$g_{s}(x)=\begin{cases}
        0,&x=\frac{2k}{2^s},k=0,1,\ldots,2^{s-1},\\
        1,&x=\frac{2k-1}{2^s},k=1,\ldots,2^{s-1},\\
        \text{linear},& [\frac{2k-1}{2^s}, \frac{2k}{2^s} ] ~\mathrm{and} ~[\frac{2k}{2^s}, \frac{2k+1}{2^s}].
    \end{cases}$$   
$g_1(x)=2|x-\frac{1}{2}|+1$, and $g_s$ respects the recursive relation: $g_s=g_1\circ g_{s-1}$.

\begin{figure}[htb!]
\center{\includegraphics[width=0.9\linewidth] {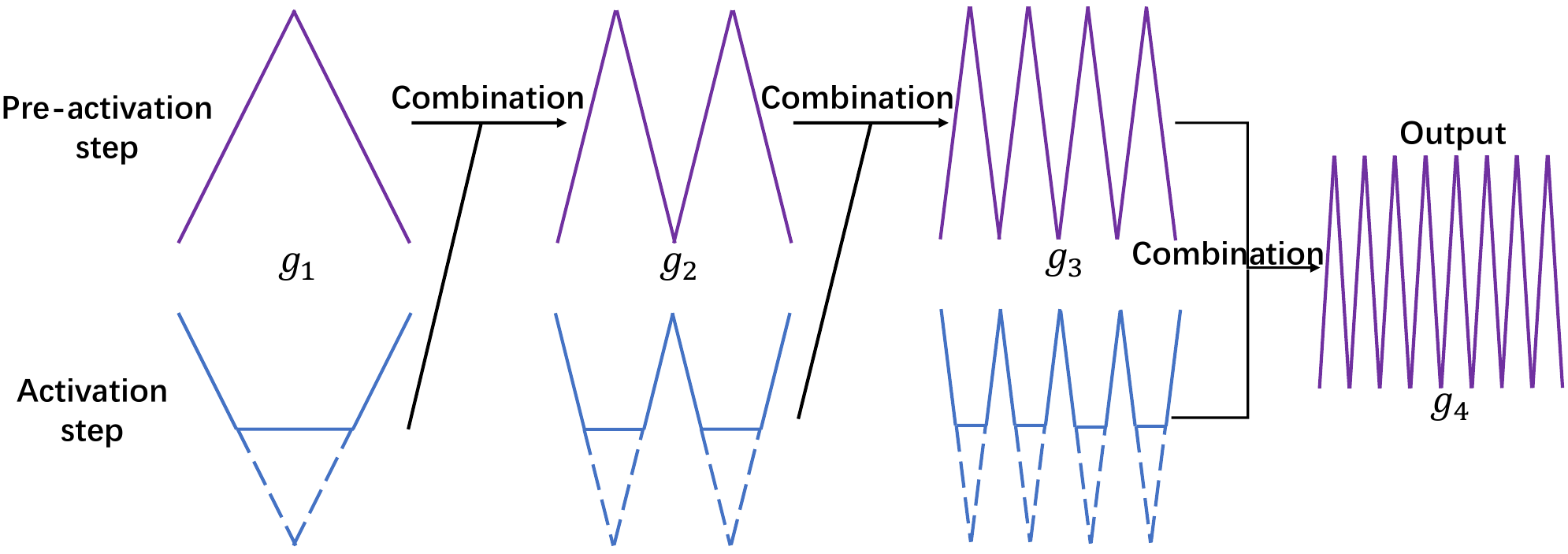}}
\caption{An illustration of constructing sawtooth functions in \cref{Sawtooth}.}
\label{Figure_Sawtooth}
\end{figure}

\begin{lemma}\label{Sawtooth}
     $g_s$ can be represented by a single-layer intra-linked ReLU network of width $s+1$.
\end{lemma}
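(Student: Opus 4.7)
The plan is to exploit the intra-layer links to encode $s$ compositions of the tent map $g_1$ within a single hidden layer of width $s+1$. The starting observation is that, for $u \in [0,1]$, the identity $\sigma(u-1)=0$ lets me write
\[
g_1(u) \;=\; 2u - 4\sigma\!\left(u - \tfrac{1}{2}\right),
\]
so a single ReLU computation $\sigma(u-1/2)$, combined with the affine quantity $u$ that is already available from a preceding neuron, suffices to produce $g_1(u)$ as an affine combination. The recursion $g_s = g_1 \circ g_{s-1}$ then gives $g_s(u) = 2\,g_{s-1}(u) - 4\sigma(g_{s-1}(u) - 1/2)$, so each additional composition costs only one further ReLU evaluation, applied to an affine combination of previously computed quantities.

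Concretely, I would organize the single hidden layer as follows. Choose the affine pre-activation $\mathcal{A}_0$ so that the pre-activation vector is $(x, -\tfrac{1}{2}, -\tfrac{1}{2}, \ldots, -\tfrac{1}{2})^{T} \in \mathbb{R}^{s+1}$. Put $y_1 = \sigma(x) = x$ (valid for $x \in [0,1]$), and for $k = 2, \ldots, s+1$ use the intra-layer shortcut to produce
\[
y_k \;=\; \sigma\!\left(-\tfrac{1}{2} + \sum_{j=1}^{k-1} w^{1}_{k,j}\, y_j\right) \;=\; \sigma\!\left(g_{k-2}(x) - \tfrac{1}{2}\right),
\]
where $g_0 := \mathrm{id}$ and the coefficients $w^{1}_{k,j}$ are chosen by unfolding the recursion $g_{\ell}(x) = 2\, g_{\ell-1}(x) - 4\, y_{\ell+1}$, so that $g_{k-2}(x)$ is represented as a linear combination of $y_1, \ldots, y_{k-1}$. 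By construction, $W^{1}$ is strictly lower triangular, which is exactly the structural constraint demanded by the intra-linked layer. The output affine map $\mathcal{A}_1$ then returns $g_s(x) = 2\, g_{s-1}(x) - 4\, y_{s+1}$, expanded once more via the recursion into a linear combination of $y_1, \ldots, y_{s+1}$.

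To finish, I would verify correctness by a short induction on $k$: assuming $g_{k-2}(x) = \sum_{j \le k-1} \alpha_{k-1,j}\, y_j$ with the coefficients obtained from unfolding, one checks $y_k = \sigma(g_{k-2}(x)-1/2)$, and that $g_{k-1}(x) = 2\, g_{k-2}(x) - 4\, y_k$ lies in $[0,1]$, so the next step is well-defined and $\sigma$ can be removed from the outer layer. The main technical obstacle is essentially bookkeeping, namely confirming that the unfolded coefficients exist, are finite, and respect the strict lower-triangular pattern required of $W^{1}$; no deeper step intervenes. Once this routine verification is in place, the resulting network has exactly one hidden layer of width $s+1$ and realizes $g_s$ on $[0,1]$.
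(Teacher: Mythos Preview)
Your proposal is correct and follows essentially the same approach as the paper: both use the identity $g_1(u)=2u-4\sigma(u-\tfrac12)$ (the paper equivalently uses $\sigma(\tfrac12-u)$), set the first neuron to $y_1=\sigma(x)=x$, and prove by induction that each additional intra-linked neuron supplies the single new ReLU evaluation needed to pass from $g_{k-1}$ to $g_k$, so that $g_s$ is an affine combination of $y_1,\ldots,y_{s+1}$.
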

\begin{proof}
    When $s=1$, $g_1$ is directly given by
\begin{equation*}
    g_1(x)=2x-4\sigma\left(x-0.5 \right),x\in[0,1].
\end{equation*}
Now, we use an induction step assuming that the affine combination of the first $s$ neurons represents $g_{s-1}$. Then it is sufficient to prove that the output $g_{s}$ can be given by the affine combination of the first $s$ neurons and $y=x$, since the latter can be given by only one neuron, \textit{i.e.}, $x=\sigma(x)$ when $x\in [0,1]$. Using $-g_{s}+\frac{1}{2}$ as the pre-activation of the $s$-th neuron, we have $g_{s+1}=-4\sigma\left(-g_{s}+\frac{1}{2} \right)-\frac{1}{2}g_{s}+2$, which is the affine combination of first $s+1$ neurons and $y=x$ by the induction hypothesis. \cref{Figure_Sawtooth} shows how to construct $g_4$ following the steps above. This network has a single layer with a total of $5$ neurons.
\end{proof}

\begin{remark}
    Network approximation with shortcut structures has been investigated in  \cite{fan2021sparse,lin2018resnet,Fan2023Quasi}. Although topologically adding intra-links to a layer of width $n$ can be regarded as increasing depth to $n$, they are essentially different. On one hand, intra-links need no additional activation operation, while increasing depth needs more activation. If we use activation to define a layer, intra-links do not promote depth. On the other hand, adding intra-layer links in a fully-connected network only costs the number of parameters for one layer with the same width.
\end{remark}


\textbf{Step 2: Approximation of the product function $\times(x,y)=xy:[0,1]^2\to [0,1]$}. We first use sawtooth functions to approximate the squaring function $f(x)=x^2$. Then we use the fact $xy=\frac{(x+y)^2-x^2-y^2}{2}$ to approximate $\times(x,y)$. 

Now, we illustrate how to approximate $y=x^2$ with composition and combination of sawtooth functions.

\begin{lemma}[Approximation of $y=x^2$]\label{Squaring}
Let $f_s:[0,1]\to [0,1]$ be the piecewise linear interpolation of $y=x^2$ on $x=\frac{k}{2^s},k=0,1,\ldots,2^s$. Then for any $N\in\mathbb{Z}^+$, $f_{s=N}$ can be implemented by a single-layer ReLU NN of width $N+1$. 
    
\end{lemma}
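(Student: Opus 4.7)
The plan is to exploit the classical Yarotsky telescoping identity that expresses the piecewise linear interpolant of $x^2$ as an affine combination of the sawtooth functions already produced by the Sawtooth construction. Specifically, I would prove the identity
\begin{equation*}
f_N(x) \;=\; x \;-\; \sum_{k=1}^{N} \frac{g_k(x)}{4^{k}},
\end{equation*}
which reduces the lemma to wiring up a single affine readout on top of the intra-linked layer from \cref{Sawtooth}.

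First, I would verify the identity by establishing the one-step relation $f_{s-1}(x) - f_s(x) = g_s(x)/4^s$ and then telescoping from $f_0(x)=x$. Both sides are piecewise linear on the finer dyadic grid $\{k/2^s\}$, so it suffices to check values at the nodes. At even indices $x=2j/2^s$ we have $f_{s-1}=f_s$ and $g_s=0$; at the new midpoints $x^*=(2j+1)/2^s$ the linear interpolation from $f_{s-1}$ yields $\tfrac12\bigl((2j)/2^s\bigr)^2 + \tfrac12\bigl((2j+2)/2^s\bigr)^2 = (2j+1)^2/4^s + 1/4^s$, while $f_s(x^*)=(2j+1)^2/4^s$, and $g_s(x^*)=1$. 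Matching both sides gives the stepwise identity, and summing yields the closed form.

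Second, I would reuse the intra-linked network from the proof of \cref{Sawtooth}. The key observation is that its inductive construction not only produces $g_N$ at the output but in fact realizes every intermediate $g_1,\ldots,g_N$ as an affine combination of $x$ and the outputs of the first few neurons: by the recursion $g_{s+1}=-4\sigma(-g_s+\tfrac12)-\tfrac12 g_s+2$, unrolled affinely, each $g_k$ ($k\le N$) is linearly accessible at the readout stage of the single hidden layer of width $N+1$. Therefore the combination $x-\sum_{k=1}^N g_k/4^k$ is implementable by a single affine output map, with no extra hidden neurons or depth.

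The main technical obstacle is the careful bookkeeping in the second step: confirming that the particular intra-linked wiring used to realize $g_N$ preserves linear access to every $g_k$ at the readout, rather than only to $g_N$. Once this is made explicit by induction on $s$, the lemma follows immediately: the network has width $N+1$, a single hidden layer, and exactly implements $f_N$ by the telescoping identity.
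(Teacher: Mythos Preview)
Your proposal is correct and follows essentially the same route as the paper: both establish the telescoping identity $f_N(x)=x-\sum_{k=1}^N g_k(x)/4^k$ via the one-step relation $f_{s-1}-f_s=g_s/4^s$, and then observe that the intra-linked layer from \cref{Sawtooth} already realizes every $g_1,\ldots,g_N$ as affine combinations of its $N+1$ neurons, so a single affine readout suffices. Your node-by-node verification of the identity and your explicit remark about linear accessibility of all intermediate $g_k$ are more detailed than the paper's terse argument, but the underlying idea is identical.
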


\begin{proof}
    Note that for any $j\geq 2$, $f_{j-1}-f_{j}=\frac{g_j}{2^{2j}}$, where $g_j$ is the sawtooth function in \cref{Squaring}. Thus, we have
\begin{equation*}
    f_{s}(x)=f_{1}(x)+\sum_{j=2}^{s}\left(f_{j}-f_{j-1} \right)=x-\sum_{j=1}^{s}\frac{g_j(x)}{2^{2j}}.
\end{equation*}
We consider the same hidden layer as in \cref{Squaring}. Since each $g_i$ is an affine combination of neurons in the hidden layer. Hence, $f_{s}$ can be given by a single-layer network of width $N+1$.
\end{proof}


Now using the fact that $xy=\frac{(x+y)^2-(x-y)^2}{4}$, we can easily approximate the product function as follows:

 {
\begin{lemma}[Approximation of product] \label{Multiplication}
    For any $L,N\in\mathbb{Z}^+$, there exists a function $\widetilde{\times}(x,y):[-1,1]^2\to[-1,1]$ implemented by a two-hidden-layer ReLU network width $4(N+1)$, such that for all $x,y\in [-1,1]$,
\begin{equation*}
    \left|\widetilde{\times}(x,y)-xy \right|\leq 2^{-2N}.
  \end{equation*}
Besides, $\widetilde{\times}(x,y)=0$ if $x=0$ or $y=0$.
\end{lemma}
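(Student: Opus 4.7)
\textbf{Proof plan for \cref{Multiplication}.} The plan is to reduce multiplication to squaring via the polarization identity
\begin{equation*}
xy = \left(\tfrac{x+y}{2}\right)^2 - \left(\tfrac{x-y}{2}\right)^2,
\end{equation*}
and then invoke \cref{Squaring} to approximate each of the two squares by a piecewise-linear sawtooth network. Since \cref{Squaring} only furnishes a network for the squaring function on $[0,1]$, while $u := (x+y)/2$ and $v := (x-y)/2$ lie in $[-1,1]$, we first take absolute values and exploit $u^2 = |u|^2$, $v^2 = |v|^2$, so that $xy = |u|^2 - |v|^2$.

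The network is assembled in two hidden layers as follows. In the first hidden layer (of width $4$), compute the four pre-activations $\sigma(u), \sigma(-u), \sigma(v), \sigma(-v)$ by affine combinations of $x,y$; combining these linearly yields $|u| = \sigma(u)+\sigma(-u)$ and $|v| = \sigma(v)+\sigma(-v)$ as affine functionals on the layer's output. In the second hidden layer, run two copies of the single-layer intra-linked sawtooth construction from \cref{Sawtooth,Squaring} in parallel: one copy of width $N+1$ acting on $|u|$ to realise $f_N(|u|)$, and a second copy of width $N+1$ acting on $|v|$ to realise $f_N(|v|)$. The final output neuron computes
\begin{equation*}
\widetilde{\times}(x,y) := f_N(|u|) - f_N(|v|).
\end{equation*}
The total network has two hidden layers of widths $4$ and $2(N+1)$ respectively, hence maximum width at most $4(N+1)$.

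For the error, \cref{Squaring} gives $f_N$ as the piecewise linear interpolant of $w\mapsto w^2$ at the nodes $k/2^N$, $k=0,\ldots,2^N$. A standard interpolation estimate (using $|w^2|''=2$ and interval length $2^{-N}$) yields $|f_N(w)-w^2|\le 2^{-2N-2}$ for all $w\in[0,1]$. Applying this twice with $w=|u|$ and $w=|v|$,
\begin{equation*}
\bigl|\widetilde{\times}(x,y)-xy\bigr| \le \bigl|f_N(|u|)-|u|^2\bigr| + \bigl|f_N(|v|)-|v|^2\bigr| \le 2\cdot 2^{-2N-2} \le 2^{-2N}.
\end{equation*}
Finally, the vanishing property is free from the construction: if $x=0$ then $u=y/2$ and $v=-y/2$, so $|u|=|v|=|y|/2$ and the two $f_N$ outputs cancel; similarly $y=0$ gives $|u|=|v|=|x|/2$, yielding $\widetilde{\times}(x,y)=0$.

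The main obstacle is a bookkeeping one rather than a conceptual one: verifying that two sawtooth blocks of width $N+1$ can be placed side-by-side in a single intra-linked hidden layer without cross-interference between their intra-links, and that the affine map feeding them can simultaneously produce $|u|$ and $|v|$ from the four first-layer ReLU units. This amounts to choosing the intra-layer weight matrix $\boldsymbol{W}^{2}$ to be block-diagonal with two copies of the strictly lower-triangular matrix supplied by \cref{Sawtooth}, so that the construction of \cref{Squaring} for the two squarings proceeds independently.
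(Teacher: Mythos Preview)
Your proposal is correct and follows essentially the same route as the paper: reduce $xy$ to two squarings via the polarization identity $xy=\bigl(\tfrac{x+y}{2}\bigr)^2-\bigl(\tfrac{x-y}{2}\bigr)^2$, then invoke the sawtooth-based interpolant $f_N$ from \cref{Squaring} on each term and read off the error and the vanishing property from $|u|=|v|$ when $x=0$ or $y=0$. The only cosmetic difference is that you handle the sign by computing $|u|,|v|$ in a width-$4$ first layer before feeding a single copy of $f_N$, whereas the paper builds an even extension $\tilde f_N(x)=\sigma(f_N(x))+\sigma(f_N(-x))$ and runs two copies of $f_N$ in parallel; both fit within the stated width $4(N+1)$ and depth $2$.
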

}

\begin{proof}
 {
    Let $\tilde{f}_{N}(x)=\sigma\left(f_{N}(x)\right)+\sigma(f_{N}(-x))$, where $f_N$ is the approximation to the squaring function constructed in Lemma \ref{Squaring}. We set 
\begin{equation*}
\widetilde{\times}(x,y)=4f_{N}\left( \frac{x+y}{2}\right)-4f_{N}\left( \frac{x-y}{2}\right).
\end{equation*}
Using the fact that for any $x\in[-1,1]$, $|\tilde{f}_s(x)-x^2|\leq 2^{-2(s+1)}$, we have
\begin{equation}
    \begin{aligned}
    &|\widetilde{\times}(x,y)-xy|\\ \nonumber
    =&4\left | \tilde{f}_{N}\left( \frac{x+y}{2}\right)-\tilde{f}_{N}\left( \frac{x-y}{2}\right)-\left( \frac{x+y}{2} \right)^2+\left( \frac{x-y}{2} \right)^2 \right |\\ \nonumber
    \leq &4\left|\tilde{f}_{N}\left( \frac{x+y}{2}\right)-\left( \frac{x+y}{2} \right)^2 \right|+4\left|\tilde{f}_{N}\left( \frac{x-y}{2}\right)-\left( \frac{x-y}{2} \right)^2 \right| \nonumber\\
    \leq & 2^{-2N}.
\end{aligned}
\end{equation}
It remains to verify $\widetilde{\times}(x,y)\in[-1,1]$ for all $x,y\in[-1,1]$. Observe that $\widetilde{\times}(x,y)$ is linear in $[-1,1]\setminus D$, where 
\begin{equation*}
    D=\left\{ ( x,y)\in[-1,1]^2:x\pm y=\pm \frac{2l}{2^{N}},l=0,1,\ldots,2^{N-1} \right\}.
\end{equation*}
}

 {
Note that $\widetilde{\times}(x,y)=xy$ on $D$. Therefore, $\widetilde{\times}(x,y)$ is the piecewise linear interpolation of $xy$ on $D$. Since the values on all interpolation points are in $[-1,1]$, their linear interpolation $\widetilde{\times}(\cdot,\cdot)$ is also in $[-1,1]$.
}
 {
Since $\tilde{f}_{N}$ can be implemented by a ReLU network with width $2(N+1)$ and depth $2$. Then $\widetilde{\times}$ can be implemented by stacking two such networks in parallel. Hence the whole network is of depth $2$ and width $4(N+1)$.
}
\end{proof}
 {
We then extend the bivariate product $\widetilde{\times}(\cdot,\cdot)$ in \cref{Multiplication} for further analysis.}

 {
\begin{lemma}
\label{prod_extend}
    For any $x,y\in \mathbb{R}$, let $x^\prime,y^{\prime}$ be approximators to $x,y$ such that $|x-x^\prime|\leq \epsilon_x,|y-y^\prime|\leq \epsilon_y$. There is a function $\hat{\times}(\cdot,\cdot)$ implemented by a two-hidden-layer ReLU network of width $4(N+1)$ such that for all $x,y\in[-1,1]$,
\begin{equation*}
    \left|\hat{\times}(x^\prime,y^{\prime})-xy\right|\leq |x^\prime||y^\prime|2^{-2N}+|x^\prime|\epsilon_y+|y|\epsilon_x
\end{equation*}
\end{lemma}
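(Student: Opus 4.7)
The plan is to set $\hat{\times}(x', y') := \widetilde{\times}(x', y')$, where $\widetilde{\times}$ is the bivariate product network from \cref{Multiplication}. The two-hidden-layer architecture of width $4(N+1)$ is then inherited verbatim, so only the error estimate demands new work. My main tool is a clean triangle-inequality decomposition that separates the network approximation error from the input perturbation error.

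First I would write
\begin{equation*}
  \hat{\times}(x',y') - xy
  \;=\;\bigl[\widetilde{\times}(x',y') - x'y'\bigr] + x'(y'-y) + y(x'-x).
\end{equation*}
The hypotheses $|x-x'|\leq\epsilon_x$ and $|y-y'|\leq\epsilon_y$ immediately give $|x'(y'-y)+y(x'-x)|\leq |x'|\epsilon_y+|y|\epsilon_x$, which produces the last two terms of the advertised bound. What remains is to prove $|\widetilde{\times}(x',y')-x'y'|\leq |x'||y'|\,2^{-2N}$.

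For this sharper estimate, \cref{Multiplication} alone only delivers the uniform $2^{-2N}$, so I would exploit two structural features of $\widetilde{\times}$: the vanishing identities $\widetilde{\times}(x',0)=\widetilde{\times}(0,y')=0$ explicitly established in \cref{Multiplication}, and the fact that $\widetilde{\times}$ agrees with the bilinear function $x'y'$ on the rotated grid $D$ of spacing $2^{-N}$. Setting $E(x',y'):=\widetilde{\times}(x',y')-x'y'$, the vanishing of $E$ on both coordinate axes forces it to carry at least one factor of $|x'|$ and one factor of $|y'|$, so a cell-by-cell analysis of the piecewise-linear interpolation error should yield the required multiplicative $|x'||y'|$ scaling on top of the grid-induced $2^{-2N}$ slack. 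The main obstacle will be this cell-by-cell verification, in particular for cells touching the boundary of $[-1,1]^2$ where reflection symmetry is broken; one may need a local rescaling argument within each cell, combined with linearity of $\widetilde{\times}$ inside the cell, to recover the multiplicative structure uniformly. Once that local bound is in hand, summing it with $|x'|\epsilon_y+|y|\epsilon_x$ produces the stated inequality.
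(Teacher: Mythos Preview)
Your triangle-inequality decomposition matches the paper's, but the plan to take $\hat{\times}=\tilde{\times}$ and then prove $|\tilde{\times}(x',y')-x'y'|\le |x'||y'|\,2^{-2N}$ cannot work: this inequality is false for $\tilde{\times}$. Write $u=(x'+y')/2$, $v=(x'-y')/2$. On the sub-cell $u,v\in[0,2^{-N}]$ the interpolant $\tilde f_N$ is linear through the origin with slope $2^{-N}$, so $\tilde\times(x',y')$ reduces to a constant multiple of $u-v=y'$, and the error $E(x',y')=\tilde\times(x',y')-x'y'$ equals $y'(c\,2^{-N}-x')$ for an absolute constant $c$. At $x'=y'=2^{-N-1}$ this is of order $2^{-2N}$, whereas your target $|x'||y'|2^{-2N}$ is of order $2^{-4N}$, a gap of $2^{2N}$. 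Your intuition that ``$E$ vanishes on both axes, hence carries a factor $|x'||y'|$'' is correct, but it only yields $|E|\le C|x'||y'|$ with $C=O(1)$; the grid resolution does not contribute an \emph{additional} factor $2^{-2N}$ on top of this, so no cell-by-cell argument can close the gap.

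The paper does not sharpen the pointwise estimate on $\tilde\times$ at all; it instead builds the factor $|x'||y'|$ into the \emph{definition} of $\hat\times$ by rescaling, setting
\[
\hat\times(x',y')=|x'|\,|y'|\,\tilde\times\!\left(\frac{x'}{|x'|},\frac{y'}{|y'|}\right).
\]
Then $\hat\times(x',y')-x'y'=|x'||y'|\bigl[\tilde\times(a,b)-ab\bigr]$ at the normalized point $(a,b)$, and the uniform bound from \cref{Multiplication} immediately gives $|x'||y'|\,2^{-2N}$. In the downstream applications the scalings are read as a priori upper bounds on $|x'|$ and $|y'|$, so $\hat\times$ remains a fixed two-hidden-layer ReLU network of width $4(N+1)$; the multiplicative factor in the conclusion comes from this rescaling trick, not from any refined analysis of $\tilde\times$ itself.
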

}
\begin{proof}
 {
    For any $x,y\in\mathbb{R}$, we set $\hat{\times}(x^\prime,y^\prime)=|x^\prime||y^\prime|\tilde{\times}\left(\frac{x^{\prime}}{|x^\prime|},\frac{y^{\prime}}{|y^\prime|}\right)$.
    Then we have
\begin{equation*}
\begin{split}
    \left| \hat{\times}(x^\prime,y^\prime)-xy \right|&=\left| \hat{\times}(x^\prime,y^\prime)-x^{\prime}y^{\prime} \right|+\left|x^{\prime}y^{\prime}-xy\right|\\
    &\leq |x^\prime||y^\prime|2^{-2N}+\left|x^{\prime}y^{\prime}-x^{\prime}y\right|+\left|x^{\prime}y-xy\right|\\
    &\leq |x^\prime||y^\prime|2^{-2N}+|x^\prime|\epsilon_y+|y|\epsilon_x.
\end{split}
\end{equation*}
}
\end{proof}



\textbf{Step 3: Approximation of polynomials}. 
 {
\begin{lemma}[Approximation of univariate polynomials]\label{univariate}
Let $P(x)=\sum_{j=0}^{n}a_j x^j$ be an univariate polynomial of degree $n$. For any $N,L,n\in\mathbb{Z}^{+}$, $N\geq 3,n\geq 2$, there exists a function $\bar{P}(x)$ implemented by a ReLU network of depth $2\lceil \log_2^{n} \rceil$ and width $\frac{n}{2}+2n(N+1)$ such that 
\begin{equation*}
    \left\|\bar{P}(x)-P(x) \right\|_{L^{\infty}[0,1]}\leq \max_{0\leq j\leq n}|a_j|  n^2 2^{-2N-1}.
\end{equation*}
\end{lemma}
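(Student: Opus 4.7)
The plan is to build approximate powers $\bar{x}^j\approx x^j$ for $j=1,2,\ldots,n$ by repeated application of the extended product $\hat{\times}$ from \cref{prod_extend}, arranged in a binary-tree schedule, and then output $\bar{P}(x)=\sum_{j=0}^{n}a_j\bar{x}^{j}$ as an affine combination at the final layer. Because $x\in[0,1]$, every exact power $x^{j}$ and every approximate power $\bar{x}^{j}$ remains inside $[0,1]$, which lets the error bound of \cref{prod_extend} be applied without amplification from operand magnitudes.

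The construction proceeds in rounds. At the start of round $k$, I assume all powers $\bar{x}^{1},\ldots,\bar{x}^{2^{k-1}}$ are available on the current hidden layer. Then I compute in parallel $\bar{x}^{2^{k-1}+i}=\hat{\times}(\bar{x}^{i},\bar{x}^{2^{k-1}})$ for $i=1,\ldots,2^{k-1}$, while passing the already-computed powers through by identity neurons $\sigma(\bar{x}^{j})=\bar{x}^{j}$ (valid since $\bar{x}^{j}\ge 0$). Each parallel $\hat{\times}$ gadget occupies $2$ hidden layers of width $4(N+1)$, so round $k$ adds depth $2$ and contributes at most $2^{k-1}\cdot 4(N+1)$ to the width for new products plus $2^{k-1}$ for pass-throughs. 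After $\lceil\log_{2}n\rceil$ rounds every $\bar{x}^{j}$ with $1\le j\le n$ has been produced; the total depth is $2\lceil\log_{2}n\rceil$ and the peak width, attained at the last round, is $\tfrac{n}{2}+2n(N+1)$.

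The error propagation is clean. Set $\epsilon_{j}:=|\bar{x}^{j}-x^{j}|$ with $\epsilon_{1}=0$. Applying \cref{prod_extend} with operands in $[0,1]$ yields $\epsilon_{j+k}\le 2^{-2N}+\epsilon_{j}+\epsilon_{k}$, and a straightforward induction on $j$ then gives $\epsilon_{j}\le (j-1)\cdot 2^{-2N}$. Summing against the coefficients produces
\begin{equation*}
\left|\bar{P}(x)-P(x)\right|\le\sum_{j=1}^{n}|a_{j}|\,\epsilon_{j}\le \max_{0\le j\le n}|a_{j}|\sum_{j=1}^{n}(j-1)\,2^{-2N}=\max_{0\le j\le n}|a_{j}|\,\frac{n(n-1)}{2}\,2^{-2N}\le \max_{0\le j\le n}|a_{j}|\,n^{2}\,2^{-2N-1},
\end{equation*}
which is exactly the claimed bound.

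The main obstacle I anticipate is the precise width bookkeeping: one must verify that at no intermediate layer the pass-throughs plus the currently active $\hat{\times}$-gadgets exceed $\tfrac{n}{2}+2n(N+1)$, and that when $n$ is not a power of $2$ the last round only has to produce the remaining $n-2^{\lceil\log_{2}n\rceil-1}$ powers, which still fits under the same width budget. A secondary subtlety is checking that $\hat{\times}$ is always fed operands inside $[-1,1]$ so that \cref{prod_extend} is applicable; this is automatic because all approximate partial powers lie in $[0,1]$ whenever $x\in[0,1]$.
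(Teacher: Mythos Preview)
Your proposal is correct and follows essentially the same construction the paper intends: build the powers $\bar x^{j}$ by a binary doubling schedule using the extended product $\hat\times$ from \cref{prod_extend}, pass previously computed powers through via identity neurons, and output the affine combination $\sum_j a_j\bar x^{j}$. Your error recursion $\epsilon_{j+k}\le 2^{-2N}+\epsilon_j+\epsilon_k$ and the resulting bound $\epsilon_j\le (j-1)2^{-2N}$ match the paper's $(j-1)2^{-2N}$ estimate, and your width/depth accounting reproduces the stated $\tfrac{n}{2}+2n(N+1)$ and $2\lceil\log_2 n\rceil$.

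One small remark worth noting: the paper's \emph{written} error analysis actually uses the sequential recursion $h_j=\hat\times(x,h_{j-1})$ and then defers the architecture to Figure~\ref{Figure_univariate} for the logarithmic-depth claim. Your exposition is more internally consistent, since you carry out the error analysis directly on the tree schedule that achieves the stated depth; the sequential recursion taken literally would cost depth $2(n-1)$. The two analyses yield the same per-power error $(j-1)2^{-2N}$ because in either scheme each $\bar x^{j}$ is produced by exactly $j-1$ applications of $\hat\times$, and operands stay in $[0,1]$ so no magnitude amplification occurs. Your flagged bookkeeping checks (width at intermediate rounds, the non-power-of-two last round, and operand range) are exactly the right loose ends to verify, and they go through as you indicate.
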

}
\begin{proof}
 {
    We first consider the approximation of $y=x^n$ for $x\in [0,1]$. This can be done by setting $h_0(x)=1$, $h_{1}(x)=x$ and performing $h_{j}(x)=\widehat{\times}\left(x,h_{j-1}(x)\right)$ for $j=2,\ldots,n$ recursively, where $\widehat{\times}$ is given by \cref{prod_extend}. Then we have for all $x\in [-1,1]$
\begin{align}
    \nonumber\left|h_{n}(x)-x^n \right|&\leq \left|\widehat{\times}\left(x,h_{n-1}(x)\right)- xh_{n-1}(x)\right|+
    \left|xh_{n-1}(x)-x^n\right|\\
    \nonumber&\leq\frac{3}{2}\cdot2^{-2N}+
    \left|h_{n-1}(x)-x^{n-1}\right|\\
    \nonumber &\leq \cdots \leq (n-1)2^{-2N}.
\end{align}
}

 {
Finally, we estimate the error. Setting $\bar{P}(x)=\sum_{j=0}^{n}a_j h_j(x)$ leads to
\begin{align}
    &\nonumber~~~~\left| \bar{P}(x)-\sum_{j=0}^{n}a_j x^j \right| \\
    &\nonumber \leq \sum_{j=0}^{n}\left| a_j \right|\left| h_j(x)-x^j \right|\\
    \nonumber &\leq \max_{0\leq j\leq n}|a_j| \sum_{j=1}^{n}\frac{3(j-1)}{2}\cdot2^{-2N}\\
    \nonumber&\leq \max_{0\leq j\leq n}|a_j| n^2 2^{-2N-1},
\end{align}
for all $x\in [-1,1]$.
}

 {
The implementation of $\bar{P}(x)$ by a ReLU network is given by \cref{Figure_univariate}. Hence, the network is of depth $2\lceil \log_2^{n} \rceil$ and width $\frac{n}{2}+2n(N+1)$.
}
\end{proof}

\begin{figure}[htb!]
\center{\includegraphics[width=\linewidth] {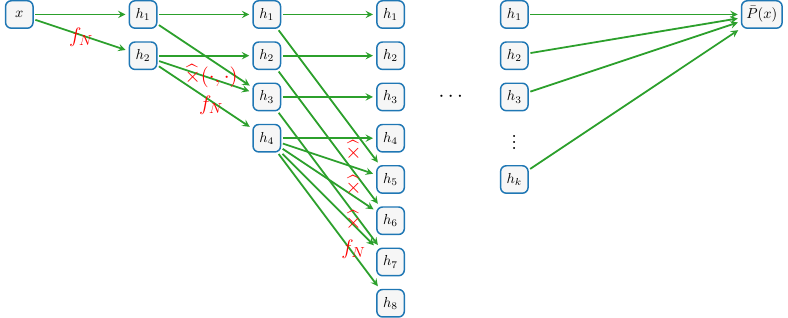}}
\caption{An illustration of the structure of a ReLU network that approximates a univariate polynomial in \cref{univariate}.}
\label{Figure_univariate}
\end{figure}

We first approximate the Chebyshev polynomial, which will be later used to approximate any Chebyshev series. We also restrict the support of this approximation within $[-1,1]$ for later construction.

 {
\begin{corollary}[Approximation of Chebyshev polynomials] \label{Chebyshev}
Given $n\in\mathbb{Z}^{+}$, for any $N \in\mathbb{Z}^+$, there exists a function $\bar{T}_{n}$ implemented by a ReLU network of depth $2(n-1)$ and width $4\left(N+1\right)+2$ or depth $2\lceil \log_2^{n} \rceil+1$ and width $\frac{n}{2}+2n(N+1)$, such that for any $\delta>0$, it approximates the Chebyshev polynomial $T_n$ with the error:
        \begin{equation*}
		\left\|T_n-\bar{T}_n\right\|_{L^{\infty}(I_\delta)}\leq n^2 3^n 2^{-2N-1}
	\end{equation*}
and $\operatorname{supp}(\Bar{T}_n)\subseteq I$.
\end{corollary}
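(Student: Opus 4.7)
The plan is to give two constructions of $\bar{T}_n$, one for each width/depth pair in the statement. The first exploits the three-term recurrence $T_{k+1}(x)=2xT_k(x)-T_{k-1}(x)$ with $T_0=1,\;T_1=x$, executed by $n-1$ sequential calls to the approximate product $\hat{\times}$ from \cref{prod_extend}. The second writes $T_n$ in the monomial basis and invokes \cref{univariate} directly, after bounding the magnitudes of the monomial coefficients.

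For the recursive construction, I set $\bar{T}_0=1,\;\bar{T}_1=x$, and iteratively define $\bar{T}_{k+1}(x)=2\hat{\times}(x,\bar{T}_k(x))-\bar{T}_{k-1}(x)$ for $k\ge 1$. Each step consumes one $\hat{\times}$ block (depth $2$, width $4(N+1)$), together with two channels that propagate $x$ and $\bar{T}_{k-1}$ forward, giving total depth $2(n-1)$ and width $4(N+1)+2$. Setting $\epsilon_k:=\|\bar{T}_k-T_k\|_{L^\infty(I)}$, \cref{prod_extend} applied with exact input $x$ and noisy input $\bar{T}_k$ yields
\[
\bigl|\hat{\times}(x,\bar{T}_k(x))-xT_k(x)\bigr|\le |x||\bar{T}_k(x)|\,2^{-2N}+|x|\epsilon_k,
\]
so that on $I$ (where $|x|,|T_k|\le 1$) we obtain the scalar recurrence
\[
\epsilon_{k+1}\le 2(1+\epsilon_k)2^{-2N}+2\epsilon_k+\epsilon_{k-1},\qquad \epsilon_0=\epsilon_1=0.
\]
A routine induction, with dominant behavior governed by the characteristic roots $1\pm\sqrt{2}$ of the homogeneous part, yields $\epsilon_n\le n^2 3^n 2^{-2N-1}$; the base $3$ comfortably absorbs both the root $1+\sqrt{2}\approx 2.414$ and the second-order correction from the $2^{-2N}$ cross term.

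For the second construction, I bound the monomial coefficients of $T_n(x)=\sum_{j=0}^n a_j x^j$. A direct estimate from either the closed form or the recurrence gives $\max_j |a_j|\le 3^n$ (much sharper bounds are known, but this suffices). Feeding this polynomial into \cref{univariate} produces a network of depth $2\lceil \log_2 n\rceil$ and width $n/2+2n(N+1)$ with error at most $n^2 3^n 2^{-2N-1}$, matching the claim.

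The main obstacle, and the only nontrivial adjustment, is enforcing $\operatorname{supp}(\bar{T}_n)\subseteq I$: neither construction vanishes for $|x|>1$. I plan to gate the output by a trapezoidal cutoff $\psi:\mathbb{R}\to[0,1]$ equal to $1$ on $I_\delta$ and $0$ outside $I$, realizable as a single affine combination of four ReLUs, e.g.\ $\psi(x)=\eta^{-1}[\sigma(x+1)-\sigma(x+1-\eta)-\sigma(x-1+\eta)+\sigma(x-1)]$ for any $\eta<\delta$. Multiplying $\bar{T}_n$ by $\psi$, either through one further $\hat{\times}$ block or by folding $\psi$ into the output layer, leaves the values on $I_\delta$ exactly unchanged (so the error bound is preserved) while zeroing the output for $|x|\ge 1$; this accounts for the additive constants ($+2$ in width and $+1$ in depth) appearing in the statement.
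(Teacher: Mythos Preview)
Your second construction is exactly the paper's proof: it simply says the error estimate ``is directly from \cref{univariate}'' (implicitly using the bound $\max_j|a_j|\le 3^n$ for the monomial coefficients of $T_n$, which follows from the recurrence $|c_{n+1,j}|\le 2|c_{n,j-1}|+|c_{n-1,j}|$ just as you indicate). Your first construction via the three-term recurrence is a genuine addition --- the paper states the $2(n-1)$-depth, $4(N+1)+2$-width option but never actually justifies it, so your recursive argument with the scalar inequality $\epsilon_{k+1}\le 2(1+\epsilon_k)2^{-2N}+2\epsilon_k+\epsilon_{k-1}$ fills a gap; the induction you sketch does go through for all $N\ge 1$.

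The one place you diverge substantively is the support restriction. The paper does \emph{not} multiply by a cutoff; it pre-composes the \emph{input} with a piecewise-linear map $h$ that is the identity on $I_\delta$, vanishes outside $I$, and is linear in between --- a single width-$4$ ReLU layer prepended to the network. That single layer is what the ``$+1$'' in depth (second option) accounts for; the ``$+2$'' in width (first option) is already consumed by your two pass-through channels, not by the support gadget. Your proposal to ``fold $\psi$ into the output layer'' does not work, since an affine layer cannot realize a product, and routing through an extra $\hat{\times}$ block would cost depth $2$ and width $4(N+1)$, overshooting the stated budget. The paper's input-squashing trick is cheaper and is the intended mechanism, though you may notice it has its own subtlety: for $|x|>1$ it outputs $\bar T_n(0)$ rather than $0$, so for even $n$ the support claim as literally stated is not achieved by the paper's construction either.
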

}

\begin{proof}
 {
    The error estimate is directly from \cref{univariate}. To make the approximator be only supported in $I$, we just need to compose it with
\begin{equation*}
    h(x)=\begin{cases} x,&x\in I_\delta,\\
    0,&x\in \mathbb{R}\setminus I,\\
    \text{linear connection}, & x\in I\setminus I_\delta,
    \end{cases}
\end{equation*}
which can be directly given by a single-layer ReLU network of width $4$.
}
\end{proof}

With the above result, we are now able to conclude the proof of \cref{MainThmCube}.

\begin{proof}[Proof of \cref{MainThmCube}]
 {
	First, by \cref{Chebyshev}, for all $k\leq n$, $\bar{T}_k$ can be implemented by a ReLU network of depth $2\lceil\log_2^n\rceil$ and width $\frac{n}{2}+2n(N_1+1)$, which approximates the Chebyshev polynomial $T_k$ with the error 
	\begin{equation}\label{Approx_Chebyshev}
		\left\|T_k-\bar{T}_k\right\|_{L^{\infty}(I_\delta)}\leq n^23^n 2^{-2N_1-1},
	\end{equation}
where we set $N_1=n$.
}
 {	
	Let $\widehat{\times}$ be the product function defined in \cref{prod_extend}, which is implemented by a sub-network with width $4(N_2+1)$ and depth $2$. We consider the polynomial 
\begin{equation*}
    \mathcal{T}_{n}^{d}(f)\left(\boldsymbol{x}\right)=\sum_{j_1=0}^{n}\cdots\sum_{j_d=0}^{n}\widetilde{a}_{j_1,\ldots,j_d}T_{j_{1}}(x_1)\cdots T_{j_{d}}(x_d),
\end{equation*}
which can be rewritten $\mathcal{T}_{N}^d(f)$ as
\begin{equation*}
\begin{split}
\mathcal{T}_{n}^{d}(f)(\boldsymbol{x}) = \sum_{j_1=1}^{n} T_{j_1}(x_1) \bigg(
  & \sum_{j_2=1}^{n} T_{j_2}(x_2) \bigg(
    \sum_{j_3=1}^{n} T_{j_3}(x_3) \bigg(
      \cdots \\
      & \sum_{j_{d-1}=1}^{n} T_{j_{d-1}}(x_{d-1}) \bigg(
        \sum_{j_d=1}^{n} a_{j_1,\ldots,j_d} T_{j_d}(x_d)
      \bigg)
    \bigg)
  \bigg)
\bigg).
\end{split}
\end{equation*}
    By composing $\widehat{\times}$ and $\bar{T}_{j}$'s, $\hat{f}$ is now given by
    \begin{equation*}
\begin{split}
\hat{f}(x_1,\ldots,x_d) = \hat{\times}\bigg( & \sum_{j_1=1}^{n} T_{j_1}(x_1), \\
  & \hat{\times}\bigg( \cdots \hat{\times}\bigg( \sum_{j_{d-1}=1}^{n} T_{j_{d-1}}(x_{d-1}), \bigg( \sum_{j_d=1}^{n} a_{j_1,\ldots,j_d} T_{j_d}(x_d) \bigg) \bigg) \bigg) \bigg),
\end{split}
\end{equation*}
	where $\widetilde{a}_{j_1,\ldots,j_d}\leq (5\pi)^d\left\|f\circ \textbf{cos}\right\|_{L^{1}(Q)}$ is defined in \eqref{coefficient}. 
}

 {
    For convenience of notation, we denote
\begin{equation*}
\begin{split}
    S_{j_1,\ldots,j_{d-1}}^{(d)}(x_d)&=\sum_{j_d=1}^n a_{j_1,\ldots,j_d}T_{j_d}(x_d)\\
    S_{j_1,\ldots,j_{d-2}}^{(d-1)}(x_{d-1},x_d)&=\sum_{j_{d-1}=1}^{n}S_{j_1,\ldots,j_{d-1}}^{(d)}(x_d)T_{j_{d-1}}(x_{d}),\\
    &\cdots\\
    \mathcal{T}_{n}^{d}(f)(\boldsymbol{x})&=S^{(1)}(x_{1},\ldots,x_d)=\sum_{j_1=1}^n S_{j_1}^{(2)}(x_2,\ldots,x_d)T_{j_1}(x_{1}).
\end{split}
\end{equation*}
}

 {
Similarly, we set
\begin{equation*}
\begin{split}
    \bar{S}_{j_1,\ldots,j_{d-1}}^{(d)}(x_d)&=\sum_{j_d=1}^n a_{j_1,\ldots,j_d}\bar{T}_{j_d}(x_d)\\
    \bar{S}_{j_1,\ldots,j_{d-2}}^{(d-1)}(x_{d-1},x_d)&=\sum_{j_{d-1}=1}^{n}\hat{\times}\left(\bar{S}_{j_1,\ldots,j_{d-1}}^{(d)}(x_d),\bar{T}_{j_{d-1}}(x_{d-1})\right),\\
    &\cdots\\
    \hat{f}(\boldsymbol{x})=\bar{S}^{(1)}(x_{1},\ldots,x_d)&=\sum_{j_1=1}^n \hat{\times}\left(\bar{S}_{j_1}^{(2)}(x_2,\ldots,x_d),\bar{T}_{j_1}(x_{1})\right).
\end{split}    
\end{equation*}
}

 {
Then we compute the approximation error:
	\begin{equation}
    \label{eq_cube_err_1}
		\begin{split}
			\left\|f-\hat{f}\right\|_{L^\infty(Q_\delta)}
   &\leq \left\|f-\mathcal{T}_{n}^{d}(f)\right\|_{L^\infty(Q)}+\left\|\mathcal{T}_{n}^{d}(f)-\hat{f}\right\|_{L^\infty(Q_\delta)}\\
   &\leq 6d\omega_{f,Q}\left(\frac{1}{n}\right)+\left\|\mathcal{T}_{n}^{d}(f)-\hat{f}\right\|_{L^\infty(Q_\delta)}
		\end{split}
	\end{equation} 
}

 {
	Next we estimate the term $\left\|\mathcal{T}_{n}^{d}(f)-\hat{f}\right\|_{L^\infty(Q_\delta)}$:
\begin{equation*}
\begin{split}
    &\left\|S^{(d)_{j_1,\ldots,j_{d-1}}}-\bar{S}^{(d)}_{j_1,\ldots,j_{d-1}}\right\|_{L^{\infty}(Q_\delta)}\\
    \leq &n \max_{j_1,\ldots,j_d}a_{j_1,\ldots,j_d}\cdot n^23^n 2^{-2N_1-1}=\mathcal{O}\left(n^33^n 2^{-2N_1}\right).
\end{split}
\end{equation*}
Then 
\begin{equation*}
    \left\|\bar{S}^{(d)}_{j_1,\ldots,j_{d-1}}\right\|_{L^{\infty}(Q_\delta)}\leq \sum_{j_d=1}^n\left\| a_{j_1,\ldots,j_d}\bar{T}_{j_d}\right\|_{L^{\infty}(Q_\delta)}+\mathcal{O}\left(n^33^n 2^{-2N_1}\right)=\mathcal{O}(n),
\end{equation*}
and
\begin{equation*}
\begin{split}
    &\left\|S^{(d-1)}_{j_1,\ldots,j_{d-2}}-\bar{S}^{(d-1)}_{j_1,\ldots,j_{d-2}}\right\|_{L^{\infty}(Q_\delta)}\\
    \leq&\sum_{j_{d-1}=1}^n\left\|T_{j_{d-1}}S^{(d)}_{j_1,\ldots,j_{d-1}}-\hat{\times}\left(\bar{T}_{j_{d-1}},\bar{S}^{(d)}_{j_1,\ldots,j_{d-1}}\right) \right\|_{L^{\infty}(Q_\delta)}\\
    \leq& \sum_{j_{d-1}=1}^n \left(\left\|\bar{T}_{j_{d-1}}\right\|\left\|_{L^{\infty}(Q_\delta)} \bar{S}^{(d)}_{j_1,\cdots,j_{d-1}} \right\|_{L^{\infty}(Q_\delta)}2^{-2N_2}\right.\\
    &+\left\|S^{(d)}_{j_1,\ldots,d_{d-1}}\right\|\left\|T_{j_{d-1}}-\bar{T}_{j_{d-1}}\right\|_{L^{\infty}(Q_\delta)}\\
    &+\left.\left\|\bar{T}_{j_{d-1}}\right\|_{L^{\infty}(Q_\delta)}\left\|S^{(d)}_{j_1,\ldots,d_{d-1}}-\bar{S}^{(d)}_{j_1,\ldots,d_{d-1}}\right\|_{L^{\infty}(Q_\delta)}\right)\\
    \leq& \mathcal{O}\left(n^2 2^{-2N_2}\right)+\mathcal{O}\left(n^43^n 2^{-2N_1}\right).
\end{split}
\end{equation*}
}

 {
Using a simple induction step, we have
\begin{equation}
\label{eq_cube_err_2}
\begin{split}
    \left\|\mathcal{T}_n^d(f)-\hat{f}\right\|_{L^{\infty}(Q_\delta)}&=\left\| S^{(1)}-\bar{S}^{(1)} \right\|_{L^{\infty}(Q_\delta)}\\
    &\leq\mathcal{O}\left(n^d 2^{-2N_2}\right)+\mathcal{O}\left(n^{d+2}3^n 2^{-2N_1}\right)
\end{split}
\end{equation}
}

 {
The network to implement the procedure above is presented in \cref{Figure_KPMNet}, which is of width $\max\left\{n(d-1)\left(n/2+2n(N_1+1)\right),n(d-1)+n^{d-1}(N_2+1)\right\}$ and depth $2\log_{2}^n+2(d-1)$.
}

 {
Since each $\Bar{T}_{j}$ is supported in $I_\delta$. Hence, following the construction of $\Bar{S}^{(1)}$, we have $\text{supp}(\Bar{S}^{(1)})\subseteq I$. Finally, setting $N_1=n=N$ and $N_2=\lceil \frac{d+\alpha}{2}\log_2{N} \rceil$, we finish the proof.
}


\end{proof}

\begin{figure}[htb!]
\label{Figure_KPMNet}
\center{\includegraphics[width=\linewidth] {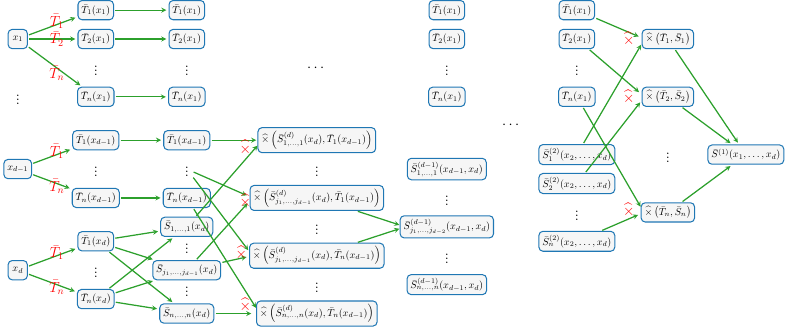}}
\caption{An illustration of constructing $\hat{f}$ by a ReLU network.}
\end{figure}

\section{Proof of \cref{Thm_Global}}\label{Sec_proof}

In this section, we prove \cref{Thm_Global}, which extends the approximation over hypercubes to the general convex polytope. In polynomial approximation theory, the fast-decreasing polynomial was well investigated \cite{TOTIK2012Nonsymmetric,Ivanov1990FastDP,Totik2015Chebyshev}, which tends to $1$ (or $\infty$) in some region and decreases fast to $0$ outside this region. The fast-decreasing polynomial plays an important role in Totik's work that extends the polynomial approximation from a hypercube to a polytope \cite{Totik2014PolynomialAO}. In this proof, we do not use fast-decreasing polynomials but adopt the top-level idea of \cite{Totik2014PolynomialAO}.

First, we give a simplified version of Lemma 4.1 in \cite{Totik2014PolynomialAO}, which provides an approach to adapt the approximation of two different regions to the union of two regions. Since we do not require the fast-decreasing function to be polynomial, our error bound is tighter.

\begin{lemma}\label{TotikTrick}
	Given any $U\subset\mathbb{R}^d$ and parallelepipeds $E,E^{\prime}$,$E\subset E^{\prime}$, let $\bar{f}_{1}$ and $\hat{f}$ be the approximations of $f$ on $U$ and $E^{\prime}$, respectively, which satisfies 
	\begin{equation*}
		\left\|\bar{f}_{1}-f\right\|_{L^{\infty}\left(U\right)}\leq e,\;\left\|\hat{f}-f\right\|_{L^{\infty}\left(E\right)}\leq e.
	\end{equation*}
		Let $\phi$ be the fast-decreasing function with respect to $E$ and $E^{\prime}$, \textit{i.e.}, 
		\begin{equation}\phi(\boldsymbol{x})
			\begin{cases}
				=1&x\in E,\\
				=0&x\notin E^{\prime},\\
				\in [0,1]&x\in E^{\prime}\setminus E.
			\end{cases}
            \label{fast-decreasing}
		\end{equation}
		Then
		\begin{equation*}
			\bar{f}_{2}=\phi \hat{f}+(1-\phi)\bar{f}_1
		\end{equation*}
		approximates $f$ with the error of
		\begin{equation}\label{TotikEq}
			\left\|\bar{f}_{2}-f\right\|_{L^{\infty}\left((U\setminus E^{\prime})\cup E\right)}\leq e.
		\end{equation}
	\end{lemma}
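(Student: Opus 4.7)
The proof reduces to a clean case analysis on the set $(U\setminus E')\cup E$, exploiting the fact that the fast-decreasing function $\phi$ is constant (either $0$ or $1$) on each of the two pieces, so the convex combination degenerates into a single approximator whose error is already controlled by hypothesis.

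First I would partition the set $(U\setminus E')\cup E$ into the two regions $E$ and $U\setminus E'$ (they may overlap if $E\cap U\neq\emptyset$, but that causes no issue since the estimate below holds uniformly on both). On $E$, the defining property \eqref{fast-decreasing} gives $\phi(\boldsymbol{x})=1$, so
\begin{equation*}
\bar{f}_2(\boldsymbol{x})-f(\boldsymbol{x})=\hat{f}(\boldsymbol{x})-f(\boldsymbol{x}),
\end{equation*}
and the assumption $\|\hat{f}-f\|_{L^\infty(E)}\leq e$ yields $|\bar{f}_2(\boldsymbol{x})-f(\boldsymbol{x})|\leq e$. On $U\setminus E'$, the same property gives $\phi(\boldsymbol{x})=0$, so
\begin{equation*}
\bar{f}_2(\boldsymbol{x})-f(\boldsymbol{x})=\bar{f}_1(\boldsymbol{x})-f(\boldsymbol{x}),
\end{equation*}
and since $U\setminus E'\subseteq U$, the hypothesis $\|\bar{f}_1-f\|_{L^\infty(U)}\leq e$ gives $|\bar{f}_2(\boldsymbol{x})-f(\boldsymbol{x})|\leq e$. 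Taking the supremum over the union of the two pieces yields \eqref{TotikEq}.

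There is essentially no obstacle here; the proof is just two evaluations of $\phi$ at its extremal values. The only point worth emphasizing is \emph{why} the estimate is stated on $(U\setminus E')\cup E$ rather than on the full $U\cup E'$: on the transition region $E'\setminus E$ the function $\phi$ takes intermediate values, so $\bar{f}_2-f=\phi(\hat{f}-f)+(1-\phi)(\bar{f}_1-f)$ is a genuine convex combination that would require an approximation bound for $\bar{f}_1$ on all of $E'\setminus E$, which is not assumed. By removing $E'\setminus E$ from the estimate (except for the part already covered by $E$), we sidestep the need for any control of $\bar{f}_1$ inside $E'$, which is precisely what makes this lemma a flexible gluing tool for combining the local approximators constructed in \cref{MainThmCube} across a parallelepiped covering of a polytope.
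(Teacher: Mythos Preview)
Your proof is correct and essentially identical to the paper's own argument: the paper also splits into the two cases $\boldsymbol{x}\in E$ (where $\phi=1$ and $\bar{f}_2-f=\hat{f}-f$) and $\boldsymbol{x}\in U\setminus E'$ (where $\phi=0$ and $\bar{f}_2-f=\bar{f}_1-f$), then combines the two estimates. Your additional remark explaining why the transition region $E'\setminus E$ is excluded is not in the paper's proof but is a helpful clarification.
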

	
	\begin{proof}
		

  		On $E$
		\begin{equation}\label{Totik_eq2}
			\begin{split}
				\left|\bar{f}_{2}(\boldsymbol{x})-f(\boldsymbol{x})\right|&=\left|f(\boldsymbol{x})-\phi(\boldsymbol{x})\hat{f}(\boldsymbol{x})-\left(1-\phi(\boldsymbol{x})\right)\bar{f}_{1}(\boldsymbol{x})\right|\\
				&= \left|f(\boldsymbol{x})-\hat{f}(\boldsymbol{x})\right|.
			\end{split}
		\end{equation}

		
		On $U\setminus E^{\prime}$
		\begin{equation}\label{Totik_eq3}
			\begin{split}
				\left|\bar{f}_{2}(\boldsymbol{x})-f(\boldsymbol{x})\right|&=\left|f(\boldsymbol{x})-\phi(\boldsymbol{x})\hat{f}(\boldsymbol{x})-\left(1-\phi(\boldsymbol{x})\right)\bar{f}_{1}(\boldsymbol{x})\right|\\
				&= \left|f(\boldsymbol{x})-\bar{f}_{1}(\boldsymbol{x})\right|.
			\end{split}
		\end{equation}
		
		We conclude the proof after combining \eqref{Totik_eq2} and \eqref{Totik_eq3}.
	\end{proof}

Next, we construct ReLU networks to implement a fast-decreasing function concerning two parallelepipeds.

\begin{lemma}\label{FastDecreasing}
    Let $E\subset\mathbb{R}^d$ be a parallelepiped. Then for any $0<\lambda<1$, there is a fast-decreasing function $\phi$ with respect to $E$ and $E^{\lambda}$ implemented by a ReLU network with depth $d$ and width $4d$.
\end{lemma}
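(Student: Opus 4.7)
The plan is to reduce the problem to the standard cube and then build $\phi$ as the minimum of coordinate-wise trapezoidal cutoffs. Since $E$ is a parallelepiped with center $c$ and edge vectors $v_1,\dots,v_d$, the affine map $\Phi(x)=A(x-c)$ defined by $Av_i=e_i$ sends $E$ onto $[-1,1]^d$ and $E^\lambda$ onto $[-\lambda,\lambda]^d$. This change of coordinates can be absorbed into the input affine transformation $\mathcal{A}_0$ of the ReLU network at no cost in depth or width, so without loss of generality we take $E=[-1,1]^d$ and $E^\lambda=[-\lambda,\lambda]^d$.

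Next, for each coordinate direction we define the trapezoid
\begin{equation*}
  \psi(t)=\frac{1}{1-\lambda}\bigl[\sigma(t+1)-\sigma(t+\lambda)-\sigma(t-\lambda)+\sigma(t-1)\bigr].
\end{equation*}
A direct case analysis on the five regions separated by $\pm 1$ and $\pm\lambda$ shows that $\psi(t)=1$ on $[-\lambda,\lambda]$, $\psi(t)=0$ outside $[-1,1]$, and $\psi(t)\in[0,1]$ everywhere. We then set
\begin{equation*}
  \phi(x)=\min_{1\le i\le d}\psi(x_i).
\end{equation*}
On $E^\lambda$ every $\psi(x_i)=1$ and hence $\phi=1$; outside $E$ some $|x_i|\ge 1$ forces $\psi(x_i)=0$ and hence $\phi=0$; elsewhere $\phi(x)\in[0,1]$. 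This is exactly the fast-decreasing function required by \eqref{fast-decreasing}, with the smaller and larger sets being $E^\lambda\subseteq E$.

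To realize $\phi$ as a ReLU network, the first hidden layer computes $\sigma(x_i\pm\lambda)$ and $\sigma(x_i\pm 1)$ for every $i$ in parallel, using $4d$ neurons, so that each $\psi(x_i)$ is available as a linear combination of these neurons. A pairwise cascade then collapses the $d$ values into their minimum using the identity $\min(a,b)=a-\sigma(a-b)$: at stage $k$ we carry the running minimum $m_{k-1}$ together with the still-untouched $\psi(x_{k+1}),\dots,\psi(x_d)$, and we introduce the new unit $\sigma(m_{k-1}-\psi(x_{k+1}))$ along with $\sigma(\cdot)$ of the quantities we wish to preserve. Because every $\psi(x_i)$ and every $m_k$ lies in $[0,1]\subseteq[0,\infty)$, the identity $\sigma(u)=u$ on $u\ge 0$ lets these quantities propagate exactly through one additional hidden layer of width at most $d+1$. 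After $d-1$ such stages, $m_{d-1}=\phi$ is a linear combination of the last hidden layer, so the overall network has depth $d$ and width $\max(4d,d+1)=4d$.

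The only delicate point is bookkeeping in the cascade: we must verify that it fits inside the $4d$ width budget while accumulating the minimum in exactly $d-1$ layers. Both ingredients ($\psi$ is an elementary trapezoid and $\min(a,b)=a-\sigma(a-b)$) are routine; the observation that makes the count tight is the non-negativity of every intermediate quantity, which permits exact identity propagation through ReLU and thereby keeps the depth linear in $d$ instead of forcing a logarithmic min-tree that would inflate the width bookkeeping.
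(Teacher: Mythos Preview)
Your proof is correct and follows essentially the same strategy as the paper: reduce to the cube by an affine map absorbed into the first layer, build a width-$4$ trapezoid $\psi$ in each coordinate, and then combine the $d$ coordinate values through a $(d-1)$-step cascade. The only difference is the combiner: you take $\phi=\min_i\psi(x_i)$ via $\min(a,b)=a-\sigma(a-b)$, whereas the paper uses a bespoke bivariate map $\tilde\gamma$ with $\tilde\gamma(0,\cdot)=\tilde\gamma(\cdot,0)=0$ and $\tilde\gamma(1,1)=1$ applied recursively; both choices give depth $d$ and width $4d$, and your $\min$ version is arguably the cleaner of the two.
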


\begin{proof}
   Let $\mathcal{A}:\mathbb{R}^{d}\to \mathbb{R}^{d}$ be the affine transform mapping $E$ to $Q^d$. Note that there exists some $\delta>0$ such that $\mathcal{A}(E^{\lambda})=Q_\delta$. Let $\varphi$ be the fast-decreasing function with respect to $I$ and $I_\delta$ in dimension 1 such as Eq. \eqref{fast-decreasing}, which can be easily implemented by a single layer ReLU network with width 4. We use $\mathcal{A}_j \boldsymbol{x}$ to denote the $j$-th element of $\mathcal{A}\boldsymbol{x}$. Then $\phi$ is given by
\begin{equation} \phi(\boldsymbol{x})=\widetilde{\gamma}_{d}\left(\varphi(\mathcal{A}_{1}\boldsymbol{x}),\ldots,\varphi(\mathcal{A}_{d}\boldsymbol{x})\right),
\end{equation}
where $\widetilde{\gamma}_d$ is a $d$-dimensional function which we recursively define as
\begin{equation*}
\begin{split}
    \tilde{\gamma}_1(x_1)&=x_1,\\    \tilde{\gamma}_d(x_1,\ldots,x_d)&=\tilde{\gamma}\left(\tilde{\gamma}_{d-1}(x_1,\ldots,x_{d-1}),x_d\right),
\end{split}
\end{equation*}
and we need $\widetilde{\gamma}(x,y)$ to satisfy $\widetilde{\gamma}(0,y)=\widetilde{\gamma}(x,0)=0$ and $\widetilde{\gamma}(1,1)=1$.

Hence the most simple choice of $\widetilde{\gamma}$ is given by a single-layer ReLU network with width 3:
 \begin{equation}
     \widetilde{\gamma}(x,y)=2\sigma\left(\frac{x+y}{2}\right)-2\sigma\left(\frac{x}{2}\right)-2\sigma\left(\frac{y}{2}\right),
 \end{equation}
Then $\widetilde{\gamma}_d$ can be implemented by a ReLU network with depth $d-1$ and width $d+1$.
 The entire network to implement $\phi$ needs the first layer with width $4d$ to denote the affine transform, followed by a sub-network implementing $\tilde{\gamma}$. Hence, the entire ReLU network takes width $4d$ and depth $d$ to express $\phi(\boldsymbol{x})$.
 \end{proof}

Now, we prove \cref{Thm_Global} based on the above lemmas. The key idea is to replace the multiplication in \cref{TotikTrick} with a bivariate function $\widecheck{\times}(\cdot,\cdot)$ and use the fast-decreasing function in \cref{FastDecreasing}.

\begin{proof}[Proof of \cref{Thm_Global}]
Let $\phi_j$ be the fast-descreasing function with respect to $K_j$ and $K_j^{\lambda}$ for some small $\lambda$ and $j=1,\ldots,k$. We define $\bar{f}_j$ in a similar way as \cref{TotikTrick}:
    \begin{equation}\label{bar_f}
    \begin{split}
        \bar{f}_{1}&=\hat{f}_1,\\
        \bar{f}_j&=\phi_j \hat{f}_j+\left(1-\phi_j\right)\bar{f}_{j-1},j=2,\ldots,k.
    \end{split}
    \end{equation}
    Then from \cref{TotikTrick}, we immediately obtain
    \begin{equation*}
        \left\| f-\bar{f}_k \right\|_{L^{\infty}\left(\cup_{j=1}^{k}K_j\setminus \cup_{j=1}^{k}\left( K_j\setminus K_j^{\lambda} \right) \right)}\leq err.
    \end{equation*}

To approximate $\bar{f}_k$ with a neural network, we replace the multiplication in \eqref{bar_f} with a bivariate product function $\widecheck{\times}(\cdot,\cdot):[-M,M]\times [0,1]\to [-M,M]$ for some $M>0$, which can be implemented by a ReLU network with depth $2$ and width $4(N+1)$ for $N \in \mathbb{Z}^+$. 

Now, we define
    \begin{equation}\label{Def_fTilde}
    \tilde{f}_1=\hat{f}_1,\; \tilde{f}_j=\widecheck{\times}\left(\hat{f}_{j}, \phi\right)+\widecheck{\times}\left(\tilde{f}_{j-1},1-\phi \right),j=2,\ldots,k,
    \end{equation}
Then for any $\boldsymbol{x}\in \cup_{j=1}^{k}K_j\setminus \cup_{j=1}^{k}\left( K_j\setminus K_j^{\lambda} \right)$, we have 
\begin{equation}\label{fTildeErr}
\begin{split}
    \left| \widetilde{f}_{k}-f \right|\leq & \left| \widetilde{f}_{k}-\bar{f}_k \right|+ \left| f-\bar{f}_k \right|  \\
    \leq & \left| \widecheck{\times}\left( \widetilde{f}_{k-1},1-\phi_k \right)-\bar{f}_{k-1}(1-\phi_k) \right|+\left| \widecheck{\times}\left(\hat{f}_{k},\phi_k \right)-\hat{f}_k\phi_k  \right|+ err \\
    \leq & \left| \widecheck{\times}\left( \widetilde{f}_{k-1},1-\phi_k \right)- \widetilde{f}_{k-1}(1-\phi_k)\right|\\
    &+\left| (1-\phi_k)(\widetilde{f}_{k-1}-\bar{f}_{k-1}) \right|+\left| \widecheck{\times}\left(\hat{f}_{k},\phi_k \right)-\hat{f}_k\phi_k  \right|+ err.\\
    \leq & \left| \widetilde{f}_{k-1} \right|2^{-2N+1}+\left| \widetilde{f}_{k-1}-\bar{f}_{k-1} \right|+\left| \hat{f}_{k} \right|2^{-2N+1}+ err \\
    \leq \cdots &  \leq \cdot 2^{-2N+1}\sum_{j=1}^{k-1}\left(\left|\hat{f}_{j+1}\right|+\left|\tilde{f}_{j}\right|\right)+ err.
\end{split}
\end{equation}
where in the fourth inequality, we leverage 
\begin{equation}
\begin{split}
    \left|\widecheck{\times}(g,\phi)-g\phi\right|\leq 2|g|\cdot 2^{-2N},\\
    \left|\widecheck{\times}(g,1-\phi)-g(1-\phi)\right|\leq 2|g|\cdot 2^{-2N},
\end{split}
\end{equation}
for any function $g$ and $\phi\in [0,1]$, and the last inequality in \eqref{fTildeErr} follows after repeating the previous steps.

Next we estimate $\left|\hat{f}_{j}\right|$ and $\left|\tilde{f}_{j}\right|$. On one hand, for any $\boldsymbol{x}\in\mathbb{R}^d$, we have by assumption
\begin{equation}\label{fHat}
    \left|\hat{f}_{j}\right|\leq M.
\end{equation}
On the other hand, by the definition \eqref{Def_fTilde} of $\tilde{f}_{j},j\geq 2$, we have for any $\boldsymbol{x}\in\mathbb{R}^d$
\begin{equation}\label{fTilde}
\begin{split}
    \left| \tilde{f}_{j} \right|&\leq \left| \tilde{f}_{j-1} \right|+\left| \hat{f}_{j} \right|\leq \cdots\\
    &\leq \left| \hat{f}_j \right|+\left| \hat{f}_{j-1} \right|+\cdots \left| \hat{f}_1 \right|\\
    &\leq jM.
\end{split}
\end{equation}
Hence, combining \eqref{fTildeErr}, \eqref{fHat}, and \eqref{fTilde}, for any $\boldsymbol{x}\in\cup_{j=1}^{k}K_j\setminus \cup_{j=1}^{k}\left( K_j\setminus K_j^{\lambda} \right)$,
\begin{equation}
\begin{split}
    \left| \widetilde{f}_{k}-f \right|\leq &  2^{-2N+1}\sum_{j=1}^k (j+1)M+err\\
   \leq & (k+2)^2M2^{-2N}+err.
\end{split}
\end{equation}

Finally, we estimate the size of the network that implements $\tilde{f}_k$, which is illustrated in \cref{Figure_ApproxTotik}. First, $k$ sub-networks of depth $D$ and width $W$ are stacked in parallel for $\hat{f}_1,\hat{f}_2,\ldots,\hat{f}_k$, respectively. Simultaneously, $k-1$ sub-networks of depth $d$ and width $4d$ gives $\phi_2,\ldots,\phi_k$. Hence this part is of depth $\max\left\{W,d\right\}$ and width $kW+4(k-1)d$. Next, in each step we need two sub-networks to implement two $\widecheck{\times}$ and at most $2(k-2)$ identity mappings. Therefore, it needs width $8(N+1)+2(k-2)$ and depth $2$. Following $k-1$ such steps, $\tilde{f}_k$ is finally given by a ReLU network of width $\max\left\{8(N+1)+2(k-2),kW+4(k-1)d\right\}$ and depth $\max\{D,d\}+2(k-1)$.
\end{proof}

\begin{figure}[htb!]
\center{\includegraphics[width=\linewidth] {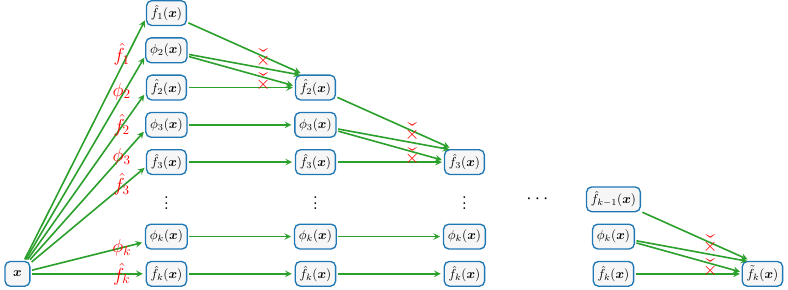}}
\caption{An illustration of constructing $\tilde{f}_k$ by ReLU networks.}
\label{Figure_ApproxTotik}
\end{figure}

\section{Extension to Other Function Classes}
 {
In this part, we extend our main result to other functions such as smooth and analytic functions. This shows that our construction is task-aware and can be highly adaptive to the target function, \textit{i.e.}, the network topology varies depending on the local regularity.
}
\begin{proof}[Proof of \cref{thm:MainThmAnalytic}]
 {
        It suffices to consider the case $K=Q=[-1,1]^d$, since the rest follows immediately from \cref{Thm_Global}. As shown in Theorem 4.2 in \cite{trefethen2017multivariate}, there exists a Chebyshev series $P_n$:
\begin{equation*}
P_n(\boldsymbol{x})=\sum_{j_1=0}^n\cdots\sum_{j_d=1}^na_{j_1,\ldots,j_d}T_{j_1}(x_1)\cdots T_{j_d}(x_d),
\end{equation*} 
such that
\begin{equation*}
    \left\|f-P_n\right\|_{L^{\infty}(Q)}\leq \mathcal{O}_{\epsilon}\left((h+\sqrt{1+h^2})^{-n/\sqrt{d}}\right)
\end{equation*}
and
\begin{equation*}
    a_{j_1,\ldots,j_d}=\mathcal{O}\left(\rho^{-\sqrt{j_1^2+\cdots+j_d^2}}\right)
\end{equation*}
as $j_1+\cdots+j_d\to\infty$, which is bounded. Hence, we can use the same strategy as in proving \cref{thm:MainThmContinuous}. That is, we have, followed from \cref{eq_cube_err_1,eq_cube_err_2}, 
\begin{equation*}
		\begin{split}
			\left\|f-\hat{f}\right\|_{L^\infty(Q_\delta)}
   &\leq \left\|f-P_n\right\|_{L^\infty(Q)}+\left\|P_n-\hat{f}\right\|_{L^\infty(Q_\delta)}\\
   &\leq \mathcal{O}_{\epsilon}\left((h+\sqrt{1+h^2})^{-n/\sqrt{d}}\right)+\mathcal{O}\left(n^d 2^{-2N^{\prime}}\right)+\mathcal{O}\left(n^{d+2}3^n 2^{-2N}\right),
		\end{split}
\end{equation*}
where $\hat{f}$ is given by a ReLU network of the same structure as in \cref{MainThmCube}, \textit{i.e.}, a ReLU network of width $\max\left\{n(d-1)\left(n/2+2n(N_1+1)\right),n(d-1)+n^{d-1}(N_2+1)\right\}$ and depth $2\log_{2}^n+2(d-1)$. Then the result follows immediately after setting $N_1=\frac{n}{\sqrt{d}}\log_{3/4}\rho$ and $N_2=\frac{n}{\sqrt{d}}\log_{4}\rho$.
}
\end{proof}

\section{Conclusions and future directions}\label{sec_discuss}

Our main result is a network construction that approximates an arbitrary piecewise continuous function by partitioning the input space into polytopes. In practice, a ReLU network indeed partitions the input space into polytopes, which means that from this angle, our construction is more realistic than universal approximation based on hypercubes. We do not claim that our construction is completely realistic. To bridge the gap, one future direction should be connecting the construction with learnability, \textit{i.e.}, is there any network construction that is learnable by gradient descent?

Notably, the techniques of the Jackson-type kernel can also be applied to other types of functions such as smooth functions and analytical functions that can be approximated by the Chebyshev series. Both polytopes and hypercubes can be addressed. We think the Jackson-type approximation can technically greatly enrich the deep learning approximation theory in terms of going beyond the Taylor expansion, which should facilitate more research opportunities.

Another interesting direction is the manifold assumption that real-world data sets often lie in low-dimensional structures. Accordingly, one can use the manifold assumption to update our main theorem with the intrinsic dimensionality, where the manifold is constructed by triangulation and applicable to our approximation scheme.

\section{Acknowledgement}\label{acknowled}

The authors are grateful for Dr. Huan Xiong (Harbin Institute of Technology) and Dr. Han Feng (City University of Hong Kong)’s suggestions in finishing this work.  This work was supported in part by the National Key R$\&$D Program of
 China under Grants 2021YFE0203700, NSFC/RGC N$\_$CUHK 415/19, ITF
 MHP/038/20, CRF 8730063, RGC 14300219, 14302920, and 14301121; in
 part by CUHK Direct Grant for Research.

\bibliographystyle{siamplain}
\bibliography{references}
\end{document}